\documentclass{article}

\PassOptionsToPackage{numbers, compress}{natbib}

\usepackage[preprint]{neurips_2026}

\usepackage[utf8]{inputenc} 
\usepackage[T1]{fontenc}    
\usepackage[colorlinks,citecolor=blue,urlcolor=blue,linkcolor=red]{hyperref}
\usepackage{url}            
\usepackage{booktabs}       
\usepackage{amsfonts}       
\usepackage{nicefrac}       
\usepackage{microtype}      
\usepackage[table]{xcolor}  
\usepackage{graphicx}
\usepackage{subcaption}
\usepackage{amsmath}
\usepackage{amssymb}
\usepackage{multirow}
\usepackage{amsthm}
\newtheorem{theorem}{Theorem}[section]
\theoremstyle{remark}
\newtheorem{remark}[theorem]{Remark}
\usepackage{algorithm}
\usepackage{algorithmic}
\usepackage{wrapfig}
\usepackage{tabularx}
\usepackage{array}
\usepackage{appendix}
\usepackage{placeins}

\newcommand{\method}{ForesightFlow}
\newcommand{\E}{\mathbb{E}}
\newcommand{\R}{\mathbb{R}}
\newcommand{\KL}{D_{\mathrm{KL}}}
\newcommand{\sg}{\operatorname{sg}}

\newcolumntype{C}{>{\centering\arraybackslash}X}
\definecolor{softgreen}{RGB}{164,210,201}
\definecolor{softred}{RGB}{244,127,114}
\definecolor{mydarkblue}{RGB}{0,76,153}

\makeatletter
\renewcommand{\@noticestring}{%
  \noindent\rule{12pc}{0.4pt}\par\vspace{2.6pt}%
  \noindent $^{1}$Beijing Institute of Technology $^{2}$Tsinghua University $^{3}$The Hong Kong University of Science and Technology (Guangzhou)  $^{*}$Equal contribution $^{\dagger}$Corresponding author: \texttt{gangwang@bit.edu.cn}.%
}
\makeatother

\title{Potential-Guided Flow Matching for Vision-Language-Action Policy Improvement}

\author{
{\normalfont Yunpeng Mei$^{1,*}$ \quad
Jiakai He$^{1,*}$ \quad
Hongjie Cao$^{1,*}$ \quad
Chenyu Wang$^{1,*}$} \\
Xiaowen Zhu$^{1}$ \quad
Yihan Zhou$^{2}$ \quad
Jiamin Wang$^{1}$ \quad
Chenbo Xin$^{1}$ \quad
Peng Cheng$^{1}$ \\
Yuxuan Yang$^{1}$ \quad
Yijie Wang$^{1}$ \quad
Xinhu Zheng$^{3}$ \quad
Gao Huang$^{2}$ \quad
Jie Chen$^{1}$ \quad
Gang Wang$^{1,\dagger}$
}

\begin{document}

\maketitle

\allowdisplaybreaks

\begin{abstract}
Large vision-language-action (VLA) policies are increasingly trained as conditional generative models over action chunks. Yet deployment produces mixed-quality experience-successful demonstrations, partial completions, recoverable mistakes, and failures-that is difficult to use with standard imitation. Full behavior cloning (BC) imitates failures, filtered BC discards useful sub-trajectories, and offline reinforcement learning adds a large critic. We introduce \textbf{\method}, a self-guided flow-matching policy that augments each generated action chunk with a learned success-potential trajectory. The same flow proposes and scores candidate actions, enabling best-of-$K$ inference without an external critic. The key issue is that policy improvement and value calibration require different supervision: advantage weighting should emphasize high-quality actions, but applying the same weights to potential coordinates suppresses failure gradients and creates overconfident scores. We address this with decoupled advantage-weighted flow matching, applying exponentiated advantage weights only to action velocities while training potential velocities uniformly. We further derive a one-step boundary estimator for conditional flow matching, allowing advantage computation with a single stop-gradient forward pass. Across five BEHAVIOR-1K simulation tasks and five real-world bimanual tasks, \method~improves over imitation baselines, matches the strongest separate-critic baseline in simulation success, improves real-world success, and reduces training compute by $38\%$. Ablations show that decoupling prevents value hallucination, the one-step estimator preserves candidate-ranking fidelity, and self-guided sampling improves long-horizon execution.

\end{abstract}

\section{Introduction}

Large vision-language-action (VLA) models can parse language, fuse high-dimensional observations, and generate temporally extended action chunks for manipulation. Flow-matching policies are especially attractive because they combine supervised-training stability with expressive continuous action distributions and flexible sampling~\citep{pi0,recap,pi0.5,feng2026multi}. Yet deployed robots collect mixed-quality experience: demonstrations, partial completions, recoverable mistakes, and failures. Full behavior cloning imitates failures, filtered behavior cloning discards useful sub-trajectories, and offline reinforcement learning (RL) usually requires a large separate critic. This paper asks a simple question: \emph{can the generative policy itself provide the foresight normally supplied by a critic?} 

We answer yes with \textbf{\method}, a self-guided flow policy that augments each generated action chunk $a$ with a temporally aligned success-potential vector $s$. The same flow therefore proposes candidate actions and scores them, enabling best-of-$K$ inference without a standalone critic.

The central challenge is that action learning and potential learning require different supervision. Action updates should emphasize high-advantage behavior, while potential updates must keep failure samples visible for calibration. Applying the same advantage weights to both parts masks corrective gradients on overconfident failures. \method~avoids this with a decoupled objective: action velocities receive clipped exponentiated advantage weights, while potential velocities are trained uniformly on the full mixed-quality dataset. Figure~\ref{fig:framework} summarizes the training and inference pipeline.

\begin{figure*}[t]
    \vspace{-0.5cm}
    \centering
    \includegraphics[width=\linewidth]{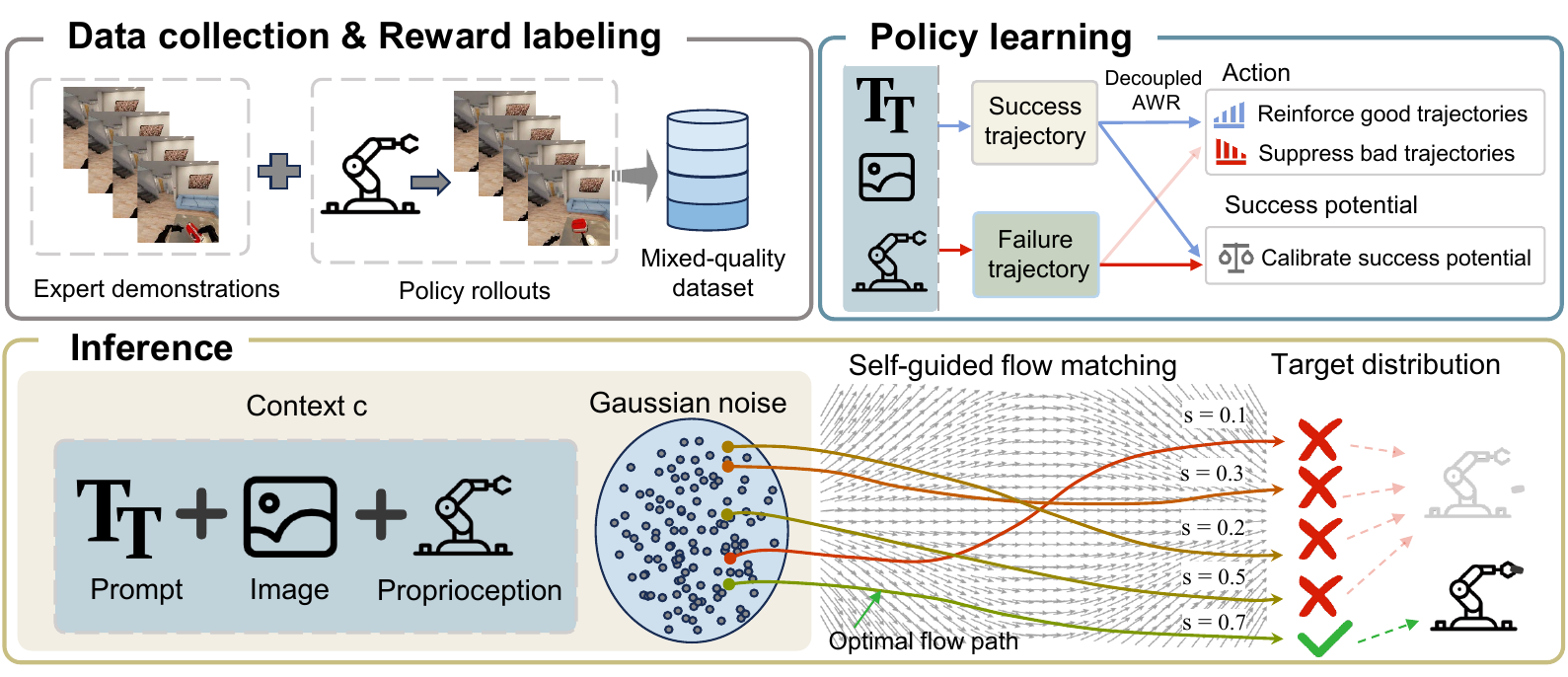}
    \caption{\textbf{\method~as a self-guided flow policy.} \method~augments each flow endpoint with action and success-potential coordinates. Decoupled training applies advantage weights only to action velocities while uniformly supervising potential velocities for calibration. At inference, generated potentials rank candidate action chunks without a separate critic.}
    \label{fig:framework}
    \vspace{-0.4cm}
\end{figure*}

A second challenge is efficiency. Instead of integrating the flow ODE for every training example, \method~uses a one-step baseline from a conditional-flow boundary identity under independent endpoint sampling. This justifies an action-independent context baseline for advantage-weighted regression (AWR); candidate ranking uses generated endpoint potentials, and we empirically validate the model's one-step ranking fidelity against high-NFE (number of function evaluations) references.

We evaluate \method~on five BEHAVIOR-1K simulation tasks and five real-world bimanual manipulation tasks. \method~achieves the best average simulation score, matches the strongest separate-critic baseline in average simulation success, improves real-world success, and reduces training compute by $38\%$. Ablations show that decoupling mitigates value hallucination, the single-step estimator preserves candidate rankings in our validation protocol, and self-guided sampling improves long-horizon performance.

\paragraph{Contributions.}
This paper makes three contributions.
\begin{enumerate}
    \item \textbf{A self-guided flow policy for VLA improvement.} We introduce \method, a conditional flow policy that augments action chunks with learned success potentials, enabling the same generative backbone to propose and score candidate actions.
    \item \textbf{Decoupled advantage-weighted flow matching.} We identify value hallucination as a failure mode of naively weighting all coordinates in a unified action-potential flow. We propose a decoupled objective that applies advantage weights only to action velocities while training potential velocities uniformly for calibration.
    \item \textbf{Efficient one-step foresight and best-of-$K$ inference.} We derive a boundary estimator for conditional flow matching, compute AWR baselines with one stop-gradient forward pass, and use generated potentials for best-of-$K$ inference.
\end{enumerate}

\section{Preliminaries}
\label{gen_inst}

\subsection{Sparse-Outcome Episodic Control}

We consider long-horizon manipulation as a finite-horizon partially observed Markov decision process
\begin{equation}
\mathcal M=(\mathcal S,\mathcal O,\mathcal A,P,R,\rho_0,T),
\end{equation}
where $z_t\in\mathcal S$ is the latent environment state, $o_t\in\mathcal O$ is the robot observation, $a_t\in\mathcal A\subset\R^{d_a}$ is the primitive action, $P$ is the transition kernel, $R$ is a sparse outcome signal, $\rho_0$ is the initial-state distribution, and $T$ is the horizon. The policy conditions on a multimodal context $c_t=(o_{\le t},\ell)$ consisting of observation history and a language instruction $\ell$.

Following recent generative robot policies, we use action chunking~\citep{act,diffusion_policy}. At decision time $t$, the policy predicts a flattened chunk
\begin{equation}
    a \equiv a_{t:t+H-1}\in\R^{H d_a},
\end{equation}
where $H$ is the chunk horizon. We use $t$ for environment time and $\sigma\in[0,1]$ for flow interpolation time. This distinction is important because \method~learns a continuous flow over action-potential endpoints, not over physical time.

\subsection{Conditional Flow Matching}

Conditional flow matching (CFM) learns a vector field that transports a simple prior $p_0=\mathcal N(0,I)$ to a context-conditioned data distribution $p_1(\cdot\mid c)$~\citep{flow_matching}. Let $x_0\sim p_0$ and $x_1\sim p_1(\cdot\mid c)$ denote independently sampled noise and data endpoints. The linear interpolation path is
\begin{equation}
    x_\sigma=(1-\sigma)x_0+\sigma x_1,\qquad u_\sigma=x_1-x_0.
\end{equation}
The learned velocity field $v_\theta$ is trained by squared regression to the conditional velocity as follows
\begin{equation}
    \mathcal L_{\rm CFM}(\theta)=
    \E_{c,x_0,x_1,\sigma}\!\left[\left\|v_\theta(x_\sigma,\sigma,c)-u_\sigma\right\|_2^2\right].
    \label{eq:cfm_loss}
\end{equation}
The population minimizer satisfies
\begin{equation}
    v^*(x_\sigma,\sigma,c)=\E[u_\sigma\mid x_\sigma,\sigma,c].
\end{equation}
This conditional-mean property is the basis of our one-step baseline estimator in Section~\ref{sec:method}; it also makes clear why the estimator should be interpreted as a population identity and empirically validated for finite networks.

\subsection{KL-Regularized Policy Improvement}

Offline policy improvement must be conservative because the dataset only covers a limited set of actions. A standard formulation maximizes value while penalizing deviation from the behavior distribution $\pi_\beta$:
\begin{equation}
    \max_\pi \, \E_{c\sim\mathcal D}\!\left[\E_{a\sim\pi(\cdot|c)} Q(c,a)
    -\tau\KL\!\left(\pi(\cdot|c)\,\|\,\pi_\beta(\cdot|c)\right)\right],
    \label{eq:kl_objective}
\end{equation}
where $Q(c,a)$ is an action-quality function and $\tau>0$ controls the strength of behavior regularization. The closed-form optimizer is the Boltzmann-reweighted behavior distribution
\begin{equation}
    \pi^*(a|c)=\frac{1}{Z(c)}\pi_\beta(a|c)\exp\!\left(\frac{Q(c,a)}{\tau}\right),
    \label{eq:optimal_policy}
\end{equation}
with normalization $Z(c)$. Advantage-weighted regression approximates this projection by upweighting high-advantage actions while retaining support near the data. \method~adapts this principle to flow matching, but applies it only to the action coordinates and not to the success-potential ones.

\section{Method: \method}
\label{sec:method}

\method~fine-tunes a VLA flow policy from mixed-quality data by combining four ingredients: i) an augmented action-potential endpoint, ii) stage-level success-potential targets, iii) decoupled advantage-weighted flow matching, and, iv) self-guided best-of-$K$ inference.

\subsection{Action-Potential Flow State}

A standard flow VLA policy models a conditional distribution over action chunks $a$. \method~extends the endpoint with a success-potential vector $s\in[0,1]^H$ aligned with the same horizon
\begin{equation}
    x = \begin{bmatrix} a \\ s \end{bmatrix} \in \R^{H(d_a+1)}.
\end{equation}
The potential coordinate is not a separate critic network. It is an additional generated coordinate of the same flow endpoint. Intuitively, each $s_k$ estimates the conditional likelihood that the corresponding local segment lies on a successful task trajectory under the dataset distribution. This gives every generated action chunk an intrinsic score. Unless otherwise stated, we aggregate the potential into a chunk-level score by temporal averaging~\citep{qchunk}
\begin{equation}
    \hat Q_\theta(c,a)=\frac{1}{H}\sum_{k=1}^H s_k,
    \label{eq:chunk_score}
\end{equation}
which is used as the endpoint candidate score during inference. During training, the boundary estimator below provides an action-independent context baseline $\hat V_{i,t}$ for AWR; this baseline is distinct from the generated endpoint score used for best-of-$K$ selection.

\subsection{Stage-Level Potential Targets}

Sparse binary episode outcomes are too coarse for long-horizon manipulation: a failed episode may still contain a good grasp, a correct approach, or a useful recovery. We therefore assign stage-level targets to each training chunk. For trajectory $i$ and time $t$, let
\begin{equation}
    y_{i,t}=\begin{cases}
        1, & \text{if the semantic stage assigned to chunk $(i,t)$ is completed,}\\
        0, & \text{otherwise.}
    \end{cases}
    \label{eq:stage_target}
\end{equation}
The endpoint target for the potential dimension is the broadcast vector $s_{1,i,t}=y_{i,t}\mathbf{1}_{H}\in\{0,1\}^{H}$ obtained by repeating this chunk-level stage label across the action horizon. This target construction keeps positive supervision for functional sub-trajectories even when the full rollout fails, while retaining negative supervision for calibration.

Given a current baseline estimate $\hat V_{i,t}$, we define the advantage
\begin{equation}
    A_{i,t}=y_{i,t}-\hat V_{i,t}.
    \label{eq:advantage}
\end{equation}
Successful chunks that the current model underestimates receive positive advantage and are emphasized in the action update. Failed chunks that the model overestimates receive negative advantage and are suppressed in the action update, but they remain essential for training the potential coordinate.

\subsection{One-Step Foresight for Advantage Estimation}

A naive baseline would require integrating the learned flow for every training context, which 
\begin{wrapfigure}[21]{r}{0.55\linewidth}
    \vspace{-1.0em}
    \centering
\includegraphics[width=\linewidth]{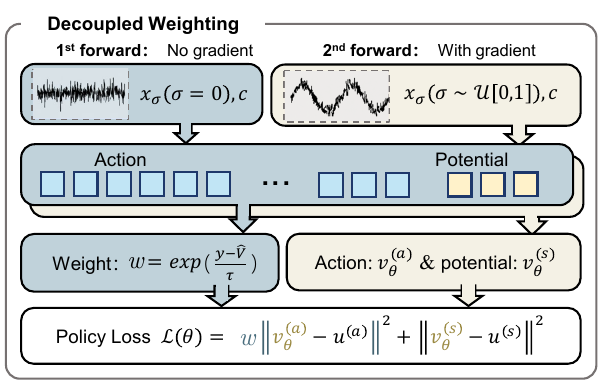}
    \caption{\textbf{Decoupled AWR in \method.} A stop-gradient boundary pass at $\sigma=0$ produces a one-step potential baseline and an advantage weight. A random-time CFM update then applies this weight only to the action velocity, while the potential velocity remains uniformly supervised on both successes and failures.}
\label{fig:decoupled_weighting}
\end{wrapfigure}
would add many NFEs per gradient step. \method~instead uses a boundary estimator derived from the CFM conditional-mean identity. For the linear path $x_\sigma=(1-\sigma)x_0+\sigma x_1$ with independent endpoints, the population-optimal velocity at $\sigma=0$ satisfies
\begin{equation}
    v^*(x_0,0,c)=\E[x_1\mid c]-x_0.
    \label{eq:boundary_identity_main}
\end{equation}
Thus a single Euler prediction from the noise endpoint estimates the conditional data mean
\begin{equation}
    \hat x_1 = x_0 + \sg\left(v_\theta(x_0,0,c)\right),
    \label{eq:single_step_estimator}
\end{equation}
where $\sg(\cdot)$ stops gradients through the baseline. We extract the success-potential component $\hat s_{i,t}$ from $\hat x_1$ and compute
\begin{equation}
    \hat V_{i,t}=\frac{1}{H}\sum_{k=1}^{H}\hat s_{i,t,k}.
    \label{eq:baseline}
\end{equation}
The estimator is exact at the population optimum under independent endpoint sampling. At that optimum, however, this boundary estimate is an action-independent context baseline rather than an action-conditioned candidate scorer. Candidate ranking at test time uses generated endpoint potentials after flow integration; when we use the one-step boundary proxy for ranking analysis, we treat it as an empirical property of the trained finite model rather than a consequence of the boundary identity. Figure~\ref{fig:decoupled_weighting} illustrates how the stop-gradient baseline and decoupled action-potential losses are combined in each update.

\subsection{Decoupled Advantage-Weighted Flow Matching}

The exponentiated advantage-weighted regression (AWR) weight is
\begin{equation}
w_{i,t}=\min\!\left(M,\exp\!\left(\frac{A_{i,t}}{\tau}\right)\right),
    \label{eq:awr_weight}
\end{equation}
where $\tau$ is the temperature and $M$ clips extreme weights. The scalar weight $w_{i,t}$ is broadcast across the action coordinates of the chunk; the potential velocity loss remains uniformly weighted over its full horizon vector. Let $v_\theta=(v_\theta^{(a)},v_\theta^{(s)})$ and $u=(u^{(a)},u^{(s)})$ denote the action and potential components of the predicted and target velocities. \method~optimizes
\begin{equation}
  \label{eq:decoupled_loss}
    \mathcal{L}(\theta)=\E\!\left[
    \underbrace{w_{i,t}\left\|v_\theta^{(a)}(x_\sigma,\sigma,c)-u^{(a)}_{i,t}\right\|_2^2}_{\text{advantage-weighted policy improvement}}
    +
    \underbrace{\left\|v_\theta^{(s)}(x_\sigma,\sigma,c)-u^{(s)}_{i,t}\right\|_2^2}_{\text{uniform potential calibration}}
    \right].
\end{equation}
The expectation is over $(i,t)\sim\mathcal D$, $x_0\sim p_0$, and $\sigma\sim\mathcal U[0,1]$.

This decoupling is the main difference between a unified action-potential flow and a naive joint actor-critic flow. The action coordinates should be selective: low-advantage actions should have little influence on the policy. The potential coordinates should be corrective: low-quality and failed samples must remain visible so that the model learns not to overestimate them. Applying $w_{i,t}$ to both parts removes corrective gradients on exactly the overconfident failures that matter most, leading to value hallucination. The decoupled objective retains the conservative AWR reweighting bias for actions while keeping the potential coordinate calibrated.

\subsection{Self-Guided Inference}

At test time, \method~uses the generated potential to guide its own samples. Given context $c_t$, we draw $K$ independent noise samples, integrate the learned flow, and obtain candidate endpoints $x^{(k)}=[a^{(k)};s^{(k)}]$. We then execute the candidate with the largest potential score
\begin{equation}
    k^*=\arg\max_{k\in\{1,\ldots,K\}} \frac{1}{H}\sum_{j=1}^{H}s^{(k)}_j,
    \qquad
    a_t^*=a_t^{(k^*)}.
    \label{eq:self_guided_inference}
\end{equation}
This best-of-$K$ rule is intentionally simple. It does not change the training distribution, does not require a separate critic forward pass, and uses the same action-potential coupling learned by the flow.

\section{Experiments}
\label{sec:experiments}

We evaluate whether \method~can turn mixed-quality experience into better VLA policies without the cost of a separate critic. The experiments are organized around four questions. \textbf{Q1:} Does \method~outperform full BC, filtered BC, and flow/offline-RL baselines on mixed-quality data? \textbf{Q2:} Does decoupled weighting prevent value hallucination? \textbf{Q3:} Does one-step foresight preserve the useful ranking information of higher-NFE integration? \textbf{Q4:} Does self-guided best-of-$K$ inference improve long-horizon execution?

\paragraph{Simulation tasks.}
We evaluate on five BEHAVIOR-1K tasks~\citep{behavior1k} in OmniGibson using the Galaxea R1 Pro humanoid embodiment, following VLA evaluation protocols~\citep{behavior1k, wbvima}. The tasks--Turning on radio, Picking up trash, Spraying fruit trees, Cook hot dogs, and Wash a baseball cap--span bimanual coordination, articulated-object interaction, mobile manipulation, and long-horizon sequencing. Episodes last $200$-$500$ seconds and terminate when the goal is satisfied or the horizon is exceeded. We report success rate over $100$ trials per task and a normalized Q-Score capturing stage-wise progress.

\paragraph{Real-world robot tasks.}
We further evaluate on the DexTeleop TeleAvatar Lite, a bimanual platform with dual $7$-DoF arms, custom parallel grippers, a head-mounted binocular perception module, and wrist RGB-D cameras. The controller runs at $20$ Hz. The five real-world tabletop tasks--Set-Paper-Roll, Pick-Trash, Cube-Stack, Transfer-Food, and Wipe-Whiteboard, cover precision placement, contact-rich manipulation, and multi-stage bimanual coordination with durations from $19$ to $138$ seconds. Figure~\ref{fig:task} summarizes the task suite; full scoring rules are in Appendix~\ref{app:task_definition}.

\begin{figure*}[t]
\includegraphics[width=\linewidth]{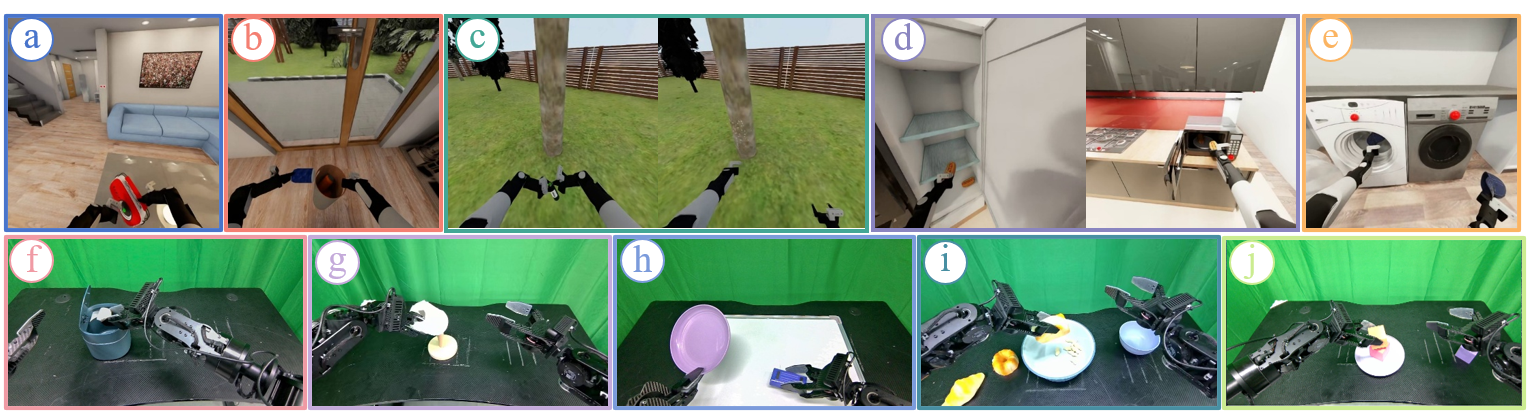}
    \vspace{-0.1em}
    \caption{\textbf{Evaluation suite.} \method~is evaluated in both simulation and the real world: (a) Turning on radio, (b) Picking up trash, (c) Spraying fruit trees, (d) Cook hot dogs, (e) Wash a baseball cap, (f) Set-Paper-Roll, (g) Pick-Trash, (h) Cube-Stack, (i) Transfer-Food, and (j) Wipe-Whiteboard. The tasks require long-horizon sequencing, bimanual coordination, contact-rich manipulation, and precision placement.}
    \label{fig:task}
    \vspace{-0.3cm}
\end{figure*}

\paragraph{Datasets.}
For each task, we construct a mixed-quality dataset with $200$ expert demonstrations and $100$ autonomous rollouts. Simulation expert data come from BEHAVIOR-1K demonstrations, while real-world expert trajectories are collected through VR teleoperation. Autonomous rollouts are generated by a pretrained $\pi_{0.5}$ checkpoint~\citep{pi0.5}. All trajectories are annotated with sparse stage-wise labels used to construct the binary targets in \eqref{eq:stage_target}. Details on dataset composition and frame-level success and failure statistics are provided in Appendix~\ref{dataset_discription}.

\subsection{Baselines}

We use matched backbones within each domain. In simulation, all methods fine-tune the $\pi_{0.5}$ architecture from a checkpoint pretrained on $50$ BEHAVIOR-1K tasks~\citep{solution}. In the real-world experiments, all methods use the $\pi_0$ architecture~\citep{pi0}. We compare against four baselines. \textbf{BC} trains the flow policy on the full mixed-quality dataset, measuring the cost of imitating failures. \textbf{Filtered BC} trains only on successful trajectories, measuring the cost of discarding partial but useful behavior. \textbf{IDQL} is a representative separate-critic offline RL baseline adapted to flow policies~\citep{idql}. \textbf{FQL} is a recent flow-aware offline RL method that distills a multi-step flow policy into a one-step student~\citep{fql}. These baselines isolate whether \method's benefit comes from using mixed data, from value-guided improvement, or from embedding the value signal directly in the flow.

\subsection{Main Results}
\paragraph{Simulation.}

\begin{table*}[t]
\vspace{1.5em}
\centering
\caption{\textbf{Simulation results on BEHAVIOR-1K.} We report normalized task score and success rate (SR, \%) on five long-horizon manipulation tasks. This table reports each method under its default deployment protocol: \method~uses its self-guided inference protocol with $K{=}5$ candidates; BC, Filtered BC, FQL, and IDQL use single-sample inference. Best entries are bolded.}\label{tab:sim_results}

\definecolor{softgreen}{RGB}{164, 210, 201}
\definecolor{softred}{RGB}{244, 127, 114}

\setlength{\tabcolsep}{3pt}
\footnotesize
\begin{tabularx}{\linewidth}{l|CCCCCCCCCC|CC}
\toprule

\multirow{2}{*}{{Method}} & 
\multicolumn{2}{c}{{Radio}} & 
\multicolumn{2}{c}{{Trash}} & 
\multicolumn{2}{c}{{Spray}} & 
\multicolumn{2}{c}{{Hotdog}} & 
\multicolumn{2}{c|}{{Wash}} & 
\multicolumn{2}{c}{{Average}} \\

\cmidrule(lr){2-3} 
\cmidrule(lr){4-5} 
\cmidrule(lr){6-7} 
\cmidrule(lr){8-9} 
\cmidrule(lr){10-11} 
\cmidrule(lr){12-13}

& Score  & SR & Score  & SR & Score  & SR & Score  & SR & Score  & SR & Score  & SR \\
\midrule

BC          & \cellcolor{softred!20}$0.42$ & \cellcolor{softred!20}$42.0$   & \cellcolor{softgreen!30}$0.46$ & \cellcolor{softgreen!35}$35.0$   & \cellcolor{softred!45}$0.02$ & \cellcolor{softred!45}$0.0$   & \cellcolor{softgreen!60}$\textbf{0.68}$ & \cellcolor{softgreen!60}$65.0$   & \cellcolor{softred!35}$0.19$ & \cellcolor{softred!40}$13.0$   & \cellcolor{softred!15}$0.35$ & \cellcolor{softred!10}$31.0$ \\


Filtered BC & \cellcolor{softgreen!15}$0.44$ & \cellcolor{softgreen!15}$44.0 $  & \cellcolor{softgreen!15}$0.43$ & \cellcolor{softgreen!15}$30.0$   & \cellcolor{softgreen!15}$0.18$ & \cellcolor{softgreen!15}$14.0$   & \cellcolor{softgreen!15}$0.51$ & \cellcolor{softgreen!15}$46.0$   & \cellcolor{softgreen!15}$0.33$ & \cellcolor{softgreen!15}$24.0 $  & \cellcolor{softgreen!15}$0.38$ & \cellcolor{softgreen!15}$31.6$ \\

FQL         & \cellcolor{softred!10}$0.42$ & \cellcolor{softred!10}$42.0$   & \cellcolor{softgreen!30}$0.47$ & \cellcolor{softgreen!20}$32.0$   & \cellcolor{softgreen!15}$0.19$ & \cellcolor{softgreen!15}$18.0$   & \cellcolor{softgreen!40}$0.62$ & \cellcolor{softgreen!40}$59.0$   & \cellcolor{softgreen!10}$0.24$ & \cellcolor{softgreen!15}$21.0$   & \cellcolor{softgreen!15}$0.39$ & \cellcolor{softgreen!20}$34.4$ \\

IDQL        & \cellcolor{softgreen!30}$0.47$ & \cellcolor{softgreen!30}$47.0$   & \cellcolor{softgreen!40}$0.48$ & \cellcolor{softgreen!40}$36.0$   & \cellcolor{softgreen!30}$0.23$ & \cellcolor{softgreen!35}$\textbf{21.0}$   & \cellcolor{softgreen!60}$\textbf{0.68}$ & \cellcolor{softgreen!60}$\bf 66.0$   & \cellcolor{softgreen!20}$\textbf{0.34}$ & \cellcolor{softgreen!20}$25.0$   & \cellcolor{softgreen!40}$0.44$ & \cellcolor{softgreen!50}$39.0$ \\

{\method~(ours)} & \cellcolor{softgreen!50}$\textbf{0.51}$ & \cellcolor{softgreen!50}$\textbf{51.0}$ & \cellcolor{softgreen!70}$\textbf{0.57}$ & \cellcolor{softgreen!50}$\textbf{38.0}$ & \cellcolor{softgreen!50}$\textbf{0.28}$ & \cellcolor{softgreen!30}$20.0$ & \cellcolor{softgreen!45}$0.62$ & \cellcolor{softgreen!45}$59.0$ & \cellcolor{softgreen!20}$\textbf{0.34}$ & \cellcolor{softgreen!40}$\textbf{30.0}$ & \cellcolor{softgreen!50}$\textbf{0.46}$ & \cellcolor{softgreen!55}$\textbf{39.6}$ \\
\bottomrule
\end{tabularx}
\end{table*}

Table~\ref{tab:sim_results} shows that the mixed-quality setting exposes the weaknesses of standard imitation. Full BC preserves data coverage but can imitate failures, collapsing on Spray ($0\%$ SR). Filtered BC improves some precision behaviors but loses useful sub-trajectories, for example underperforming full BC on Cook Hot Dogs. \method~achieves the best average score ($0.46$) and matches the separate-critic IDQL baseline in average success rate ($39.6\%$ vs $39.0\%$), while outperforming the baselines on Radio, Trash, and Wash. To isolate candidate budget, Table~\ref{tab:mc_ranking} compares IDQL and \method~at both $K{=}1$ and $K{=}5$: \method~is weaker than IDQL in average SR at $K{=}1$ ($34.6\%$ vs $39.0\%$), but remains ahead at $K{=}5$ ($39.6\%$ vs $38.4\%$). Relative to IDQL, the main advantage is not only performance but simplicity: the same flow that samples actions also supplies the score used for candidate selection.
FQL is a strong flow-RL reference, but in high-dimensional $\pi_{0.5}$ action-chunk setting ($23$ DoF and $H{=}30$), one-step distillation does not surpass IDQL or \method.

\begin{table*}[t]
\centering
\caption{\textbf{Real-world results.} We report normalized task score and full-horizon success rate (SR, \%) across five bimanual manipulation tasks. \method~achieves the best average score and success rate while avoiding a separate critic network.}
\label{tab:real_results}

\definecolor{softgreen}{RGB}{164, 210, 201}
\definecolor{softred}{RGB}{244, 127, 114}

\setlength{\tabcolsep}{3pt}
\footnotesize
\begin{tabularx}{\linewidth}{l|CCCCCCCCCC|CC}
\toprule
\multirow{2}{*}{{Model}} & 
\multicolumn{2}{c}{{Set-Paper-Roll}} & 
\multicolumn{2}{c}{{Pick-Trash}} & 
\multicolumn{2}{c}{{Cube-Stack}} & 
\multicolumn{2}{c}{{Transfer-Food}} & 
\multicolumn{2}{c|}{{Whiteboard}} & 
\multicolumn{2}{c}{{Average}} \\
\cmidrule(lr){2-3} 
\cmidrule(lr){4-5} 
\cmidrule(lr){6-7} 
\cmidrule(lr){8-9} 
\cmidrule(lr){10-11} 
\cmidrule(lr){12-13}
& Score & SR & Score & SR & Score & SR & Score & SR &Score & SR & Score & SR \\
\midrule

Filtered BC          & \cellcolor{softgreen!15}$0.52$  & \cellcolor{softgreen!15}$23.0$   & \cellcolor{softgreen!15}$0.36$  & \cellcolor{softgreen!15}$38.0$   & \cellcolor{softgreen!15}$0.62$  & \cellcolor{softgreen!15}$19.0$   & \cellcolor{softgreen!15}$0.57 $ & \cellcolor{softgreen!15}$29.0$   & \cellcolor{softgreen!15}$0.47$ & \cellcolor{softgreen!15}$14.0 $  & \cellcolor{softgreen!15}$0.51$ & \cellcolor{softgreen!15}$24.6$  \\

IDQL                 & \cellcolor{softgreen!35}$0.59$  & \cellcolor{softgreen!40}$33.0$   & \cellcolor{softgreen!40}$\textbf{0.48}$  & \cellcolor{softgreen!35}$46.0 $  & \cellcolor{softgreen!25}$0.64$  & \cellcolor{softgreen!25}$23.0$   & \cellcolor{softgreen!40}$0.66$  & \cellcolor{softgreen!45}41.0   & \cellcolor{softgreen!40}0.57 & \cellcolor{softgreen!35}\textbf{20.0}   & \cellcolor{softgreen!40}0.59 & \cellcolor{softgreen!40}32.6  \\

{\method~(ours)}  & \cellcolor{softgreen!50}$\textbf{0.62}$  & \cellcolor{softgreen!50}$\textbf{36.0}$   & \cellcolor{softgreen!40}$\textbf{0.48}$  & \cellcolor{softgreen!60}$\textbf{51.0}$   & \cellcolor{softgreen!60}$\textbf{0.72}$  & \cellcolor{softgreen!40}$\textbf{26.0} $  & \cellcolor{softgreen!60}$\textbf{0.69}$  & \cellcolor{softgreen!70}$\textbf{45.0}$   & \cellcolor{softgreen!50}$\textbf{0.59}$ & \cellcolor{softgreen!30}$19.0$   & \cellcolor{softgreen!55}$\textbf{0.62}$ & \cellcolor{softgreen!55}$\textbf{35.4}$  \\ \bottomrule
\end{tabularx}
\end{table*}

\paragraph{Real-world deployment.}
Table~\ref{tab:real_results} shows that \method~also improves real-world manipulation. It reaches the best average score ($0.62$) and success rate ($35.4\%$), outperforming Filtered BC by $10.8$ percentage points and IDQL by $2.8$ percentage points in average SR. The gains are largest on tasks where partial progress matters: Cube-Stack benefits from calibrated ranking of precise placements, and Transfer-Food benefits from selecting action chunks that preserve multi-stage progress. Figure~\ref{fig:real_score} provides the stage-wise breakdown.

\begin{figure}[t]
    \centering
    \includegraphics[width=\linewidth]{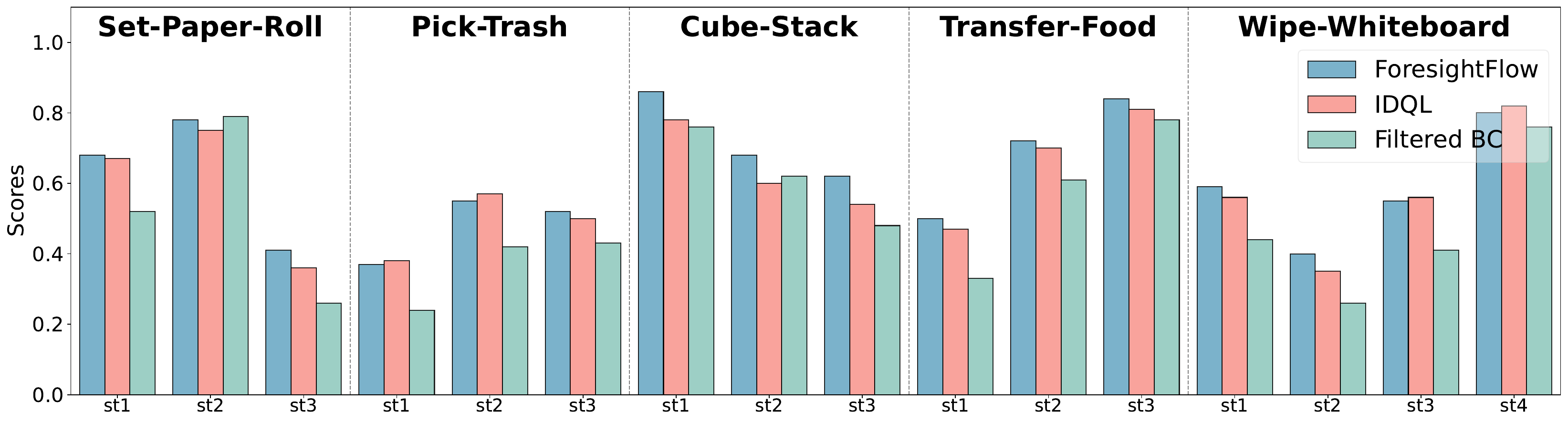}
    \caption{\textbf{Stage-wise real-world progress.} Normalized completion score for each stage across five bimanual tasks. \method~improves most on stages where failures are recoverable and partial trajectories provide useful supervision.}
    \label{fig:real_score}
\end{figure}

\begin{wraptable}[11]{r}{0.55\linewidth}
    \vspace{-0.9em}
    \centering
    \caption{\textbf{Computational efficiency.} \method~matches IDQL-level performance while reducing training compute and removing the separate critic network. ``Critic params'' counts only additional value-related parameters.}
    \label{tab:efficiency}
    \vspace{0.0cm}
    \footnotesize
    \setlength{\tabcolsep}{1.8pt}
    \renewcommand{\arraystretch}{0.90}
    \resizebox{\linewidth}{!}{%
    \begin{tabular}{@{}l|c|cc|cc@{}}
    \toprule
    & \multicolumn{1}{c|}{Training} & \multicolumn{2}{c|}{Critic} & \multicolumn{2}{c}{Latency} \\
    Method & GPU hrs & Params & Total & $K{=}1$ & $K{=}5$ \\
    \midrule
    IDQL & $287$ & $\sim$500\,M & $\sim$2.84\,B & \textbf{119\,ms} & 183\,ms \\
    \method~(ours) & \textbf{$178$} & \textbf{$\sim$1\,K} & \textbf{2.35\,B} & 127\,ms & \textbf{155\,ms} \\
    \bottomrule
    \end{tabular}%
    }
    \vspace{-0.8em}
\end{wraptable}

\paragraph{Compute and parameter efficiency.}
Table~\ref{tab:efficiency} compares \method~with IDQL on the simulation backbone. IDQL requires a sequential critic-pretraining stage followed by actor fine-tuning, totaling $287$ GPU hours. \method~uses a single joint fine-tuning stage and $178$ GPU hours, yielding a $38\%$ reduction. The parameter count differs sharply: \method~adds only a $\sim$1K-parameter progress projection head, while IDQL uses a $\sim$500M-parameter separate critic. At $K{=}5$, \method~is faster at inference because candidate scoring is produced jointly with action generation.

\subsection{Ablation Studies}
\label{sec:ablation}

\paragraph{Decoupled weighting prevents value hallucination.}

\begin{wraptable}[8]{r}{0.55\linewidth}
    \vspace{-0.9em}
    \centering
    \caption{Stage-wise task completion rates (\%) on Turning on Radio.}
    \label{tab:stage_ablation}
    \vspace{-0.0cm}
    \small
    \resizebox{\linewidth}{!}{%
    \begin{tabular}{@{}l|ccc@{}}
    \toprule
    {Model} & {Stage $1$} & {Stage $2$} & {Stage $3$} \\
    \midrule
    BC baseline                & $88.0$ & $59.0$ & $44.0$ \\
    \method-coupled            & $92.0$ & {$\bf 62.0$} & $42.0$ \\
    {\method-decoupled (ours)} & {$\bf 97.0$} & $60.0$ & {$\bf 51.0$} \\
    \bottomrule
    \end{tabular}%
    }
    \vspace{-0.8em}
\end{wraptable}

We analyze per-step potential predictions across $19{,}905$ frames from five mixed-quality Radio rollouts. Figure~\ref{fig:coupled} shows that the coupled baseline assigns high potentials to explicit failures such as knocking the radio over or slipping. In contrast, decoupled training lowers these predictions while preserving high scores for successful segments. The stage-wise evaluation in Table~\ref{tab:stage_ablation} shows the downstream effect: the coupled model improves early approach/grasp behavior but degrades at final actuation, whereas \method~maintains improvement through the final stage.

\paragraph{One-step foresight preserves useful rankings.}

\begin{wrapfigure}[18]{r}{0.55\linewidth}
    \vspace{-0.7em}
    \centering
    \includegraphics[width=\linewidth]{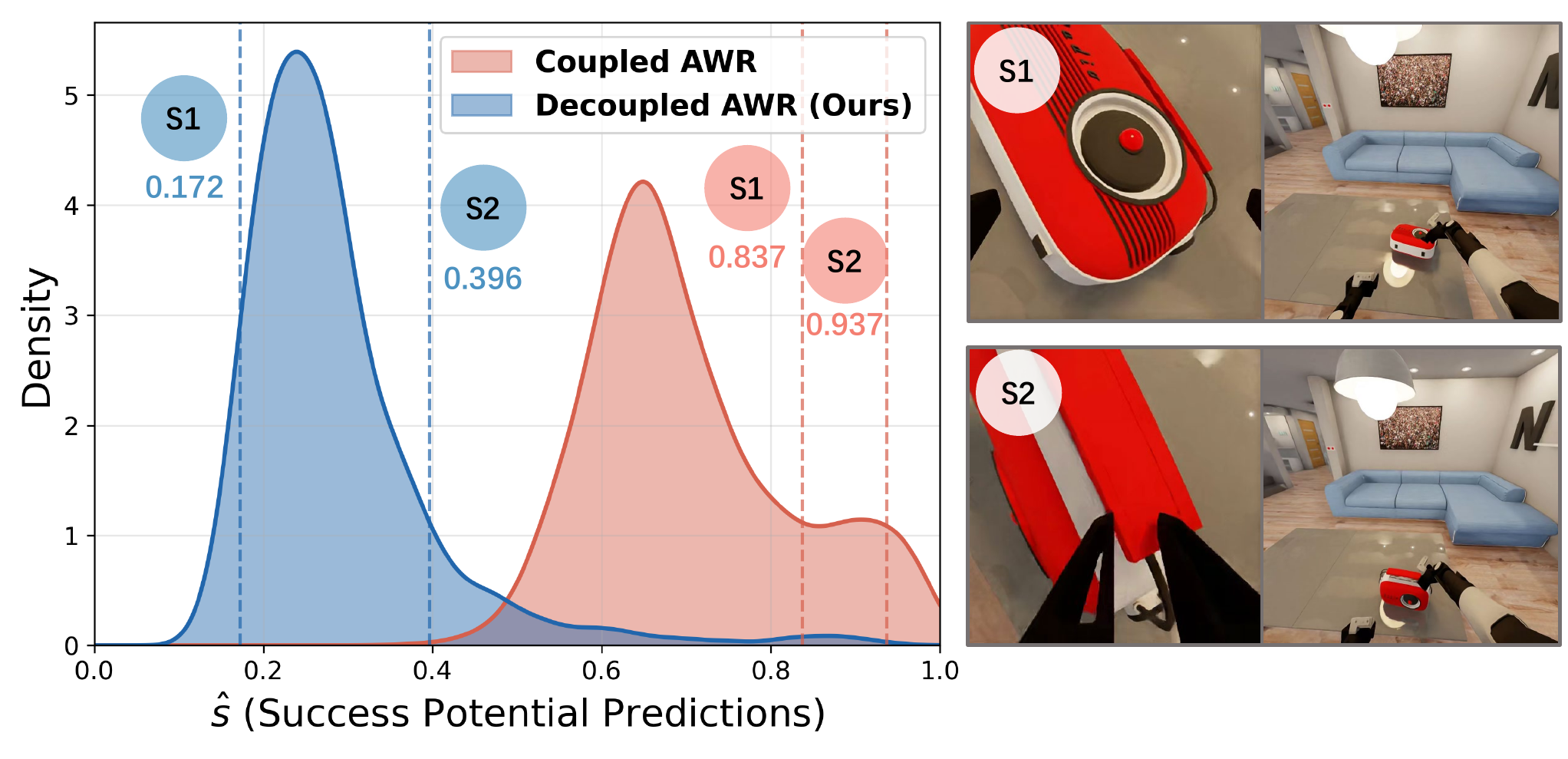}
    \caption{\textbf{Value calibration on mixed-quality rollouts.} Coupled AWR overestimates failures because the potential loss is down-weighted on negative samples. Decoupled AWR keeps failure supervision active and produces calibrated potentials.}
    \label{fig:coupled}
    \vspace{0.5cm}
\end{wrapfigure}

For training efficiency, we compare the one-step estimator (NFE$\,=\,$$1$) with multi-step ODE integration (NFE$\,=\,$$20$). NFE$\,=\,$$1$ requires $8.4$ hours per $5{,}000$ steps versus $11$ hours for NFE$\,=\,$$20$, and both reach comparable success by $15$k steps. For ranking fidelity, we use NFE$\,=\,$$100$ as a high-NFE reference. Under a held-out within-observation protocol with $K=8$ candidates, Kendall $\tau$ between NFE$\,=\,$$1$ and NFE$\,=\,$$100$ rankings ranges from $0.801$ to $0.863$ across tasks, with mean top-$1$ agreement of $87.2\%$ and mean MC gain ratio of $96.8\%$ (Table~\ref{tab:nfe_cross_task}). MC gain denotes the fraction of the NFE$\,=\,$$100$ best-of-$K$ improvement retained when candidates are ranked by the NFE$\,=\,$$1$ estimator.

\paragraph{Self-guided sampling improves long-horizon performance.}
Table~\ref{tab:mc_ranking} compares $K{=}1$ and $K{=}5$ candidate ranking. \method's average SR improves from $34.6\%$ to $39.6\%$, with the largest gains on multi-stage tasks such as Radio, Trash, and Hotdog. IDQL is comparatively flat under the same protocol. We do not claim this is a universal property of separate critics; rather, it suggests that in this setting, coupling candidate generation and potential prediction inside the same flow produces a useful ranking signal for best-of-$K$ inference.

\begin{table}[!h]
\centering
\caption{\textbf{Cross-task NFE$\,=\,$$1$ vs NFE$\,=\,$$100$ ranking fidelity.} For each held-out observation, $K=8$ candidates are scored with both estimators. The one-step estimator largely preserves the candidate ranking used by self-guided inference. The MC gain measures the fraction of the NFE$\,=\,$$100$ best-of-$K$ improvement retained by NFE$\,=\,$$1$ ranking, and $\kappa$ reports trajectory curvature in the success-potential subspace.}
\label{tab:nfe_cross_task}
\vspace{0.2cm}
\small
\setlength{\tabcolsep}{4pt}
 \begin{tabularx}{\linewidth}{@{}l*{4}{>{\centering\arraybackslash}X}c@{}}
\toprule
Task & Kendall $\tau$ & Top-$1$ agree & Pairwise acc & MC gain & $\kappa$ med, ~~ frac($<0.1$) \\
\midrule
Radio        & 0.863 & 93.8\% & 93.1\% & 98.5\% & 0.026, ~~99.6\% \\
Hot Dogs     & 0.801 & 84.4\% & 90.1\% & 96.7\% & 0.028, ~~98.0\% \\
Trash        & 0.835 & 82.8\% & 91.7\% & 95.3\% & 0.039, ~~90.2\% \\
Baseball Cap & 0.818 & 84.4\% & 90.9\% & 95.5\% & 0.030, ~~96.9\% \\
Fruit Trees  & 0.853 & 90.6\% & 92.6\% & 97.9\% & 0.027, ~~96.5\% \\
\midrule
Mean         & 0.834 & 87.2\% & 91.7\% & 96.8\% & 0.030,~ ~96.2\% \\
\bottomrule
\end{tabularx}
\end{table}

\begin{table*}[t]
\centering
\caption{\textbf{MC ranking ablation across five simulation tasks ($K{=}1$ vs $K{=}5$).} \method's success rate improves with MC ranking by $+5.0$ pp on average, while IDQL's is essentially flat. Score/SR (\%) reported per task. Boldface marks the best within each method group (IDQL $K{=}1$ vs $K{=}5$; \method~$K{=}1$ vs $K{=}5$).}
\label{tab:mc_ranking}
\setlength{\tabcolsep}{3pt}
\footnotesize
\begin{tabularx}{\linewidth}{@{}l|CCCCCCCCCC|CC@{}}
\toprule
\multirow{2}{*}{Method} &
\multicolumn{2}{c}{Radio} & \multicolumn{2}{c}{Trash} & \multicolumn{2}{c}{Spray} &
\multicolumn{2}{c}{Hotdog} & \multicolumn{2}{c|}{Wash} & \multicolumn{2}{c}{Average} \\
\cmidrule(lr){2-3} \cmidrule(lr){4-5} \cmidrule(lr){6-7} \cmidrule(lr){8-9} \cmidrule(lr){10-11} \cmidrule(lr){12-13}
& Score & SR & Score & SR & Score & SR & Score & SR & Score & SR & Score & SR \\
\midrule
IDQL ($K{=}1$) & 0.47 & 47.0 & \textbf{0.48} & \textbf{36.0} & \textbf{0.23} & \textbf{21.0} & \textbf{0.68} & \textbf{66.0} & 0.34 & 25.0 & \textbf{0.44} & \textbf{39.0} \\
IDQL ($K{=}5$) & \textbf{0.49} & \textbf{49.0} & 0.47 & 35.0 & 0.21 & 18.0 & 0.65 & 64.0 & \textbf{0.35} & \textbf{26.0} & 0.43 & 38.4 \\
\midrule
\method~($K{=}1$)  & 0.45 & 45.0 & 0.55 & 28.0 & \textbf{0.28} & \textbf{21.0} & 0.54 & 53.0 & 0.31 & 26.0 & 0.43 & 34.6 \\
\method~($K{=}5$)  & \textbf{0.51} & \textbf{51.0} & \textbf{0.57} & \textbf{38.0} & \textbf{0.28} & 20.0 & \textbf{0.62} & \textbf{59.0} & \textbf{0.34} & \textbf{30.0} & \textbf{0.46} & \textbf{39.6} \\
\bottomrule
\end{tabularx}
\end{table*}

\newpage
\section{Related Work}

\paragraph{VLA models and generative robot policies.}
Large VLA models~\citep{pi0,rt2,openvla,gr00t,rdt,vpp} combine broad visual-language pretraining with robot action prediction, enabling general-purpose manipulation across tasks and embodiments. Early systems often discretized actions~\citep{rt2,rt1}; more recent policies use continuous generative models, including diffusion policies~\citep{diffusion_policy} and flow-matching policies~\citep{pi0,flow_matching,rectified_flow}. These models provide expressive multimodal action distributions, but are still commonly improved through supervised imitation on curated demonstrations. \method~targets the next stage: improving such generative policies from the mixed-quality data produced during deployment.

\paragraph{Offline RL and advantage-weighted policy improvement.}
Offline RL methods regularize policy improvement to avoid extrapolating beyond the dataset, using behavior constraints, conservative value estimation, or advantage-weighted updates~\citep{bcq,td3bc,cql,awr,awac}. Recent VLA-oriented RL systems often attach explicit value networks or critics to large generative policies~\citep{recap,calql,rl_vla,rl100,pirl,vlarl,digirl}. These critics can be effective but introduce additional optimization and inference cost. Flow-aware methods such as FQL~\citep{fql} adapt RL to flow policies through distillation. In contrast, \method~keeps the multi-step flow generator and embeds the quality signal as a generated coordinate, making the policy self-guided without a separate critic.

\paragraph{Self-guided generation.}
Best-of-$K$ sampling and guidance are widely used to improve generative-model outputs when a score is available. In robotics, such guidance is usually supplied by an external value function, reward model, or hand-designed cost. \method~learns the score jointly with the action endpoint, which is especially natural for flow policies: each sample carries both a candidate action and a potential forecast. The resulting inference rule is simple but effective, and the ablations show that its usefulness depends on calibrated potential learning.

\section{Conclusion}

We introduced \method, a self-guided flow-matching framework that improves VLA policies from mixed-quality robot experience by jointly generating action chunks and success potentials. Decoupled advantage-weighted flow matching preserves corrective supervision for failures while selectively improving actions, and a one-step CFM boundary estimator keeps advantage computation comparable in cost to supervised flow matching. Across BEHAVIOR-1K and real-world bimanual tasks, \method~improves over imitation baselines, matches a separate-critic baseline, and reduces training compute. Limitations remain: the method requires stage labels, sparse outcomes can still make very long-horizon credit assignment difficult, and the one-step estimator is approximate in finite networks. More capable robot policies may also introduce safety risks if deployed without validation, failure detection, and human oversight. Future work will investigate scaling self-guided flows to larger multi-task datasets, automatically inferred progress signals, and online settings where robots can reuse their own mixed-quality experience.

{
\clearpage
\small
\bibliographystyle{unsrtnat}
\bibliography{example_paper}
}

\newpage
\appendix

\hypersetup{linkcolor=black, citecolor=mydarkblue, urlcolor=black}

\renewcommand{\appendixpagename}{\centering \LARGE Appendix}
\appendixpage

\begingroup
\normalsize
\newcommand{\appendixdotfill}{\leaders\hbox to 0.5em{\hss.\hss}\hfill}
\newcommand{\appendixtocsection}[2]{\noindent\hyperref[#1]{\textbf{#2}}\appendixdotfill\pageref{#1}\par}
\newcommand{\appendixtocsubsection}[2]{\noindent\hspace*{2em}\hyperref[#1]{#2}\appendixdotfill\pageref{#1}\par}
\vspace{2ex}
\appendixtocsection{sec:theory}{A\quad Theoretical Analysis}
\appendixtocsubsection{app:theory_setup}{A.1\quad Setup and Assumptions}
\appendixtocsubsection{sec:theory_consistency}{A.2\quad Consistency of Success Potential Learning}
\appendixtocsubsection{app:theory_policy_improvement}{A.3\quad Policy Improvement via Weighted Flow Matching}
\appendixtocsubsection{sec:theory_hallucination}{A.4\quad Resolving Hallucination in Unified Architectures}
\appendixtocsubsection{app:mc_full}{A.5\quad MC Ranking Protocol}
\appendixtocsubsection{app:cross_task}{A.6\quad Cross-Task Within-Observation Validation}
\appendixtocsection{app:implementation_details}{B\quad Implementation Details}
\appendixtocsubsection{app:algorithm_pipeline}{B.1\quad \method~Algorithm Pipeline}
\appendixtocsubsection{app:model_config}{B.2\quad Model Configuration and Hyperparameters}
\appendixtocsubsection{app:il_hyper}{B.3\quad Imitation Learning Baselines}
\appendixtocsubsection{app:idql_impl}{B.4\quad IDQL Implementation}
\appendixtocsubsection{app:fql}{B.5\quad Flow Q-learning Implementation}
\appendixtocsection{app:hardware_components}{C\quad Hardware Components}
\appendixtocsection{app:task_definition}{D\quad Task Definition}
\appendixtocsubsection{app:simulation_tasks}{D.1\quad Simulation Tasks}
\appendixtocsubsection{app:real_world_tasks}{D.2\quad Real-World Robot Tasks}
\appendixtocsubsection{dataset_discription}{D.3\quad Dataset description}
\appendixtocsection{potential_example}{E\quad Qualitative Visualization}
\endgroup

\hypersetup{linkcolor=mydarkblue, citecolor=mydarkblue, urlcolor=mydarkblue}

\vspace{20ex}


\newpage

\section{Theoretical Analysis}
\label{sec:theory}


This appendix formalizes the two mathematical claims used in the main text. First, under independent endpoint sampling, the boundary velocity of the population CFM minimizer recovers the conditional data mean, whose potential component is the dataset-conditional success probability. Second, in the unclipped population setting, advantage-weighted flow matching on the action coordinates admits a reweighting interpretation under expected advantage weights. These results are population-level identities; the main paper therefore validates the finite-network approximations empirically.

\subsection{Setup and Assumptions}
\label{app:theory_setup}
Let $\pi_\beta(a|c)$ denote the behavior distribution of the dataset $\mathcal{D}$. Let $Y\in\{0,1\}$ denote the random binary chunk-level stage target and $y_{i,t}$ its observed realization. We define the action-quality function $Q(c,a)=\mathbb{E}[Y\mid c,a]$. Unless stated otherwise, all equalities involving $v^*$ refer to the population minimizer of the CFM objective.

\subsection{Consistency of Success Potential Learning}
\label{sec:theory_consistency}

We first justify using the augmented success potential vector $s$ as a baseline that recovers the dataset-conditional success probability $P_{\mathcal{D}}(Y=1 \mid c)$ at the population optimum. The connection to the standard MDP state value function $V^{\pi_\beta}$ is discussed in Remark~\ref{rem:value_interpretation}. In \method, the success potential dimension is trained via flow matching with uniform weights ($w=1$) on sparse binary targets $Y \in \{0, 1\}$.

\begin{theorem}[Boundary Identity for Conditional CFM]
\label{thm:consistency}
Suppose:
\begin{enumerate}
\item (Linear interpolation path) Endpoints $x_0 \sim p_0$ (the noise prior, e.g., $\mathcal{N}(0, I)$) and $x_1 \sim p_{\text{data}}(\cdot \mid c)$ (the data conditional on context $c$) are coupled via the linear interpolation $x_\sigma = (1 - \sigma)x_0 + \sigma x_1$ with target velocity $u_\sigma = x_1 - x_0$, $\sigma \in [0, 1]$.
\item (Independent endpoint sampling) The noise endpoint is sampled independently of the data pair: $x_0 \perp\!\!\!\perp (x_1, c)$. We do \emph{not} use minibatch optimal-transport coupling.
\item (Population minimizer with regularity) Let $v^*$ achieve the global minimum of the CFM squared regression loss
\[
\mathcal{L}_{\text{CFM}}(v) = \mathbb{E}_{\sigma \sim p(\sigma), \, x_0, x_1, c} \left[ \| v(x_\sigma, \sigma, c) - u_\sigma \|^2 \right].
\]
We assume $v^*$ admits a continuous extension as $\sigma \downarrow 0$ under standard regularity of the data distribution and $p(\sigma)$.
\item (Sparse undiscounted reward) Episode/stage outcomes are encoded as a binary label $Y \in \{0, 1\}$, and the success-potential endpoint is $x_1^{(s)}=Y\mathbf{1}_{H}$; reward is sparse and undiscounted ($\gamma = 1$).
\end{enumerate}
Then the one-step boundary estimator
\begin{equation}
\hat{x}_1 := x_0 + v^*(x_0, 0, c)
\end{equation}
satisfies, for $p_0$-almost every $x_0$,
\begin{equation}
\hat{x}_1 = \mathbb{E}_{x_1 \sim p_{\text{data}}(\cdot|c)}[x_1].
\end{equation}
In particular, the success-potential component recovers the dataset-conditional success probability after chunk-level averaging:
\begin{equation}
\hat{x}_1^{(s)}=\mathbb{E}[Y\mid c]\mathbf{1}_{H},\qquad
\hat{V}(c) := \tfrac{1}{H} \sum_{k=1}^{H} \hat{x}_{1,k}^{(s)} = \mathbb{E}[Y \mid c] = P_{\mathcal{D}}(Y = 1 \mid c).
\end{equation}
\end{theorem}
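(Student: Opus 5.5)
The approach is to use the conditional-mean characterization of the population CFM minimizer, $v^*(x,\sigma,c)=\E[u_\sigma\mid x_\sigma=x,\sigma,c]$, stated in the preliminaries, and evaluate it in the limit $\sigma\downarrow 0$. The squared-loss minimizer is the $L^2$ projection onto functions of $(x_\sigma,\sigma,c)$. So for $p(\sigma)$-a.e. $\sigma$ and almost every $(x,c)$ we have
\[
v^*(x,\sigma,c)=\E[x_1\mid x_\sigma=x,c]-\E[x_0\mid x_\sigma=x,c].
\]
Because $x_\sigma=(1-\sigma)x_0+\sigma x_1$, we can write $x_0=(x_\sigma-\sigma x_1)/(1-\sigma)$. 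This rewrites the field as
\[
v^*(x,\sigma,c)=\frac{\E[x_1\mid x_\sigma=x,c]-x}{1-\sigma}.
\]
The only remaining unknown is the posterior mean of the data endpoint given the noisy point.

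The key step is to show that $\E[x_1\mid x_\sigma=x,c]\to\E[x_1\mid c]$ as $\sigma\downarrow 0$. By independence (assumption 2), $x_0\sim\mathcal N(0,I)$ is independent of $(x_1,c)$. Bayes' rule then gives the posterior weight of $x_1$ given $x_\sigma=x$ as proportional to
\[
p_{\text{data}}(x_1\mid c)\,\exp\!\Big(-\tfrac{\|x-\sigma x_1\|^2}{2(1-\sigma)^2}\Big).
\]
As $\sigma\to 0$, the Gaussian factor converges pointwise to $\exp(-\|x\|^2/2)$, which does not depend on $x_1$. So the posterior collapses to the prior $p_{\text{data}}(\cdot\mid c)$. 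Dominated convergence justifies passing the limit through the integral. The dominating function comes from a bound such as $\exp(\sigma\langle x,x_1\rangle)\le \exp(\|x\|\|x_1\|)$ for $\sigma\le 1$ together with integrability of $x_1$. Here the data endpoint is bounded in practice (normalized actions and $s\in\{0,1\}^H$), which makes this easy. Combining with the continuous-extension assumption 3, which identifies $v^*(x,0,c)$ with this limit, yields
\[
v^*(x_0,0,c)=\E[x_1\mid c]-x_0 \quad\text{for }p_0\text{-a.e. }x_0.
\]
This is exactly \eqref{eq:boundary_identity_main}, and hence $\hat x_1=x_0+v^*(x_0,0,c)=\E[x_1\mid c]$.

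The main obstacle is this limiting step. The conditional-mean representation only holds $\sigma$-almost everywhere and $x$-almost everywhere, so the value at the single point $\sigma=0$ is not pinned down by the loss. I would handle this explicitly: the regression at $\sigma=0$ alone would give $\E[x_1-x_0\mid x_0,c]=\E[x_1\mid c]-x_0$ directly by independence, but a single time point has measure zero under $p(\sigma)$. This is why the theorem invokes the continuous extension. I would phrase the proof as "the a.e.-defined field has the limit computed above, and $v^*(\cdot,0,\cdot)$ denotes this continuous extension," which avoids pathologies at the boundary.

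Finally, for the potential component, I restrict the identity to the $s$-coordinates. Assumption 4 gives $x_1^{(s)}=Y\mathbf 1_H$, so
\[
\hat x_1^{(s)}=\E[Y\mid c]\,\mathbf 1_H.
\]
Averaging over the $H$ coordinates gives $\hat V(c)=\E[Y\mid c]$. Since $Y$ is binary, $\E[Y\mid c]=P_{\mathcal D}(Y=1\mid c)$, with the expectation taken under the dataset joint law of $(c,Y)$. The undiscounted, sparse-reward clause is not used in the identity itself; it only matters for the later value interpretation.
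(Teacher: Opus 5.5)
Your argument is correct, but it takes a more careful route than the paper's. The paper evaluates the conditional-mean characterization directly at the boundary. It invokes Assumption~3 to extend $v^*$ to $\sigma=0$, notes that $x_{\sigma=0}=x_0$, and writes $v^*(x_0,0,c)=\E[x_1-x_0\mid x_0,c]$. It then drops $x_0$ from the conditioning using $x_0\perp\!\!\!\perp(x_1,c)$ and weak union. That is exactly the shortcut you mention and set aside because a single time is $p(\sigma)$-null. In the paper, the identification of the continuous extension with the $\sigma=0$ regression is left implicit in Assumption~3. You instead derive $v^*(x,\sigma,c)=(\E[x_1\mid x_\sigma=x,c]-x)/(1-\sigma)$. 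You then show, via Bayes' rule and dominated convergence, that the posterior collapses to $p_{\text{data}}(\cdot\mid c)$ as $\sigma\downarrow0$. So the extension provably takes the value $\E[x_1\mid c]-x$, for every $x$. The same argument shows the canonical version is jointly continuous in $(x,\sigma)$ for $\sigma\in[0,1)$, so the existence half of Assumption~3 comes for free. The paper's version is shorter and never uses the form of $p_0$. Yours actually proves the step the paper takes for granted. The price is the Gaussian (or at least bounded, continuous, positive) prior density, a moment condition on $x_1$, and the requirement that $p(\sigma)$ charge every neighbourhood of $0$, so that a continuous minimizer coincides with the canonical version there. One tidy-up concerns your dominating function. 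After expanding the square, the cross term is $\sigma\langle x,x_1\rangle/(1-\sigma)^2$, and $\|x_1\|e^{C\|x_1\|}$ needs exponential moments, not mere integrability. It is simpler not to expand at all. The Gaussian factor is bounded by $1$, so $\|x_1\|$ dominates the numerator integrand, the denominator tends to $e^{-\|x\|^2/2}>0$, and $\E[\|x_1\|\mid c]<\infty$ suffices.
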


\begin{proof}
By the L2-Bayes regressor characterization, the population minimizer of the squared loss is the conditional mean:
\begin{equation}
v^*(x, \sigma, c)
=
\mathbb{E}\left[ x_1 - x_0 \mid x_\sigma = x, \sigma, c \right],
\end{equation}
defined $p_{\sigma}$-a.e. for each fixed $\sigma$. By Assumption~3, this extends continuously to $\sigma = 0$. At $\sigma = 0$, the flow state degenerates to pure noise: $x_{\sigma=0} = x_0$, so
\begin{equation}
 v^*(x_0, 0, c) = \mathbb{E}\left[ x_1 - x_0 \mid x_0, c \right] = \mathbb{E}[x_1 \mid x_0, c] - x_0.
\end{equation}
By Assumption~2, $x_0 \perp\!\!\!\perp (x_1, c)$, which by the graphoid weak-union axiom implies $x_0 \perp\!\!\!\perp x_1 \mid c$. Hence the conditional expectation simplifies to the marginal expectation:
\begin{equation}\mathbb{E}[x_1 \mid x_0, c] = \mathbb{E}_{x_1 \sim p_{\text{data}}(\cdot|c)} [x_1 \mid c]. \end{equation}
Substituting yields $v^*(x_0, 0, c) = \mathbb{E}[x_1 \mid c] - x_0$, so for $p_0$-a.e.\ $x_0$:
\begin{equation}\hat{x}_1 = x_0 + v^*(x_0, 0, c) = \mathbb{E}[x_1 \mid c]. \end{equation}
For the success-potential component, the target is $Y\mathbf{1}_{H}$, so
\begin{equation}\mathbb{E}[x_1^{(s)} \mid c]=\mathbb{E}[Y \mid c]\mathbf{1}_{H},\qquad \mathbb{E}[Y \mid c] = P_{\mathcal{D}}(Y=1 \mid c). \end{equation}
\end{proof}

\begin{remark}[Finite-capacity approximation]
\label{rem:finite_capacity}
Theorem~\ref{thm:consistency} concerns the population minimizer $v^*$ of the CFM objective. In practice, the trained network $v_\theta$ is a finite-capacity approximation to $v^*$, and we do not provide a formal bound on $\|v_\theta - v^*\|$. The theorem justifies the boundary estimate as a context baseline for AWR; it does not imply that the boundary value itself ranks multiple candidates under the same context. Section~\ref{sec:ablation} therefore treats the within-observation ranking induced by $v_\theta$ at $\mathrm{NFE}=1$ as an empirical finite-model property and compares it against a high-NFE reference.
\end{remark}

\begin{remark}[Single-step boundary approximation]
\label{rem:nfe1}
Theorem~\ref{thm:consistency} concerns the boundary value $v^*(x_0,0,c)$, whereas a high-NFE sampler follows the learned ODE and returns
\[
    x_1^{\mathrm{ODE}}
    =
    x_0+\int_0^1 v_\theta(x_\sigma^{\mathrm{ODE}},\sigma,c)\,d\sigma .
\]
The $\mathrm{NFE}=1$ estimator $\hat{x}_1=x_0+v_\theta(x_0,0,c)$ is therefore a boundary proxy obtained by evaluating the vector field only at $\sigma=0$, not an exact surrogate for the integrated ODE endpoint. Population optimality of the CFM objective alone does not make this proxy equal to $x_1^{\mathrm{ODE}}$; such equality would require additional structure not assumed here. Our use of $\mathrm{NFE}=1$ for ranking analysis relies on the weaker empirical property that the trained finite model's boundary proxy preserves within-observation rankings against a high-NFE reference, and the success-potential trajectories are near-linear in practice ($\kappa$ median $\le 0.039$ across tasks).
\end{remark}

\begin{remark}[State value function interpretation]
\label{rem:value_interpretation}
Under the additional assumption that the dataset $\mathcal{D}$ is generated by a single stationary behavior policy $\pi_\beta$ with sparse undiscounted reward and that $Y$ is the episode-level success outcome, $\hat{V}(c) = P_{\mathcal{D}}(Y = 1 \mid c)$ coincides with the standard MDP state value function $V^{\pi_\beta}(c)$. In our mixed-quality dataset (200 expert + 100 autonomous rollouts), $\pi_\beta$ is interpreted as the dataset-mixture policy.
\end{remark}

\subsection{Policy Improvement via Weighted Flow Matching}
\label{app:theory_policy_improvement}

\begin{theorem}[Population AWR Reweighting for Flow Matching]
\label{thm:policy_projection}
Fix a context $c$ and let $Q(c,a)=\mathbb{E}[Y\mid c,a]$ denote the population action-quality function. Consider the unclipped population weighted flow matching objective on the action coordinates with
\[
    \bar w(c,a)=\exp((Q(c,a)-V(c))/\tau),
\]
where $V(c)$ is independent of $a$. Then minimizing the weighted population CFM objective is equivalent, up to a positive context-dependent constant, to minimizing the CFM loss under the reweighted behavior distribution
\[
    \pi_{\bar w}(a|c)\propto \pi_\beta(a|c)\bar w(c,a)
    \propto \pi_\beta(a|c)\exp(Q(c,a)/\tau).
\]
\end{theorem}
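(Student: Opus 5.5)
The plan is to write the weighted action-coordinate CFM objective at a fixed context $c$ as an integral against $\pi_\beta(a\mid c)$. We then absorb the weight into the data distribution by a change of measure. Concretely, for fixed $c$ the unclipped weighted objective reads
\[
\mathcal L_{\bar w}(v;c)=\E_{a\sim\pi_\beta(\cdot|c)}\,\E_{x_0,\sigma}\!\left[\bar w(c,a)\,\big\|v^{(a)}(x_\sigma,\sigma,c)-(a-x_0)\big\|_2^2\right],
\]
with $x_\sigma=(1-\sigma)x_0+\sigma a$ on the action coordinates. Since $x_0\perp\!\!\!\perp(a,c)$ and $\sigma$ is independent (the same independence used in Theorem~\ref{thm:consistency}), the weight depends only on $(c,a)$. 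It can therefore be moved inside the outer $a$-integral without interacting with the noise or the time variable.

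The key step is to define the normalizer $Z_{\bar w}(c)=\int\pi_\beta(a|c)\bar w(c,a)\,da$ and the tilted density $\pi_{\bar w}(a|c)=\pi_\beta(a|c)\bar w(c,a)/Z_{\bar w}(c)$. Then Fubini and the identity $\pi_\beta\bar w=Z_{\bar w}\pi_{\bar w}$ give
\[
\mathcal L_{\bar w}(v;c)=Z_{\bar w}(c)\,\E_{a\sim\pi_{\bar w}(\cdot|c)}\E_{x_0,\sigma}\big[\|v^{(a)}(x_\sigma,\sigma,c)-(a-x_0)\|^2\big],
\]
which is $Z_{\bar w}(c)$ times the standard CFM loss with data distribution $\pi_{\bar w}(\cdot|c)$. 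Because $Z_{\bar w}(c)>0$ and does not depend on $v$, the minimizers coincide. When the loss is aggregated over contexts, the per-$c$ factor rescales the context marginal, but the pointwise (per-$c$) minimizer is unchanged. This is the sense of ``up to a positive context-dependent constant.''

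Next, we identify the tilted distribution. Since $V(c)$ does not depend on $a$, $\bar w(c,a)=e^{-V(c)/\tau}e^{Q(c,a)/\tau}$. The factor $e^{-V(c)/\tau}$ cancels in normalization, so $\pi_{\bar w}(a|c)\propto\pi_\beta(a|c)\exp(Q(c,a)/\tau)$. This is exactly the Boltzmann optimizer \eqref{eq:optimal_policy} of the KL-regularized objective \eqref{eq:kl_objective}. Combined with the conditional-mean characterization of the CFM minimizer, it follows that the population-optimal action field transports $p_0$ to $\pi_{\bar w}(\cdot|c)$.

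The main obstacle is the well-posedness of the change of measure rather than the algebra. We need $Z_{\bar w}(c)<\infty$, which holds here because $Q\in[0,1]$ bounds the weight by $e^{(1-V(c))/\tau}$, and we need the weighted loss to be finite so that Fubini applies. A further subtlety is that the practical weight uses $y_{i,t}$ rather than $Q(c,a)$, so $\E[\exp((Y-V)/\tau)\mid c,a]\neq\bar w(c,a)$ in general. The theorem is therefore stated directly with the expected weight $\bar w$. I would note this, and the effect of the decoupled potential coordinates, which are trained unweighted and separate additively, as the reason the result is phrased in terms of expected, unclipped weights.
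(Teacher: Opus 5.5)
Your proposal is correct and follows the paper's own route: the paper writes the weighted objective as $\int \pi_\beta(a|c)\,\bar w(c,a)\,\mathcal{L}_{\mathrm{CFM}}(\theta;c,a)\,da$ and factors $\pi_\beta\bar w = C(c)\,\pi^*$ with $C(c)=Z(c)e^{-V(c)/\tau}$. That constant $C(c)$ is exactly your normalizer $Z_{\bar w}(c)$; the only difference is that the paper introduces the Boltzmann policy $\pi^*$ first, while you normalize the tilted density directly and identify it afterwards. Your added remarks, that $Z_{\bar w}(c)$ is finite and positive because $Q\in[0,1]$ and that the per-context minimizer is unaffected when the loss is aggregated over contexts, are correct refinements the paper omits.
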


\begin{proof}
Let $\mathcal{L}_{\text{CFM}}(\theta; c, a) = \mathbb{E}_{\sigma, x_0} [ \| v_\theta(x_\sigma, \sigma, c) - u_\sigma \|^2 ]$ denote the conditional flow matching loss for a specific action $a$.
The idealized population weighted objective for the policy is defined as:
\begin{equation} \mathcal{L}_\pi(\theta) = \mathbb{E}_{a \sim \pi_\beta(\cdot|c)} \left[ \bar w(c, a) \mathcal{L}_{\text{CFM}}(\theta; c, a) \right]. \end{equation}

We rewrite the expectation as an integral over the action space:
\begin{equation} \mathcal{L}_\pi(\theta) = \int \pi_\beta(a|c) \, \bar w(c, a) \, \mathcal{L}_{\text{CFM}}(\theta; c, a) \, da,
\label{24}
\end{equation}
Recall the closed-form solution for the optimal policy derived from the KL-regularized objective~\cite{awr}:
\begin{equation} \pi^*(a|c) = \frac{1}{Z(c)} \pi_\beta(a|c) \exp\left( \frac{Q(c, a)}{\tau} \right)\end{equation}
where $Z(c) = \int \pi_\beta(a|c) \exp\left( \frac{Q(c, a)}{\tau} \right) da$ is the partition function that normalizes the probability distribution.
By defining the population importance weight as $\bar w(c, a) = \exp( \frac{Q(c, a) - V(c)}{\tau} )$, we observe the following relationship between the distributions:
\begin{align} \pi_\beta(a|c) \bar w(c, a) &= \pi_\beta(a|c) \exp\left( \frac{Q(c, a)}{\tau} \right) \exp\left( \frac{-V(c)}{\tau} \right) \\ &= \left[ Z(c) \pi^*(a|c) \right] \exp\left( \frac{-V(c)}{\tau} \right). \end{align} 
Let $C(c) = Z(c) \exp(-V(c)/\tau)$ be the normalization constant which is independent of $a$. Substituting this back into Eq.~\ref{24}:

\begin{align} \mathcal{L}_\pi(\theta) &= \int \left[ C(c) \pi^*(a|c) \right] \mathcal{L}_{\text{CFM}}(\theta; c, a) \, da \\ &= C(c) \int \pi^*(a|c) \, \mathcal{L}_{\text{CFM}}(\theta; c, a) \, da \\ &= C(c) \, \mathbb{E}_{a \sim \pi^*(\cdot|c)} \left[ \mathcal{L}_{\text{CFM}}(\theta; c, a) \right]. \end{align} 

Since $C(c)$ is constant with respect to the network parameters $\theta$, minimizing the weighted objective $\mathcal{L}_\pi(\theta)$ is equivalent to minimizing the flow matching loss under the optimal policy distribution $\pi^*$:
\begin{equation} \nabla_\theta \mathcal{L}_\pi(\theta) \propto \nabla_\theta \mathbb{E}_{a \sim \pi^*} \left[ \mathcal{L}_{\text{CFM}}(\theta; c, a) \right]. \end{equation}
Thus, in the idealized unclipped population setting with expected advantage weights, weighted flow matching fits the action flow induced by the Boltzmann reweighted behavior distribution.
\end{proof}
\begin{remark}[Binary label implementation]
The implemented objective uses realized binary labels and clipped weights
$\min(M,\exp((y_{i,t}-\hat V_{i,t})/\tau))$. For stochastic binary labels, these weights are not an unbiased estimator of
$\exp((Q(c,a)-V(c))/\tau)$.
For the unclipped exact-baseline idealization, if
$Y\mid (c,a)\sim\mathrm{Bernoulli}(Q(c,a))$, then
\[
\mathbb{E}[\exp((Y-V(c))/\tau)\mid (c,a)]
=
\exp(-V(c)/\tau)\left(1+Q(c,a)(\exp(1/\tau)-1)\right).
\]
This quantity is monotone in $Q(c,a)$ but is not the exact Boltzmann factor. Therefore, the practical update should be interpreted as an AWR reweighting surrogate motivated by the population projection.
\end{remark}

\subsection{Resolving Hallucination in Unified Architectures}
\label{sec:theory_hallucination}

A key contribution of \method~is the decoupled AWR formulation. Here, we analyze why standard ``Coupled'' training, which applies the same AWR weight to the joint action-potential loss, can create a self-masking calibration failure in unified models.

Let $\mathcal{H}\subset\mathcal{D}$ be the set of hallucinated failure samples: trajectories with target label $y=0$ for which the current potential predicts high success, $\hat V_\theta\approx 1$. For these samples,
\begin{equation}
    A = y-\hat V_\theta \approx -1,
    \qquad
    w=\exp(A/\tau)\approx 0
\end{equation}
for small temperature $\tau$.

We compare the correction received by the success-potential predictor. Let $\theta_s$ denote parameters that affect the potential prediction; for shared parameters, the same argument applies to the potential-loss component of the shared gradient.

\paragraph{Case 1: Coupled AWR.}
The coupled objective applies $w$ to both the action loss and the potential loss. On $\mathcal{H}$, the potential-correction contribution is
\begin{equation}
    \nabla_{\theta_s}\mathcal{L}_{\text{coupled}}\big|_{\mathcal{H}}
    \supset
    w\,\nabla_{\theta_s}\mathcal{L}_{\text{potential}}
    \approx \mathbf{0}.
\end{equation}
The overestimated potential makes $A$ strongly negative, which makes $w$ nearly zero; the same small weight then masks the regression signal that would reduce $\hat V_\theta$. The action loss on these samples is also downweighted while $w\approx 0$, so the failure mode is not continued high-weight imitation of failure actions. Rather, the false-positive potential can remain uncorrected, corrupting later advantage estimates and self-guided candidate rankings.

\paragraph{Case 2: Decoupled AWR (Ours).}
The decoupled objective applies $w$ only to the action coordinates and keeps the potential regression uniformly weighted. Thus hallucinated failures still provide an unmasked potential correction:
\begin{equation}
    \nabla_{\theta_s}\mathcal{L}_{\text{decoupled}}\big|_{\mathcal{H}}
    \supset
    \nabla_{\theta_s}\mathcal{L}_{\text{potential}} .
\end{equation}
This gradient drives the predicted potential on failure-labelled samples toward their target, $\hat V_\theta\to y=0$. After this correction, the sample no longer receives a strongly negative advantage from value overestimation. Decoupling therefore breaks the self-masking loop: currently overestimated failures are downweighted in the action loss, while their potential targets remain visible for calibration, improving the reliability of both advantage weights and inference-time rankings.

\subsection{MC Ranking Protocol}
\label{app:mc_full}
For fairness across methods, both \method~and IDQL use the same MC ranking protocol at each $K$: $K$ candidate action chunks are independently sampled from the policy, and the chunk with the highest score is executed. \method's score is the chunk-level average of the embedded progress projection; IDQL's score is the chunk-level Q-value from the separate critic forward pass on each candidate.

\subsection{Cross-Task Within-Observation Validation}
\label{app:cross_task}

We extend the within-observation ranking analysis to all five BEHAVIOR-1K simulation tasks. Following a uniform protocol on the final \method~model, for each held-out observation we generate $K=8$ candidate actions, compute their progress scores under $\rm{NFE}=1$ (our estimator) and $\rm{NFE}=100$ (high-NFE reference), and report (i) Kendall $\tau$, (ii) top-$1$ agreement, (iii) pairwise accuracy, (iv) MC gain ratio, and (v) trajectory curvature $\kappa$ in the success-potential subspace. Aggregate per-task metrics are reported in Table~\ref{tab:nfe_cross_task}. Figure~\ref{fig:cross_task_scatter} visualizes score alignment between NFE$\,=\,$$1$ and NFE$\,=\,$$100$, Figure~\ref{fig:cross_task_tau} summarizes ranking fidelity across tasks and outcomes, and Figure~\ref{fig:cross_task_geom} analyzes trajectory geometry and value separation.

\begin{figure}[t]
    \centering
    \includegraphics[width=\linewidth]{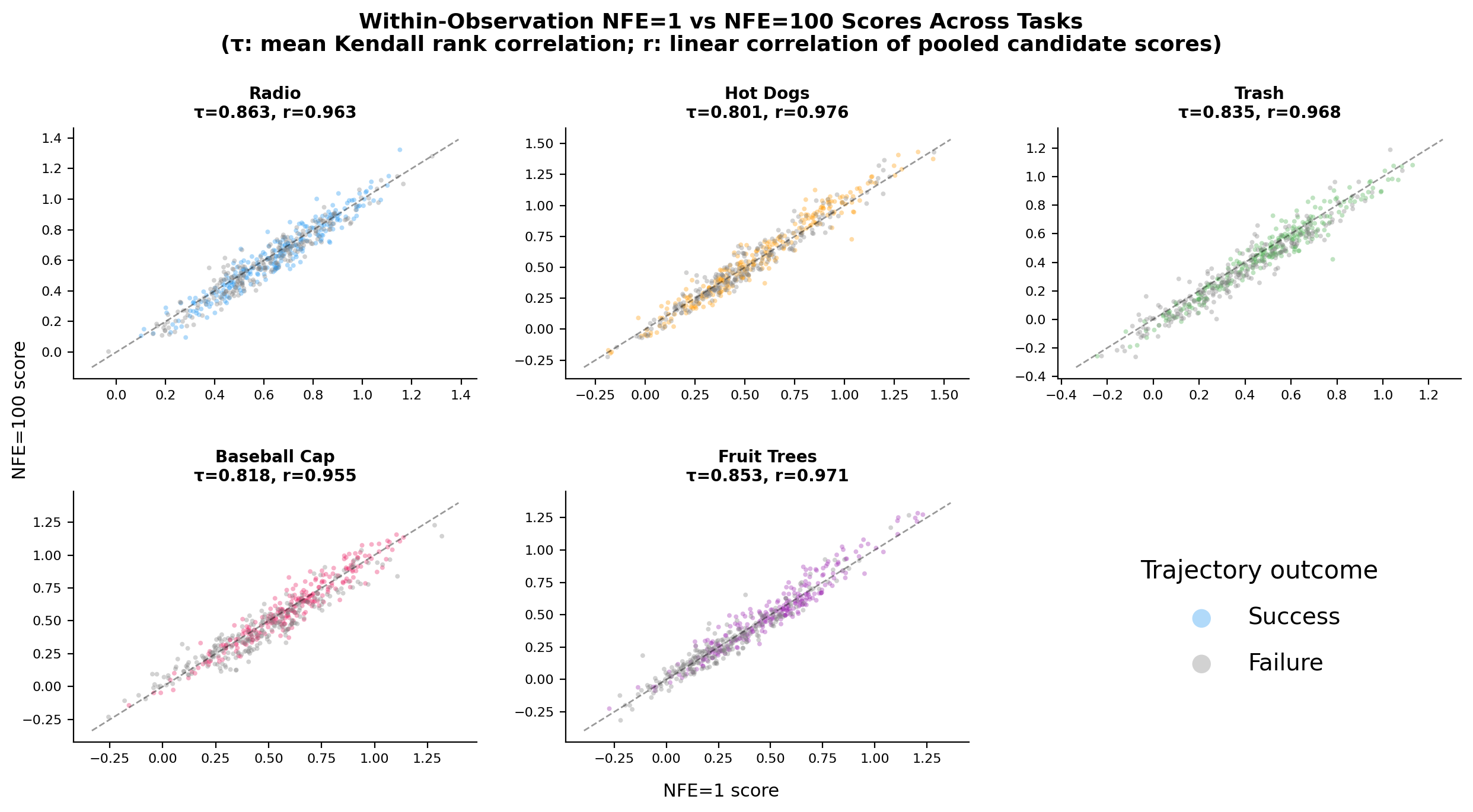}
    \caption{\textbf{NFE$=$1 vs NFE$=$100 score scatter (per task).} Each point is a candidate action's progress score under the two NFE settings. Monotone alignment along the diagonal indicates ranking preservation across all five tasks.}
    \label{fig:cross_task_scatter}
\end{figure}

\begin{figure}[t]
    \centering
    \begin{subfigure}{0.48\linewidth}
        \includegraphics[width=\linewidth]{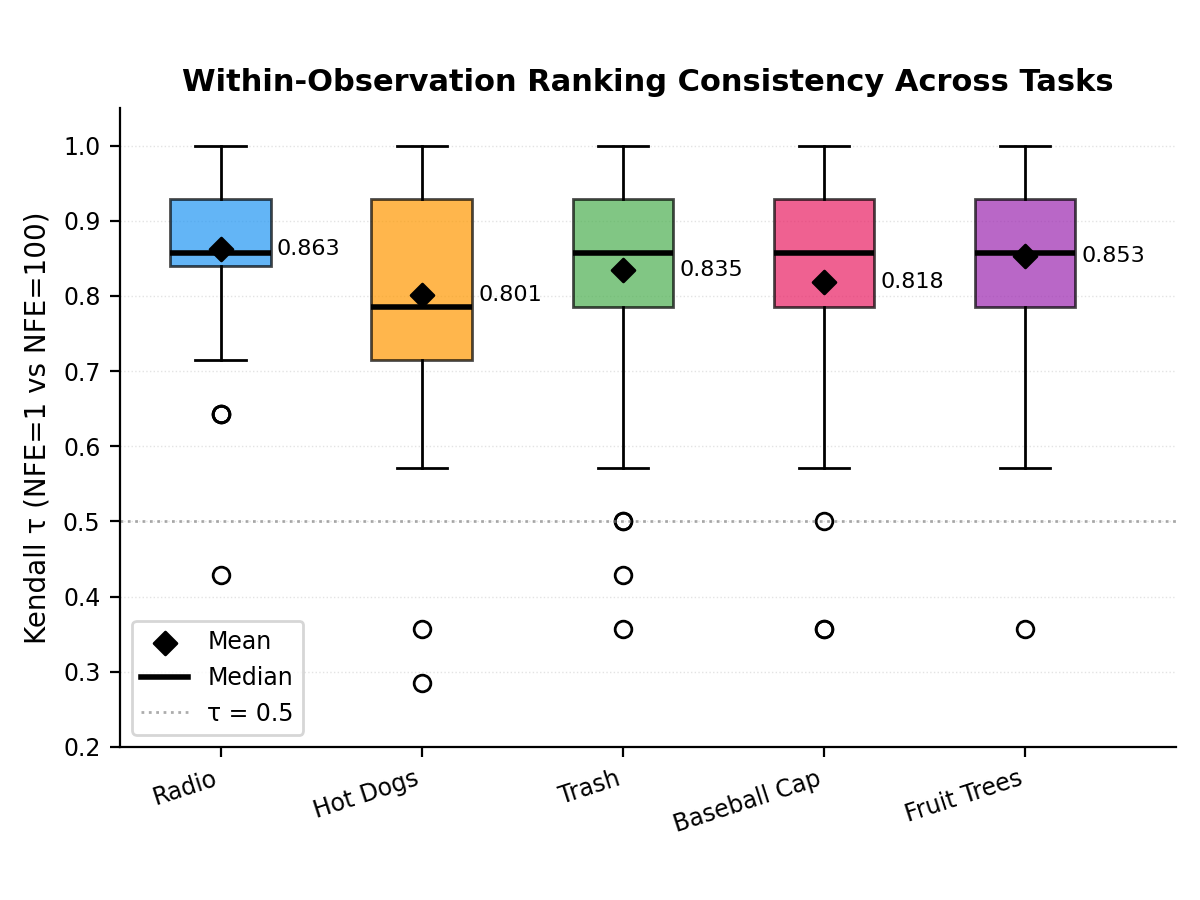}
        \caption{Kendall $\tau$ distribution per task.}
    \end{subfigure}
    \hfill
    \begin{subfigure}{0.48\linewidth}
        \includegraphics[width=\linewidth]{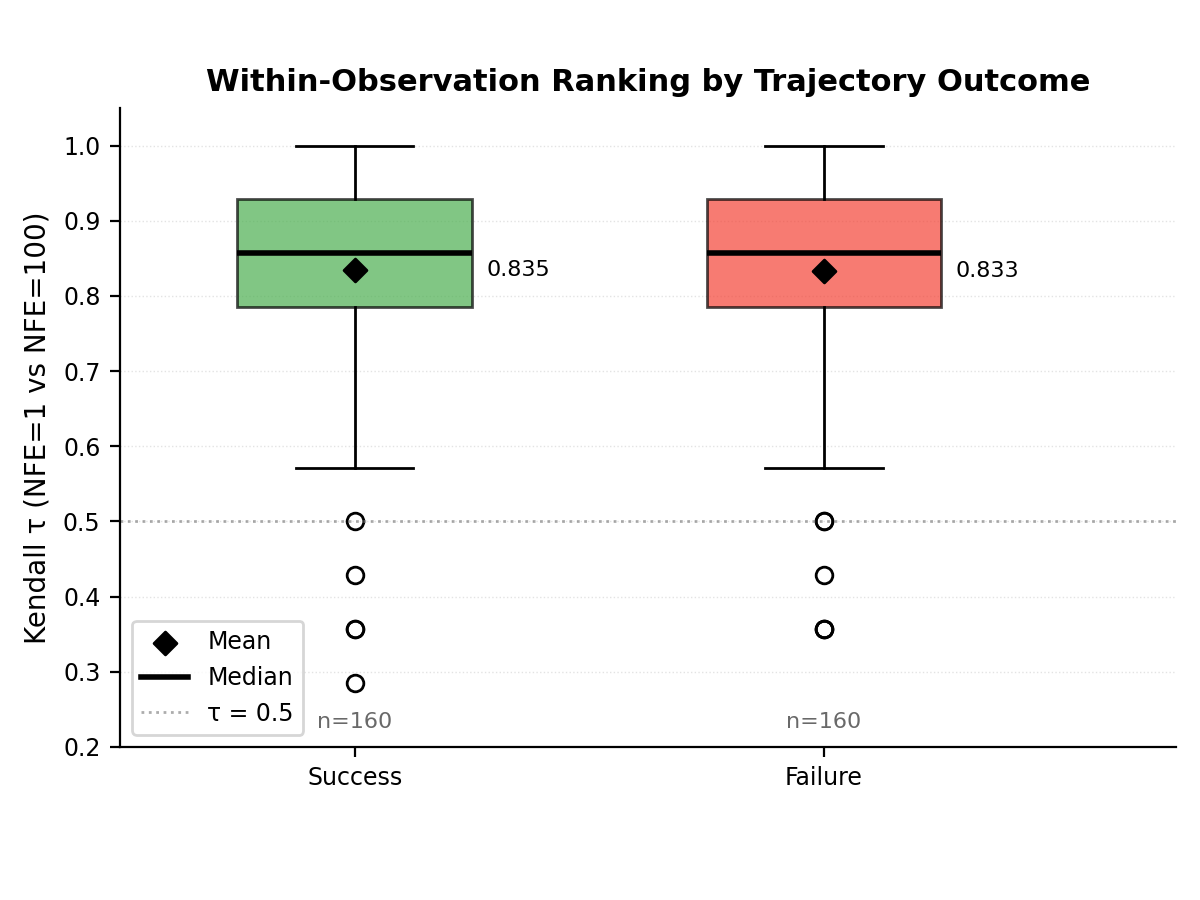}
        \caption{$\tau$ stratified by success vs failure trajectories.}
    \end{subfigure}
    \caption{\textbf{Per-task ranking fidelity.} (a) Per-task Kendall $\tau$ box plot. (b) $\tau$ remains high on both success and failure trajectories, addressing concerns about failure-mode error distribution.}
    \label{fig:cross_task_tau}
\end{figure}

\begin{figure}[t]
    \centering
    \begin{subfigure}{0.48\linewidth}
        \includegraphics[width=\linewidth]{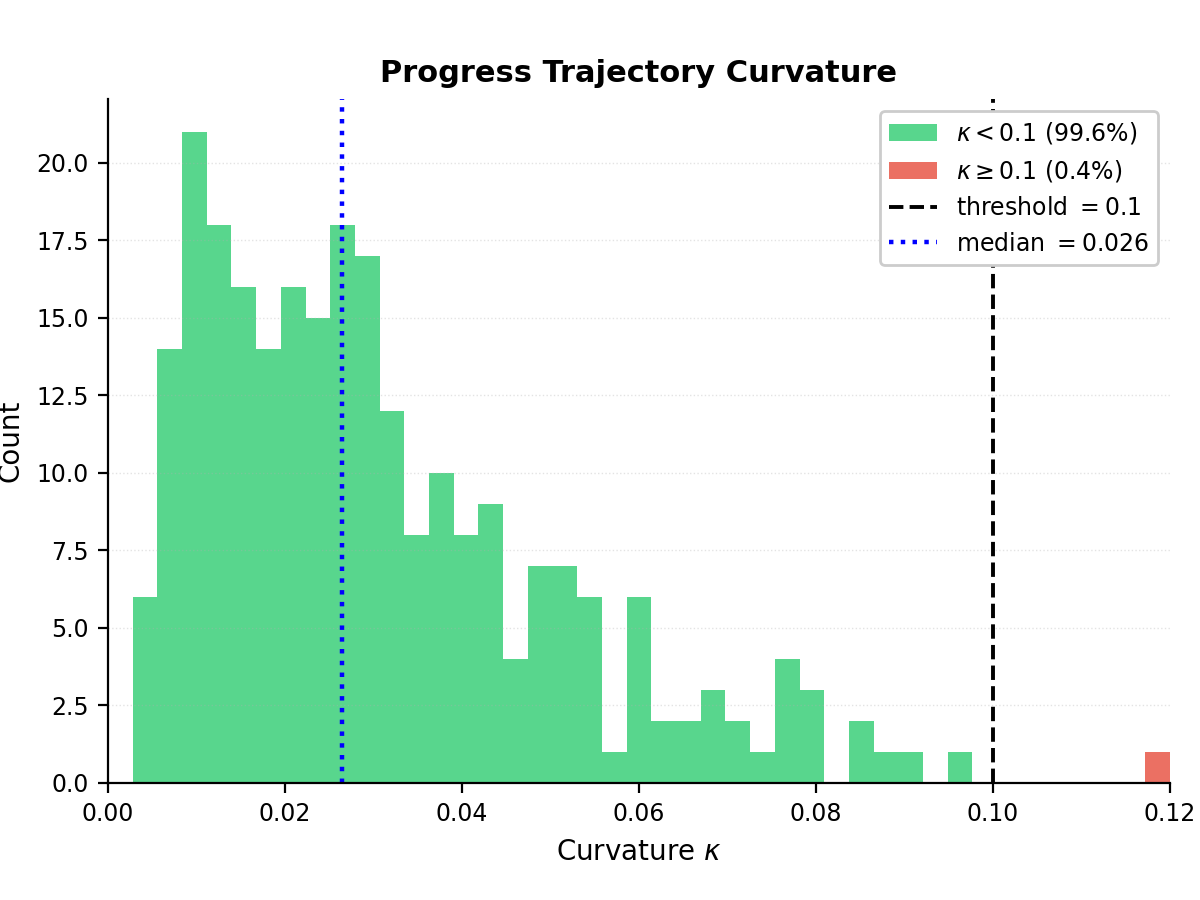}
        \caption{Trajectory curvature $\kappa$ distribution.}
    \end{subfigure}
    \hfill
    \begin{subfigure}{0.48\linewidth}
        \includegraphics[width=\linewidth]{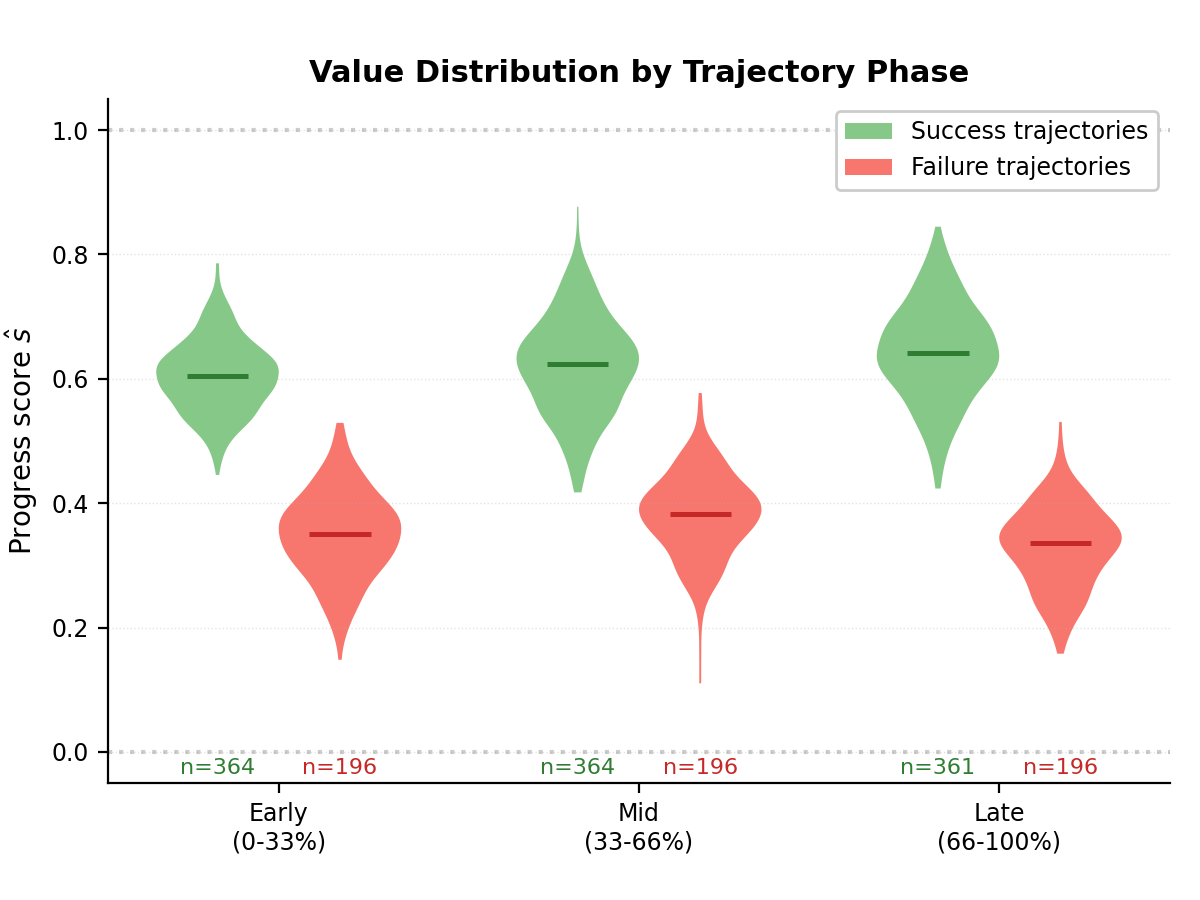}
        \caption{Per-stage value distribution.}
    \end{subfigure}
    \caption{\textbf{Trajectory geometry and value calibration.} (a) Curvature $\kappa$ in the success-potential subspace is concentrated near zero (median $\le 0.039$ across tasks), supporting the near-linearity assumption. (b) Per-stage value violin plot shows separation between success and failure trajectories.}
    \label{fig:cross_task_geom}
\end{figure}

\paragraph{Score scatter.}
Figure~\ref{fig:cross_task_scatter} plots $\rm{NFE}=1$ progress scores against the $\rm{NFE}=100$ reference for $K=8$ candidates per held-out observation, separately per task. The monotone alignment along the diagonal indicates that one-step scores closely track the high-NFE reference, supporting the ranking-fidelity results reported in Table~\ref{tab:nfe_cross_task}.

\paragraph{Per-task and outcome stratification.}
Figure~\ref{fig:cross_task_tau} (a) shows the Kendall $\tau$ distribution per task; (b) separates $\tau$ by success vs failure trajectories. Median $\tau \ge 0.80$ holds across both groups, addressing the concern that $\rm{NFE}=1$ error may be heavier on failure cases.

\paragraph{Trajectory geometry and value calibration.}
Figure~\ref{fig:cross_task_geom} (a) shows curvature $\kappa$ concentrated near zero (median $\le 0.039$), supporting the near-linearity assumption underlying single-step Euler integration. (b) Per-stage value violin plot shows separation between success and failure trajectories across stages, indicating that the embedded progress head provides a discriminative progress signal.


\newpage
\section{Implementation Details}
\label{app:implementation_details}
\subsection{\method~Algorithm Pipeline}
\label{app:algorithm_pipeline}

\method~is implemented as two decoupled procedures: an offline training algorithm and a test-time policy improvement algorithm.

\paragraph{Training.}
In training (Algorithm~\ref{alg:fpf_train}), we learn a conditional velocity field $v_\theta(x_\sigma,\sigma,c)$ over an augmented latent vector
$x=[a;s]\in\mathbb{R}^{H(d+1)}$, where $a\in\mathbb{R}^{dH}$ is an action chunk and $s\in[0,1]^H$ is a learned progress potential vector.
The potential is supervised by sparse stage-level quality via the broadcasted target $s^{\text{target}}_{i,t}$, and we perform policy improvement using
a Decoupled Advantage Weighted Regression objective: the AWR weight $w_{i,t}=\min(M,\exp(A_{i,t}/\tau))$ is applied only to the
action-component flow-matching loss, while the potential component is trained with uniform regression to remain a calibrated progress estimator.
To keep training efficient, the baseline is estimated with an $\mathrm{NFE}=1$ approximation
$\hat{x}_1 \approx x_0 + v_{\theta_{\text{old}}}(x_0, 0, c)$.

\begin{algorithm}[!h]
\caption{\method~Training: Decoupled Advantage Weighted Flow Matching}
\label{alg:fpf_train}
\small
\begin{algorithmic}[1]
\REQUIRE Offline dataset $\mathcal{D}=\{\xi_i\}_{i=1}^N$, trajectory $\xi_i=\{(\mathbf{o}_{i,t}, a_{i,t}, y_{i,t}, T_i)\}_{t=1}^{T_i}$;
chunk horizon $H$; flow time distribution $p(\sigma)$; noise prior $p_0(x)$; temperature $\tau$; clip constant $M$; learning rate $\eta$.
\ENSURE Trained velocity field $v_\theta$.

\vspace{2pt}
\STATE \textbf{Model:} augmented latent $x=[a; s]\in\mathbb{R}^{H(d+1)}$, with conditional velocity field $v_\theta(x_\sigma,\sigma,c)$.
\STATE Initialize parameters $\theta$.

\vspace{4pt}
\FOR{each gradient step}
    \STATE Sample trajectory index $i$ and timestep $t$ (ensure $t \le T_i-H+1$); set condition $c \leftarrow \mathbf{o}_{i,t}$.
    
    \STATE \textbf{Construct targets:}
    \STATE \hspace{6pt} Construct broadcast potential target: $s^{\text{target}}_{i,t} \leftarrow y_{i,t}\mathbf{1}_{H} \in \mathbb{R}^H$.

    \STATE Form endpoint (data) sample:
    \[
        x_1 \leftarrow \begin{bmatrix}a_{i,t:t+H-1}\\ s^{\text{target}}_{i,t}\end{bmatrix}.
    \]
    
    \STATE Sample $x_0\sim p_0(x)$ and $\sigma\sim p(\sigma)$; interpolate $x_\sigma\leftarrow (1-\sigma)x_0+\sigma x_1$.
    \STATE Flow-matching target velocity $u \leftarrow x_1-x_0$; split $u=(u^{(a)}, u^{(s)})$.

    \STATE \textbf{Efficient NFE=1 baseline estimation (Sec.~4.2):}
    \STATE \hspace{6pt} Predict $\hat{x}_1 \leftarrow x_0 + \sg(v_\theta(x_0, 0, c))$; extract $\hat{s}_{i,t} \leftarrow \hat{x}_1^{(s)}$.
    \STATE \hspace{6pt} Aggregate baseline: $\hat{V}_{i,t} \leftarrow \frac{1}{H} \sum_{k=1}^H \hat{s}_{i,t,k}$.

    \STATE Advantage $A_{i,t}\leftarrow y_{i,t} - \hat{V}_{i,t}$;
           weight $w_{i,t}\leftarrow \min\!\left(M,\exp\!\left(\frac{A_{i,t}}{\tau}\right)\right)$.

    \STATE Predict $v\leftarrow v_\theta(x_\sigma,\sigma,c)$; split $v=(v^{(a)}, v^{(s)})$.
    
    \STATE \textbf{Decoupled loss (Eq.~\ref{eq:decoupled_loss}):}
    \STATE \hspace{6pt} $\mathcal{L}(\theta)\leftarrow w_{i,t}\|v^{(a)}-u^{(a)}\|_2^2 + \|v^{(s)}-u^{(s)}\|_2^2$.
    \STATE Gradient update: $\theta \leftarrow \theta - \eta\nabla_\theta \mathcal{L}(\theta)$.
\ENDFOR
\end{algorithmic}
\end{algorithm}

\paragraph{Inference.}
At test time, we deploy the trained unified flow model to perform self-guided decision making, as detailed in Algorithm~\ref{alg:fpf_infer}. Unlike standard behavior cloning which greedily samples a single action, \method~leverages the learned potential field to assess the quality of its own generations. For a given context $c_t$, we generate $K$ parallel candidate trajectories by integrating the ODE from distinct noise samples. Since the backbone models the joint distribution $p_\theta(a, s | c)$, each generated sample automatically includes a dense progress potential vector $s^{(k)}_t$. We aggregate this vector into a scalar quality score $Q^{(k)}$ by averaging over the chunk horizon, serving as a representative metric for the trajectory's value. The policy then executes the action chunk corresponding to the highest score, using the learned potential for self-guided best-of-$K$ selection.

\begin{algorithm}[h]
\caption{\method~Inference: Monte Carlo Policy Improvement}
\label{alg:fpf_infer}
\small
\begin{algorithmic}[1]
\REQUIRE Trained velocity field $v_\theta$; noise prior $p_0(x)$; Monte Carlo samples $K$; current condition $c_t$.
\ENSURE Receding-horizon action chunk $a^*_t$.

\vspace{2pt}
\FOR{$k=1$ to $K$}
    \STATE Sample $x^{(k)}(0)\sim p_0(x)$.
    \STATE Integrate the probability flow ODE from $\sigma:0\rightarrow 1$:
    \[
        \frac{dx}{d\sigma}=v_\theta(x,\sigma,c_t),
    \]
    to obtain $x^{(k)}(1)=[a^{(k)}_t; s^{(k)}_t]$.
\ENDFOR
\STATE Compute chunk scores: $Q^{(k)} \leftarrow \frac{1}{H} \sum_{j=1}^H s^{(k)}_{t,j}$.
\STATE Select $k^* \leftarrow \arg\max_{k\in\{1,\dots,K\}} Q^{(k)}$.
\STATE Output $a^*_t \leftarrow a^{(k^*)}_t$ (execute in receding-horizon manner).
\end{algorithmic}
\end{algorithm}

\newpage
\subsection{Model Configuration and Hyperparameters}
\label{app:model_config}

We evaluate \method~on two distinct domains, employing architecture-specific modifications to integrate the potential field.

\paragraph{Simulation (BEHAVIOR-1K).}
We implement \method~on the \textbf{$\pi_{0.5}$} architecture~\citep{pi0.5}. To enable decoupled learning, we modify the Action Expert's output layer: the unified projection is replaced with two independent linear heads, an action head ($\mathbb{R}^{d_{\text{model}}} \to \mathbb{R}^d$) and a progress potential head ($\mathbb{R}^{d_{\text{model}}} \to \mathbb{R}^1$). Specific hyperparameters are listed in Table~\ref{tab:fpf_sim}.

\begin{table}[t]
\centering
\caption{Hyperparameters of \method~on simulation (based on $\pi_{0.5}$).}
\vtop{
\centering
\begin{tabular}{*{2}{c}}
\toprule
\textbf{Hyperparameter} & \textbf{Value}\\
\midrule
Action Expert Width & 1024\\
Action Expert Depth & 18\\
Action Expert MLP Dim & 4096\\
Batch Size & 128\\
Chunk Size & 30\\
Learning Rate & 1e-5\\
LR Scheduler & Cosine\\
Optimizer & AdamW \\
AdamW Betas & [0.9, 0.95]\\
AdamW Epsilon & 1e-8 \\
Weight Decay & 1e-10\\
Max Training Steps & 30,000 \\
Number of Flow Samples & 15\\
AWR Temperature $\tau$ & 0.3\\
AWR Clip Constant $M$ & 20.0\\
Fine-tune Method & Full parameter\\
\bottomrule
\end{tabular}
\label{tab:fpf_sim}
}
\end{table}

\paragraph{Real-World Robot.}
We implement \method~on the \textbf{$\pi_0$} architecture~\citep{pi0}. As $\pi_0$ predicts actions via learnable query tokens appended to the VLM sequence (without a separate Action Expert), we augment the token projection dimension to include the potential field and apply the same decoupled head strategy. Specific hyperparameters are listed in Table~\ref{tab:fpf_real}.

\begin{table}[t]
\centering
\caption{Hyperparameters of \method~on real-world robot (based on $\pi_0$).}
\vtop{
\centering
\begin{tabular}{*{2}{c}}
\toprule
\textbf{Hyperparameter} & \textbf{Value}\\
\midrule
Action Expert Width & 1024\\
Action Expert Depth & 18\\
Action Expert MLP Dim & 4096\\
Batch Size & 128\\
Chunk Size & 30\\
Learning Rate & 2.5e-5\\
LR Scheduler & Cosine\\
Optimizer & AdamW \\
AdamW Betas & [0.9, 0.95]\\
AdamW Epsilon & 1e-8 \\
Weight Decay & 1e-10\\
Max Training Steps & 20,000 \\
Fine-tune Method & Full parameter\\
Number of Flow Samples & 15\\
AWR Temperature $\tau$ & 0.3\\
AWR Clip Constant $M$ & 20.0\\
\bottomrule
\end{tabular}

\label{tab:fpf_real}
}
\end{table} 

\newpage
\subsection{Imitation Learning Baselines}
\label{app:il_hyper}

To ensure a rigorous comparison, our Behavior Cloning (BC) baselines use the exact same backbone architectures as \method: $\pi_{0.5}$~\citep{pi0.5} for simulation and $\pi_0$~\citep{pi0} for real-world tasks.

\paragraph{BC on Simulation (BEHAVIOR-1K).}
The simulation baseline utilizes the unmodified $\pi_{0.5}$ architecture. Unlike \method, this model retains the original unified output projection and is trained solely via the standard conditional flow matching objective on the mixed-quality dataset. It does not incorporate the success potential dimension or advantage weighting. Consequently, this baseline indiscriminately imitates both successful and failed trajectories, serving as a control to quantify the necessity of our decoupled policy improvement mechanism. Key hyperparameters are provided in Table~\ref{tab:bc_sim}.

\begin{table}[t]
\centering
\caption{Hyperparameters of BC on simulation (based on $\pi_{0.5}$).}
\vtop{
\centering
\begin{tabular}{*{2}{c}}
\toprule
\textbf{Hyperparameter} & \textbf{Value}\\
\midrule
Action Expert Width & 1024\\
Action Expert Depth & 18\\
Action Expert MLP Dim & 4096\\
Batch Size & 128\\
Chunk Size & 30\\
Learning Rate & 1e-5\\
LR Scheduler & Cosine\\
Optimizer & AdamW \\
AdamW Betas & [0.9, 0.95]\\
AdamW Epsilon & 1e-8 \\
Weight Decay & 1e-10\\
Max Training Steps & 30,000 \\
Fine-tune Method & Full parameter\\
\bottomrule
\end{tabular}

\label{tab:bc_sim}
}
\end{table}

\paragraph{BC on Real-World Robot.}
Similarly, the real-world baseline employs the standard $\pi_0$ architecture. It processes multi-view RGB observations and language instructions to directly generate action velocities via learnable tokens. As with the simulation baseline, it is trained using the standard flow matching loss without potential field augmentation. Key hyperparameters are listed in Table~\ref{tab:bc_real}.

\begin{table}[t]
\centering
\caption{Hyperparameters of BC on real-world robot (based on $\pi_0$).}
\vtop{
\centering
\begin{tabular}{*{2}{c}}
\toprule
\textbf{Hyperparameter} & \textbf{Value}\\
\midrule
Action Expert Width & 1024\\
Action Expert Depth & 18\\
Action Expert MLP Dim & 4096\\
Batch Size & 128\\
Chunk Size & 30\\
Learning Rate & 2.5e-5\\
LR Scheduler & Cosine\\
Optimizer & AdamW \\
AdamW Betas & [0.9, 0.95]\\
AdamW Epsilon & 1e-8 \\
Weight Decay & 1e-10\\
Max Training Steps & 20,000 \\
Fine-tune Method & Full parameter\\
\bottomrule
\end{tabular}
\label{tab:bc_real}
}
\end{table}

\subsection{IDQL Implementation}
\label{app:idql_impl}

We implement Implicit Diffusion Q-Learning (IDQL)~\citep{idql} by training a separate critic network to guide the policy.

\paragraph{Critic Architecture.}
The value model employs a Transformer backbone to naturally handle multi-modal inputs. As illustrated in Figure~\ref{fig:idql}, observations are processed via modality-specific encoders: three egocentric RGB views (front, left, right) are tokenized into visual sequences, while language instructions are embedded to provide goal guidance. Proprioceptive signals (e.g., joint angles, end-effector poses) are normalized and projected into a compact state vector. These heterogeneous tokens are concatenated and fed into a pretrained PaliGemma model~\citep{beyer2024paligemma}, using attention masks to manage cross-modal interactions. Finally, an MLP-based decoder maps the fused representation to scalar Q-value estimates.

\paragraph{Policy and Training.}
The policy architecture and hyperparameters are identical to the BC baselines described in Appendix~\ref{app:il_hyper}. The critic is trained via temporal difference learning, and the policy is fine-tuned using IQL-style advantage weighting. Detailed configurations are listed in Table~\ref{tab:idql-hyper}.

\begin{figure*}[htbp]
    \centering
    \includegraphics[width=0.85\linewidth]{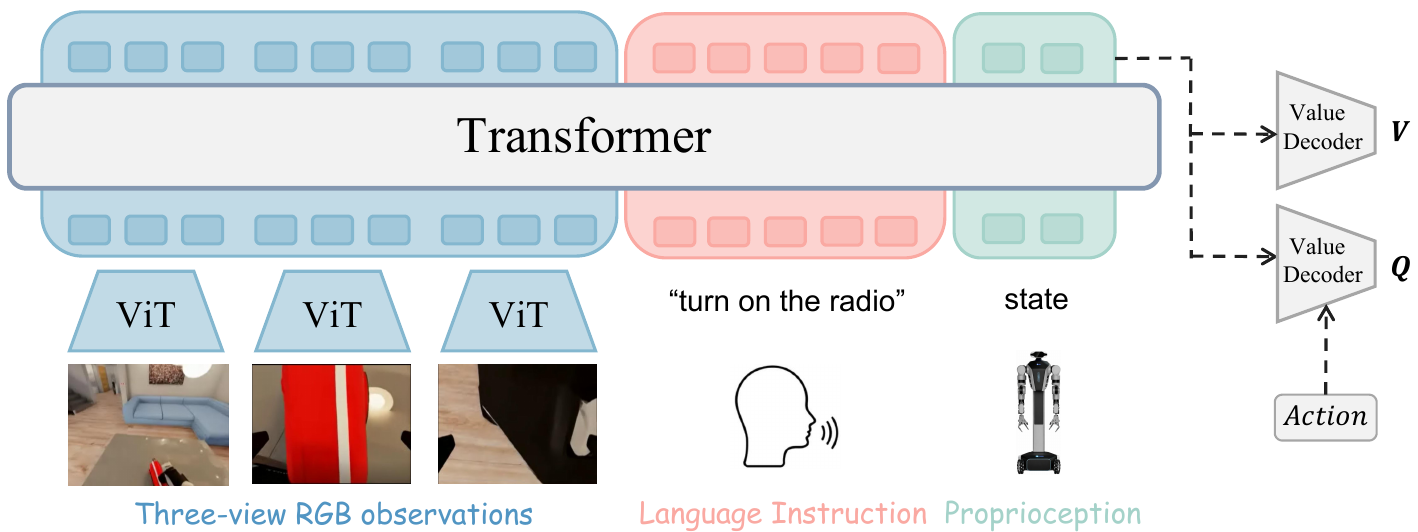}
    \caption{The architecture of the value model in Implicit Diffusion Q-Learning (IDQL).}
    \label{fig:idql}
    \vspace{-10pt}
\end{figure*}
 
\begin{table}[t]
\centering
\caption{Hyperparameters of value network of IDQL.}
\label{tab:idql-hyper}
\begin{tabular}{lc}
\toprule
\textbf{Hyperparameter} & \textbf{Value} \\
\midrule
\multicolumn{2}{l}{\textit{Value model Architecture}} \\
Value Network & MLP[$256$ $\times$ $1024$ $\times$ $256$]\\
Exponential Weight $\beta$ & $1$\\
Image Encoder & SigLIP~\citep{zhai2023sigmoid} \\
Transformer Model & PaliGemma~\citep{beyer2024paligemma} \\
Width & $256$\\
Depth & $4$ \\
MLP Dim & $1024$\\
Number of Heads & $8$\\
Number of KV Heads & $1$\\ 
\midrule
\multicolumn{2}{l}{\textit{Training Configuration}} \\
Batch Size & $64$ \\
Learning Rate & $1e^{-5}$ \\
LR Scheduler & Cosine \\
Optimizer & AdamW \\
AdamW Betas & $[0.9, 0.95]$ \\
Weight Decay & $1e^{-4}$ \\
Max Training Steps & $30,000$ \\
\bottomrule
\end{tabular}
\end{table}

\subsection{Flow Q-learning Implementation}
\label{app:fql}

We implement FQL~\citep{fql} on $\pi_{0.5}$ following the original method: a multi-step flow policy is first trained on the mixed-quality dataset, then distilled into a one-step student via Q-guided policy optimization. The Q function is trained jointly with the student policy on the same dataset. Detailed configurations are listed in Table~\ref{tab:fql-hyper}.

\paragraph{Model Architecture.}
We retain the pretrained vision-language prefix of $\pi_{0.5}$ and train only the action-side components used by the distilled student. For each observation and language instruction, the frozen VLM prefix encodes the visual and task context once, and this shared representation is consumed by both the multi-step flow teacher and the one-step student action expert. During Flow Q-learning training, the VLM prefix and the multi-step teacher action expert are kept frozen. The teacher therefore acts as a fixed reference policy that provides supervised flow targets, while the student action expert is the only policy component updated by the distillation, behavior-cloning and value-guided objectives. The critic is implemented as an independent twin-$Q$ architecture, where the two $Q$ networks use separate vision encoders, separate transformer backbones, and  separate action-value heads, enabling pessimistic value estimation through the minimum of the two predictions. 

\paragraph{Policy and Training.}
The $Q$ function is trained jointly with the student policy on the same mixed-quality offline dataset. At each update, the critic is fitted with a temporal-difference objective using dataset actions and target values computed from the target twin-$Q$ networks. The student is optimized to match the frozen flow teacher while also maximizing the learned critic value of its predicted actions. In addition to the original Flow Q-learning objective, we add a behavior-cloning anchor directly on the student action expert, which regularizes the student's predicted action toward the dataset action. Thus, the trainable student receives gradients from teacher distillation, critic-based policy improvement, and the BC anchor, while the VLM prefix and teacher remain frozen throughout training.

\begin{table}[h]
  \centering
  \caption{Hyperparameters of Flow Q-learning.}
  \label{tab:fql-hyper}
  \vspace{0.5em}
  \begin{tabular}{lc}
  \toprule
  \textbf{Hyperparameter} & \textbf{Value} \\
  \midrule
  \multicolumn{2}{l}{\textit{Q model architecture}} \\
  Vision encoder & SigLIP-M/$14$ ($\times 2$ separate) \\
  Transformer backbone & Gemma ($\times 2$ separate) \\
  Width & $256$ \\
  Depth & $4$ \\
  MLP dim & $1{,}024$ \\
  Attention heads / KV heads & $8$ / $1$ \\
  Action dim $\times$ Horizon & $32 \times 30$ \\
  Trainable SigLIP layers & $12$ \\
    \midrule
  \multicolumn{2}{l}{\textit{Student action expert architecture}} \\
    Action Expert Width & 1024\\
    Action Expert Depth & 18\\
    Action Expert MLP Dim & 4096\\
    Batch Size & 128\\
    Chunk Size & 30\\
  \midrule
  \multicolumn{2}{l}{\textit{Training}} \\
  $\gamma$ (discount factor) & $0.99$ \\
  Distillation weight $\alpha_{\mathrm{distill}}$ & $10$ \\
  BC anchor weight $\alpha_{\mathrm{BC}}$ & $5$ \\
  Teacher flow ODE steps & $20$ \\
  Target $Q$ EMA rate $\tau$ & $0.005$ \\
  Optimizer & AdamW \\
  Student learning rate & $1 \times 10^{-4}$ \\
  Q learning rate & $3 \times 10^{-5}$ \\
  Batch size & $128$ \\
  Max training steps & $20{,}000$ \\
  \bottomrule
  \end{tabular}
  \end{table}


\newpage

\section{Hardware Components}
\label{app:hardware_components}

\begin{figure*}[htbp]
    \centering
    \includegraphics[width=0.85\linewidth]{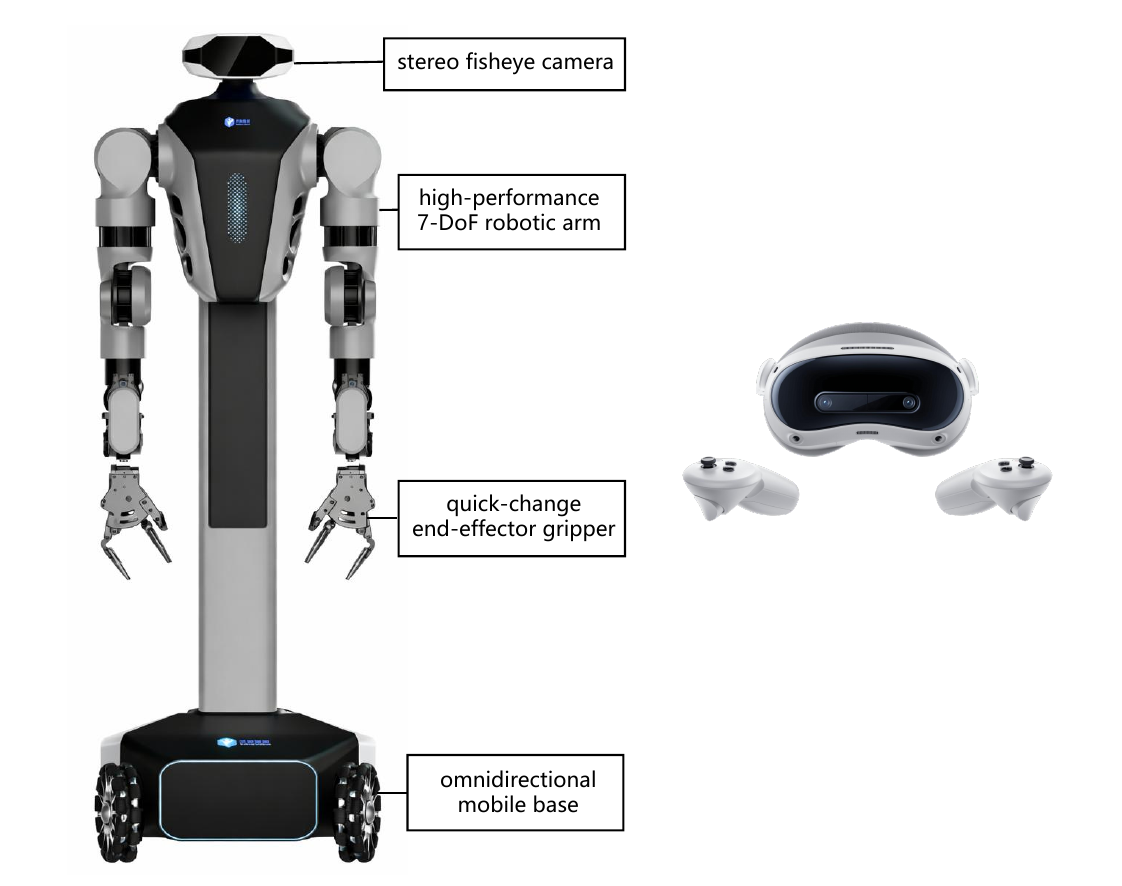}
    \caption{Hardware overview of the TeleAvatar robot and its teleoperation system.}
    \label{fig:hardware}
\end{figure*}
As shown in Figure~\ref{fig:hardware}, our hardware setup consists of a DexTeleop TeleAvatar Lite robot and a virtual reality (VR) teleoperation kit. The TeleAvatar Lite is equipped with two 7-DoF arms, each fitted with a DexTeleop custom parallel gripper.
For data collection, the robot carries a binocular fisheye camera module (two lenses) providing a 180$^\circ$ field of view and $3840 \times 1080$ video, with a glass-to-glass latency below 150 ms. In addition, two RGB-D cameras are mounted on the wrists to capture $1280 \times 720$ video. All three cameras operate at 30 Hz.
The VR kit comprises a head-mounted display, low-latency tracking sensors, and handheld controllers. The controller buttons are used to start, pause, save, and discard recordings. The logged modalities include VR-perspective RGB images, head-pose estimates, eye-tracking signals, interaction event logs, and motion control commands.


\section{Task Definition}
\label{app:task_definition}

\subsection{Simulation Tasks}
\label{app:simulation_tasks}

\paragraph{Turning on radio}

\paragraph{Task Description}
In this task, the robot must visually locate a target radio, navigate to it, grasp it with the right gripper, press the power switch with the left gripper, and then place the radio back on the table. A key challenge is combining mobile manipulation with bimanual control, since the robot must maintain a stable grasp while counteracting the contact forces induced by pressing the switch. In each trial, the radio is placed at a different position and orientation on the table within the robot's field of view and reachable workspace. The maximum episode horizon at inference time is 6000 steps.

\begin{figure*}[h]
    \centering
    \includegraphics[width=\linewidth]{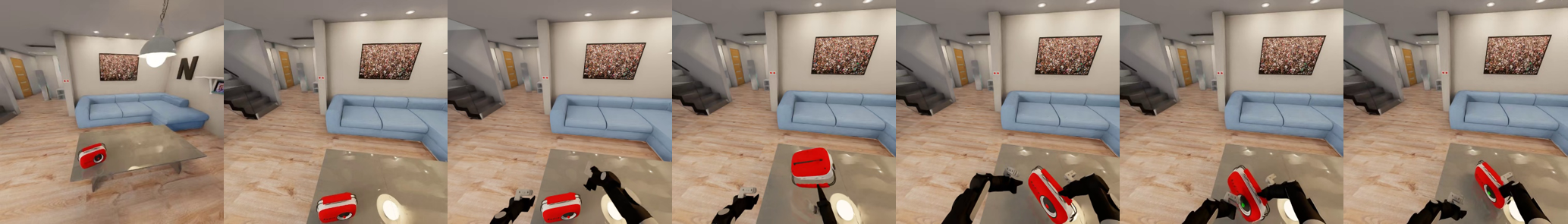}
    \caption{Turning on radio}
    \label{fig:Turning on radio}
\end{figure*}

\paragraph{Picking up trash}

\paragraph{Task Description}
In this task, the robot grasps a trash bin with its right gripper and carries it while navigating sequentially to three soda cans. At each stop, it picks up a can with the left gripper, drops it into the bin, and, after collecting all three cans, places the bin back on the floor. The main difficulty is the long-horizon mobile manipulation sequence, which requires asymmetric bimanual coordination: the right arm must stabilize the bin while the left arm performs repeated pick-and-place actions. In each trial, the trash bin and soda cans are placed at different locations, requiring active navigation. To further evaluate generalization, the robot's initial base pose is also randomized within a constrained range. The maximum episode horizon at inference time is 13,000 steps.

\begin{figure*}[h]
    \centering
    \includegraphics[width=\linewidth]{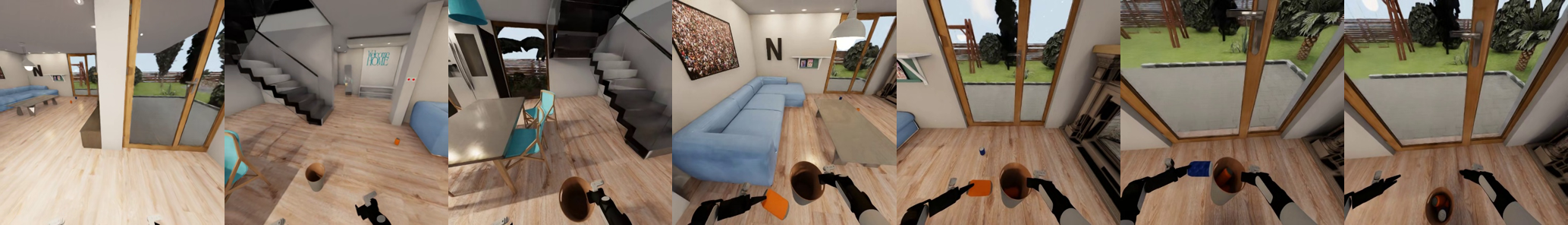}
    \caption{Picking up trash}
    \label{fig:Picking up trash}
\end{figure*}

\paragraph{Spraying fruit trees}

\paragraph{Task Description}
In this task, the robot must visually locate and grasp a watering device with its left gripper and then navigate sequentially to two target trees. At each tree, it uses the right gripper to switch the device to the ``on'' state, waters the tree, and switches the device back to ``off'' before moving to the next target. This setting is challenging because it combines mobile manipulation with multi-stage bimanual tool use: the left arm must stabilize the watering device and maintain its aim, while the right arm actuates the switch without disturbing the tool pose. In each trial, the watering device and the trees are randomly positioned within the workspace, requiring active visual search and planning. To further evaluate generalization, the robot's initial base pose is also varied. The maximum episode horizon at inference time is 10,000 steps.

\begin{figure*}[h]
    \centering
    \includegraphics[width=\linewidth]{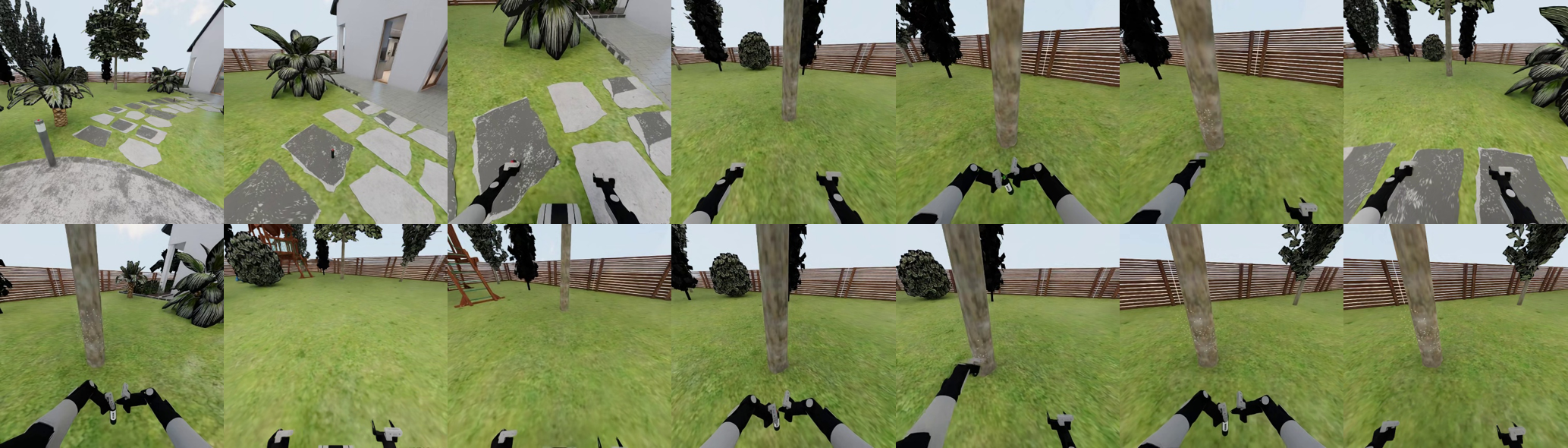}
    \caption{Spraying fruit trees}
    \label{fig:Spraying fruit trees}
\end{figure*}

\paragraph{Cook hot dogs}

\paragraph{Task Description}
In this task, the robot opens a refrigerator to retrieve two hot dogs, closes the door, places one hot dog on the tabletop, and puts the other into a microwave. It then transfers the staged hot dog into the microwave, closes the microwave door, and presses the start switch. A key challenge is long-horizon planning across multiple functional zones, requiring the robot to coordinate articulated-door manipulation with pick-and-place actions in the storage, staging, and cooking areas. In each trial, the initial object positions and the robot's base pose are varied to evaluate generalization. The maximum episode horizon at inference time is 15,000 steps.

\begin{figure*}[h]
    \centering
    \includegraphics[width=\linewidth]{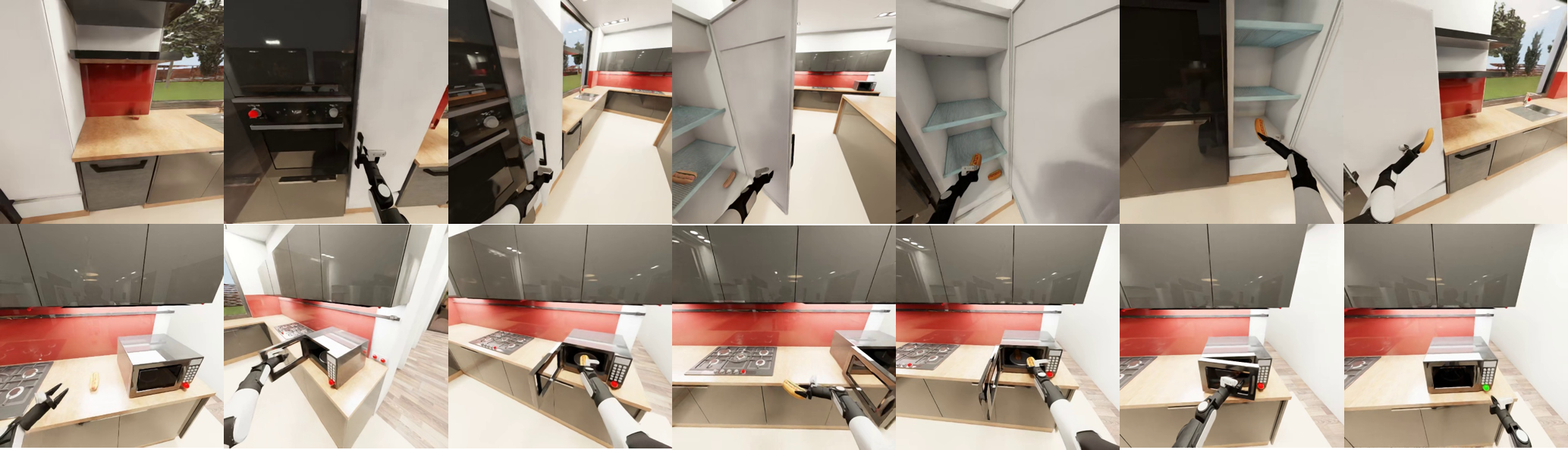}
    \caption{Cook hot dogs}
    \label{fig:Cook hot dogs}
\end{figure*}

\newpage

\paragraph{Wash a baseball cap}

\paragraph{Task Description}
In this task, the robot navigates to a washing machine, opens its door, and finds two baseball caps. It grasps one cap with each gripper, carries them to the washer, places both caps into the confined drum one after the other, closes the door, and presses the power switch. The main challenge is long-horizon mobile manipulation that combines articulated object control with simultaneous bimanual carrying. The robot must navigate with both grippers occupied and place the caps accurately into the confined drum. In each trial, the initial cap positions and the robot's starting base pose are varied to evaluate generalization. The maximum episode horizon at inference time is 10,000 steps.

\begin{figure*}[h]
    \centering
    \includegraphics[width=\linewidth]{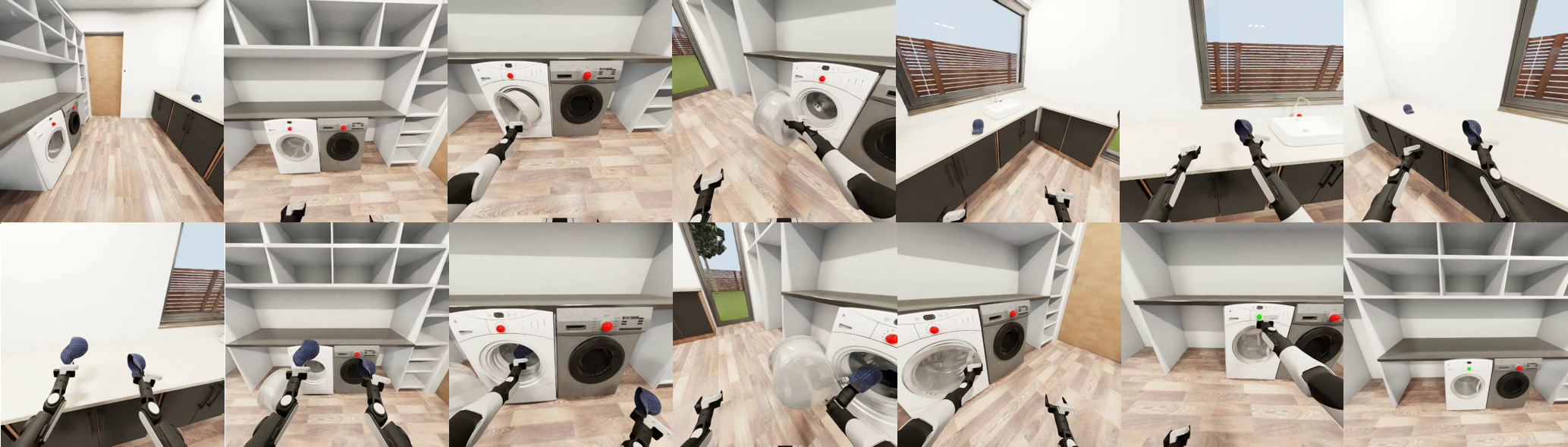}
    \caption{Wash a baseball cap}
    \label{fig:Wash a baseball cap}
\end{figure*}

\subsection{Real-World Robot Tasks}
\label{app:real_world_tasks}


\paragraph{Set-Paper-Roll}
\label{ssec:{pick_paper}}

\paragraph{Task Description} 
In this task, the robot picks up a toilet paper roll with its right gripper, transfers it to the left gripper, and places it onto a cylindrical holder. A key challenge is the multi-stage sequence itself, which couples grasp acquisition, coordinated bimanual handover, and accurate placement onto the holder. In each trial, the toilet paper roll and the cylindrical holder are randomly positioned on the tabletop within the robot's workspace and field of view. To evaluate generalization, the robot's initial base pose is also randomized within a constrained range.

\begin{figure*}[h]
    \centering
    \includegraphics[width=\linewidth]{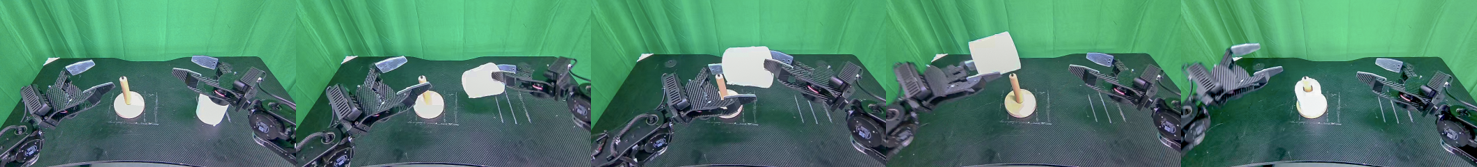}
    \caption{Set-Paper-Roll}
    \label{fig:Set-Paper-Roll}
\end{figure*}

\paragraph{Stage 1: Grasp paper roll}

\begin{itemize}
    \item \texttt{0.0 points}: The robot failed to grasp the toilet paper roll.
    \item \texttt{0.5 points}: The robot grasped the roll but dropped it afterward.
    \item \texttt{1.0 points}: The robot successfully grasped the roll with the right gripper.
\end{itemize}

\paragraph{Stage 2: Transport paper roll}

\begin{itemize}
    \item \texttt{0.0 points}: The robot failed to transfer the roll from the right gripper to the left gripper.
    \item \texttt{0.5 points}: The robot dropped the roll during the handover.
    \item \texttt{1.0 points}: The robot successfully transferred the roll from the right gripper to the left gripper.
\end{itemize}

\paragraph{Stage 3: Place paper roll onto holder}

\begin{itemize}
    \item \texttt{0.0 points}: The robot failed to place the roll onto the holder.
    \item \texttt{0.5 points}: The robot placed the roll on the holder but did not fully insert it.
    \item \texttt{1.0 points}: The robot successfully inserted the roll onto the holder.
\end{itemize}

\paragraph{Pick-Trash}
\label{ssec:{put_trash}}

\paragraph{Task Description}  
In this task, the robot opens the lid of a tabletop trash bin with its left gripper, repeatedly picks up crumpled paper balls from the tabletop with its right gripper, drops them into the bin, and finally closes the lid. The main challenge is twofold: the paper balls are randomly distributed and deformable, and the lid-opening motion depends on the bin geometry and contact conditions. In each trial, the trash bin, the paper balls, and the robot's initial base and torso poses are randomized within a constrained range.

\begin{figure*}[h]
    \centering
    \includegraphics[width=\linewidth]{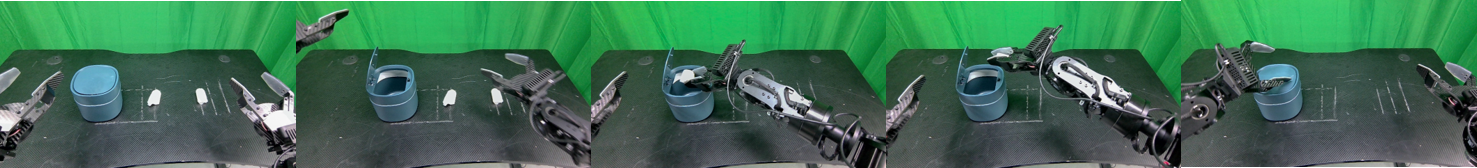}
    \caption{Pick-Trash }
    \label{fig:Pick-Trash (Real)}
\end{figure*}

\paragraph{Stage 1: Open trash bin lid}

\begin{itemize}
    \item \texttt{0.0 points}: The robot failed to open the trash bin lid.
    \item \texttt{0.5 points}: The robot opened the trash bin lid, but only after multiple attempts.
    \item \texttt{1.0 points}: The robot successfully opened the trash bin lid on the first attempt.
\end{itemize}

\paragraph{Stage 2: Place paper balls in trash bin}

\begin{itemize}
    \item \texttt{0.0 points}: The robot failed to place any paper ball into the trash bin.
    \item \texttt{0.5 points}: The robot placed one paper ball into the trash bin.
    \item \texttt{1.0 points}: The robot successfully placed both paper balls into the trash bin.
\end{itemize}

\paragraph{Stage 3: Close trash bin lid}

\begin{itemize}
    \item \texttt{0.0 points}: The robot failed to close the trash bin lid.
    \item \texttt{0.5 points}: The robot closed the trash bin lid, but only after multiple attempts.
    \item \texttt{1.0 points}: The robot successfully closed the trash bin lid on the first attempt.
\end{itemize}

\paragraph{Cube-Stack}
\label{ssec:{build_blocks}}
\paragraph{Task Description}
In this task, the robot sequentially grasps three colored blocks from the tabletop and stacks them on a plate while ensuring that no block falls off during placement. The robot selects the manipulation arm based on each block's position relative to the plate: blocks on the left are grasped with the left gripper, whereas blocks on the right are grasped with the right gripper. A key challenge is controlling the end-effector pose during placement and stacking so as to maintain stability and avoid slippage or toppling within the confined area of the plate. In each trial, the blocks and plate, as well as the robot's initial base and torso poses, are randomized within a constrained range.

\begin{figure*}[h]
    \centering
    \includegraphics[width=\linewidth]{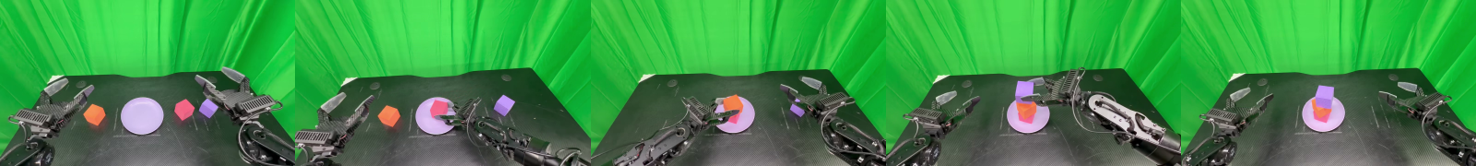}
    \caption{Cube-Stack}
    \label{fig:cube-stack}
\end{figure*}

\paragraph{Stage 1: Place red cube on plate}

\begin{itemize}
    \item \texttt{0.0 points}: The robot failed to grasp the red cube.
    \item \texttt{0.5 points}: The robot grasped the red cube but failed to place it on the plate.
    \item \texttt{1.0 points}: The robot successfully placed the red cube on the plate.
\end{itemize}

\paragraph{Stage 2: Stack orange cube on red cube}

\begin{itemize}
    \item \texttt{0.0 points}: The robot failed to grasp the orange cube.
    \item \texttt{0.5 points}: The robot grasped the orange cube but failed to stack it on the red cube.
    \item \texttt{1.0 points}: The robot successfully stacked the orange cube on the red cube.
\end{itemize}

\paragraph{Stage 3: Stack purple cube on orange cube}

\begin{itemize}
    \item \texttt{0.0 points}: The robot failed to grasp the purple cube.
    \item \texttt{0.5 points}: The robot grasped the purple cube but failed to stack it on the orange cube.
    \item \texttt{1.0 points}: The robot successfully stacked the purple cube on the orange cube.
\end{itemize}

\paragraph{Transfer-Food}
\label{ssec:{Transfer-Food}}
\paragraph{Task Description}
In this task, the robot first picks several peanuts from the table with its right gripper and places them into a small bowl on the right that already contains some peanuts. It then grasps the small bowl, pours all of its contents into a larger central bowl, and returns the small bowl to its original position. Finally, it uses the left gripper to pick up assorted bread items and place them into the large bowl. This task is challenging because the bread items are irregularly shaped and require different grasping strategies, while the overall sequence also involves small-object manipulation and a pouring action. To evaluate generalization, the quantities, types, and initial poses of the food items, as well as the robot's base position, are randomized within the workspace in every trial.

\begin{figure*}[h]
    \centering
    \includegraphics[width=\linewidth]{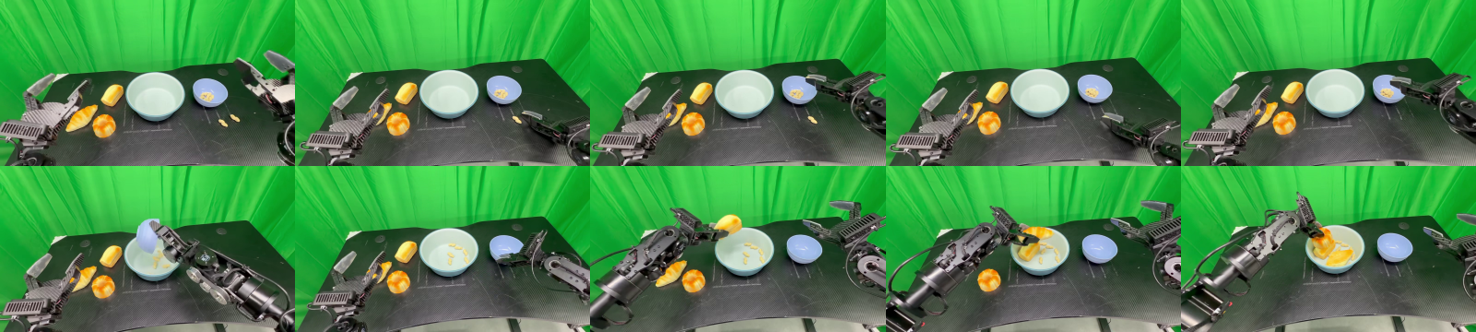}
    \caption{Transfer-Food}
    \label{fig:Transfer-Food}
\end{figure*}

\paragraph{Stage 1: Peanut Pick-and-Place}

\begin{itemize}
    \item \texttt{0.0 points}: The robot failed to pick and place any peanut.
    \item \texttt{0.5 points}: The robot placed the first peanut into the bowl but failed to place the second peanut.
    \item \texttt{1.0 points}: The robot successfully placed both peanuts into the bowl.
\end{itemize}

\paragraph{Stage 2: Peanut Pouring}

\begin{itemize}
    \item \texttt{0.0 points}: The robot failed to grasp the bowl.
    \item \texttt{0.5 points}: The robot grasped the bowl but failed to complete the pouring.
    \item \texttt{1.0 points}: The robot successfully poured all peanuts into the large bowl.
\end{itemize}

\paragraph{Stage 3: Bread Transfer}

\begin{itemize}
    \item \texttt{0.0 points}: The robot failed to transfer any bread into the large bowl.
    \item \texttt{0.5 points}: The robot deposited the first two pieces of bread but failed to transfer the third.
    \item \texttt{1.0 points}: The robot successfully transferred all three pieces of bread into the large bowl.
\end{itemize}

\paragraph{Wipe-Whiteboard}
\label{ssec:{Wipe-Whiteboard}}
\paragraph{Task Description}
In this task, the robot lifts a cover plate with its left gripper to reveal hidden text on a whiteboard, grasps the eraser with its right gripper, wipes the board clean, and finally returns the plate to its original position. The main difficulty is the long-horizon sequence that combines coordinated bimanual manipulation with surface interaction. Unlike standard pick-and-place actions, the erasing step requires force modulation and trajectory tracking to remove the writing effectively. To evaluate generalization, the initial poses of the plate, eraser, and text, as well as the robot's base position, are randomized within the workspace in every trial.

\begin{figure*}[h]
    \centering
    \includegraphics[width=\linewidth]{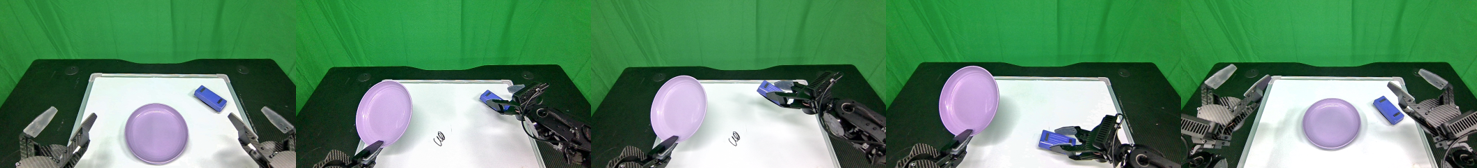}
    \caption{Wipe-Whiteboard}
    \label{fig:Wipe-Whiteboard}
\end{figure*}

\paragraph{Stage 1: Grasp the plate}

\begin{itemize}
    \item \texttt{0.0 points}: The robot failed to grasp the plate.
    \item \texttt{0.5 points}: The plate fell after being grasped.
    \item \texttt{1.0 points}: The robot successfully grasped the plate.
\end{itemize}

\paragraph{Stage 2: Grasp the eraser}

\begin{itemize}
    \item \texttt{0.0 points}: The robot failed to grasp the eraser.
    \item \texttt{0.5 points}: The robot grasped the eraser but with a pose unsuitable for erasing.
    \item \texttt{1.0 points}: The robot grasped the eraser in a pose suitable for erasing.
\end{itemize}

\paragraph{Stage 3: Wipe markings}
\begin{itemize}
    \item \texttt{0.0 points}: The robot did not execute the erasing motion.
    \item \texttt{0.5 points}: The robot executed the erasing motion but failed to remove the writing.
    \item \texttt{1.0 points}: The robot successfully removed the writing from the whiteboard.
\end{itemize}

\paragraph{Stage 4: Return the plate}

\begin{itemize}
    \item \texttt{0.0 points}: The plate was dropped during transport, or no placement was attempted.
    \item \texttt{0.5 points}: The robot attempted to place the plate but did not return it to its original position.
    \item \texttt{1.0 points}: The robot successfully placed the plate at its original position on the whiteboard.
\end{itemize}

\subsection{Dataset description}
\label{dataset_discription}
For each task in the simulation and real-world datasets, we construct a mixed-quality dataset consisting of 200 expert demonstrations and 100 autonomous rollouts.
For the simulation tasks, we use official human demonstrations from the BEHAVIOR-1K Challenge, while the real-world trajectories are collected via VR teleoperation.
To increase behavioral diversity and capture failure modes, we additionally generate autonomous rollouts using a pretrained $\pi_{0.5}$ checkpoint~\cite{pi0.5}. These rollouts span a broad range of outcomes, including complete successes, partial completions, and failures.
Figure~\ref{fig:data_dist} shows the frame distribution across the 10 tasks in the simulation and real-world datasets, together with the numbers of successful and failed trajectories collected from autonomous rollouts. In addition, all trajectories are annotated with sparse stage-wise labels, from which we derive the binary targets $y$ defined in Section~\ref{sec:method}. For IDQL, we assign an additional label of $-1$ to the final step of each failed rollout episode.

\begin{figure}[t]
    \centering
    \includegraphics[width=\linewidth]{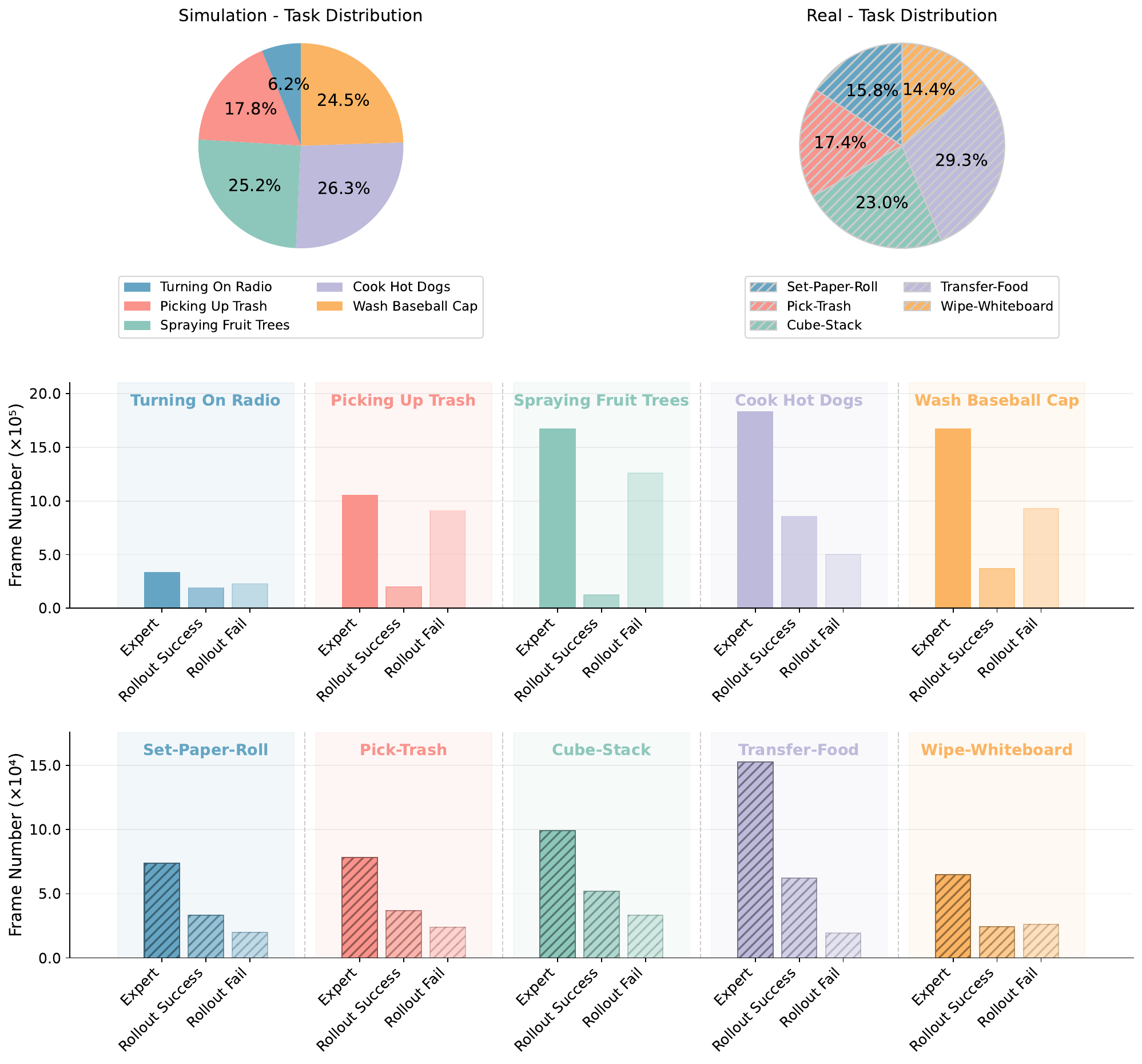}
    \caption{\textbf{Dataset composition.} Frame-level distribution of expert and autonomous rollout data across the five simulation and five real-world tasks, including successful and failed trajectories.}
    \label{fig:data_dist}
\end{figure}



\clearpage
\suppressfloats[t]
\section{Qualitative Visualization}
\label{potential_example}

To further verify the learned potential field $s$, we visualize the temporal evolution of the predicted success potential $s_t$ across five simulated tasks and five real-world tasks. For each task, both successful and failed trajectories are presented, with representative frames aligned to key stages along the potential curve.

As shown in Figs.~\ref{fig:cook_frames}--\ref{fig:wash_frames}, in the simulated tasks, successful trajectories generally exhibit clear local peaks around meaningful subgoal completions, such as opening articulated objects, grasping target objects, transferring items, or completing task-specific interactions. In contrast, failed trajectories typically show transient increases near partial progress, followed by sharp drops or persistently low potential after incorrect actions.

Similar patterns can be observed in the real-world tasks shown in Figs.~\ref{fig:real_build_blocks_frames}--\ref{fig:real_pick_trash_frames}. Successful executions produce stage-wise potential increases that align with semantically important manipulation steps, whereas failed executions lead to low or decreasing potential when the robot deviates from the correct task sequence. 

Overall, these qualitative results indicate that $s$ captures meaningful success-related semantics across both simulated and real-world manipulation tasks.

\FloatBarrier


\begin{figure}[H]
    \centering

    \begin{subfigure}{\linewidth}
        \centering
        \includegraphics[width=\linewidth]{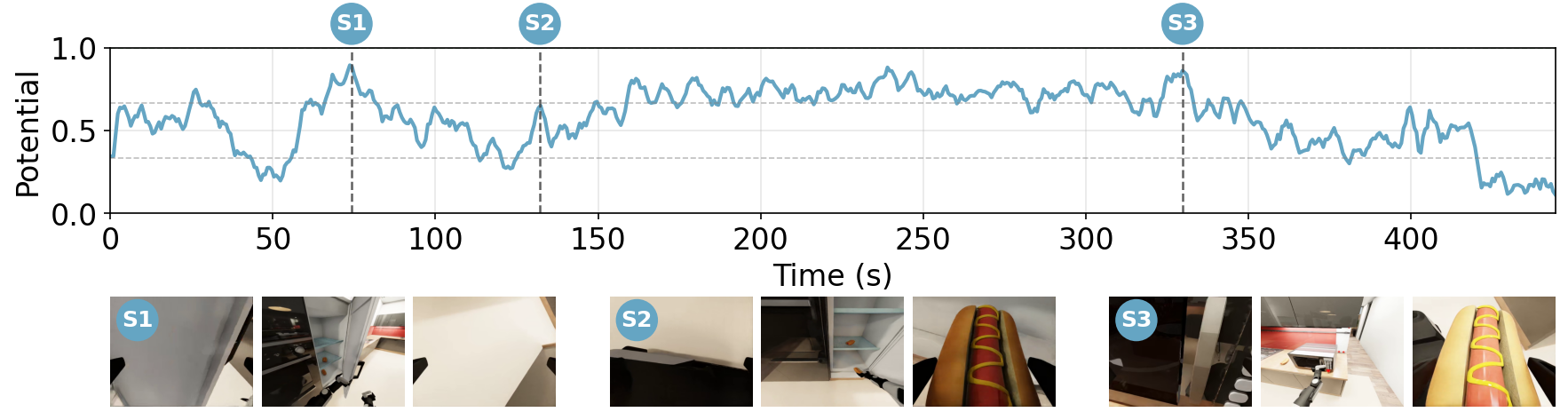}
        \caption{Success: S1 opens the refrigerator; S2 grasps the first hot dog; S3 opens the microwave, matching local potential peaks.}
        \label{fig:cook_success_frames}
    \end{subfigure}

    \vspace{0.05em}

    \begin{subfigure}{\linewidth}
        \centering
        \includegraphics[width=\linewidth]{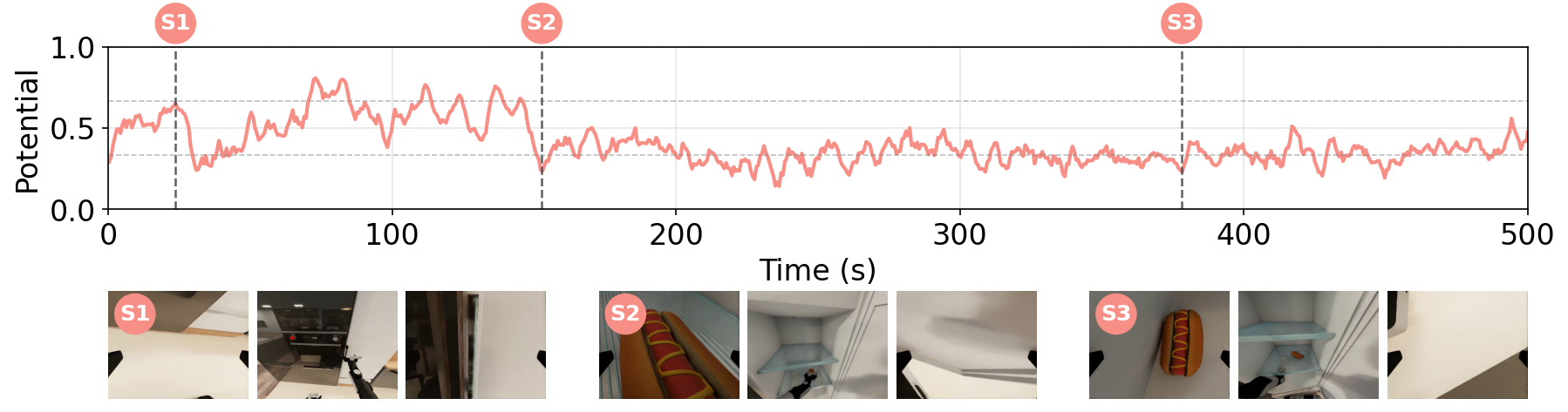}
        \caption{Failure: S1 opens the refrigerator; at S2, an incorrect grasping motion causes a sharp potential drop; around S3, repeated wrong attempts keep the potential low.}
        \label{fig:cook_failure_frames}
    \end{subfigure}

    \caption{Success and failure trajectory frames for the simulated \textit{Cook Hot Dogs} task.}
    \label{fig:cook_frames}
\end{figure}

\begin{figure*}[!t]
    \centering

    \begin{subfigure}{\linewidth}
        \centering
        \includegraphics[width=\linewidth]{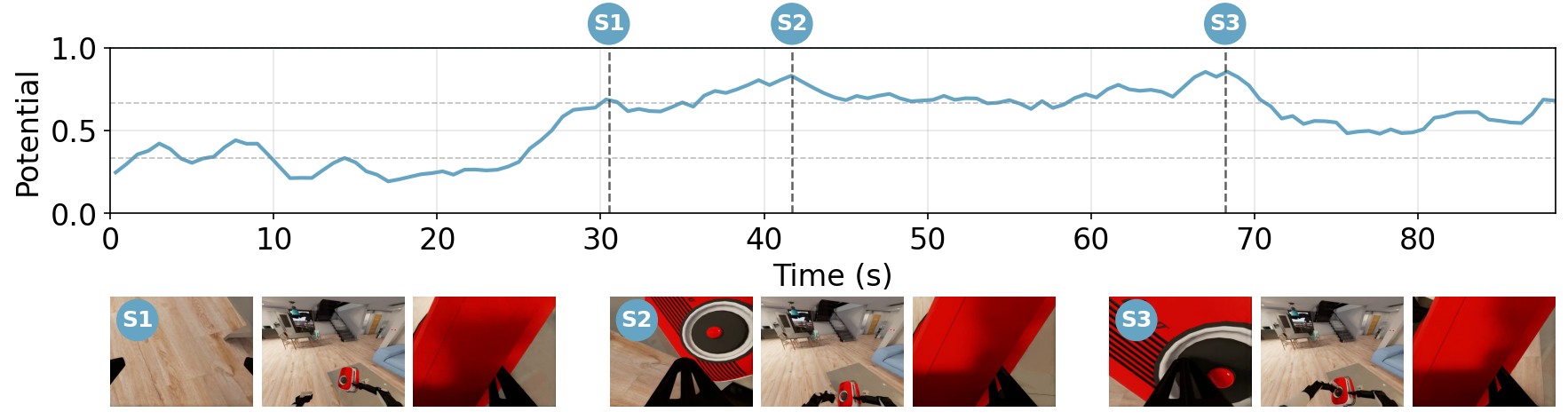}
        \caption{Success: S1 grasps the radio; S2 aligns the left hand with the power button; S3 turns on the radio, with local potential peaks.}
        \label{fig:radio_success_frames}
    \end{subfigure}

    \vspace{0.05em}

    \begin{subfigure}{\linewidth}
        \centering
        \includegraphics[width=\linewidth]{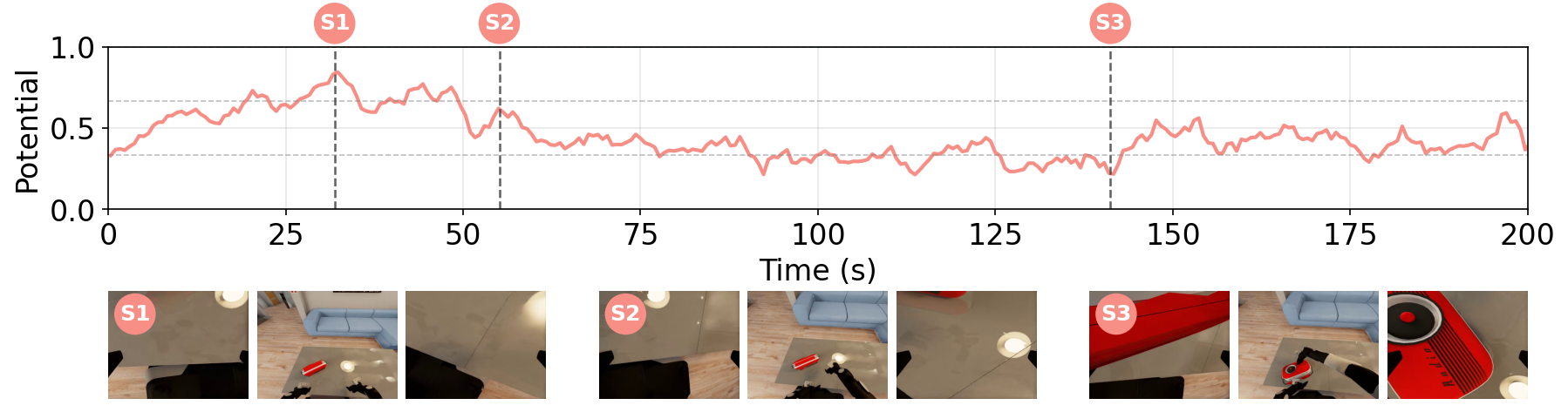}
        \caption{Failure: S1 reaches the radio; at S2, the robot knocks it over; around S3, repeated ineffective attempts keep the potential low.}
        \label{fig:radio_failure_frames}
    \end{subfigure}

    \caption{Success and failure trajectory frames for the simulated \textit{Turning on radio} task.}
    \label{fig:radio_frames}
\end{figure*}

\begin{figure*}[t]
    \centering

    \begin{subfigure}{\linewidth}
        \centering
        \includegraphics[width=\linewidth]{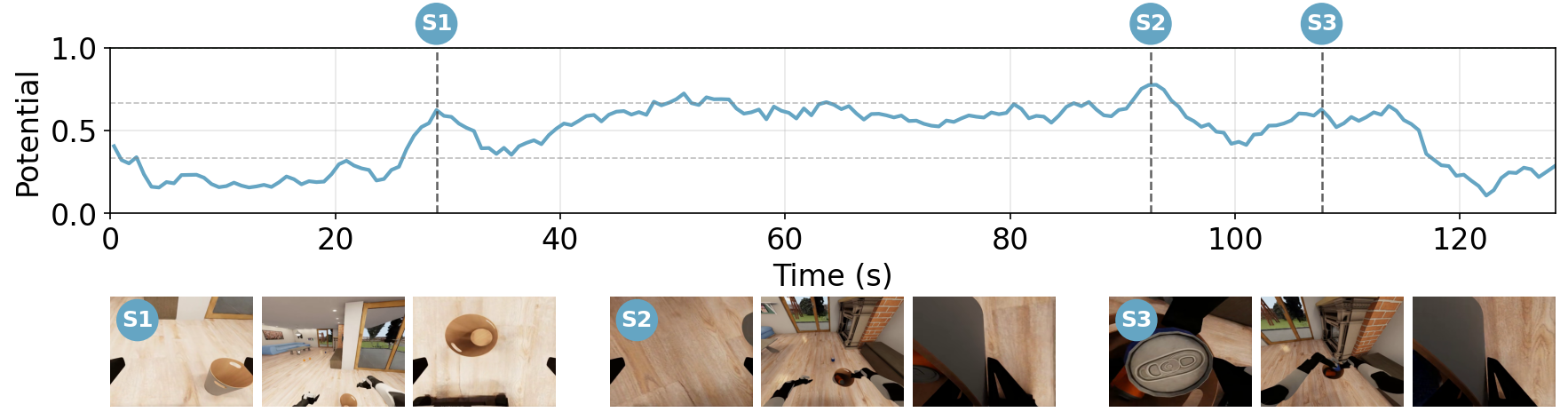}
        \caption{Success: S1 approaches the bin; S2--S3 grasp the two cans, matching local potential peaks.}
        \label{fig:trash_success_frames}
    \end{subfigure}

    \vspace{0.05em}

    \begin{subfigure}{\linewidth}
        \centering
        \includegraphics[width=\linewidth]{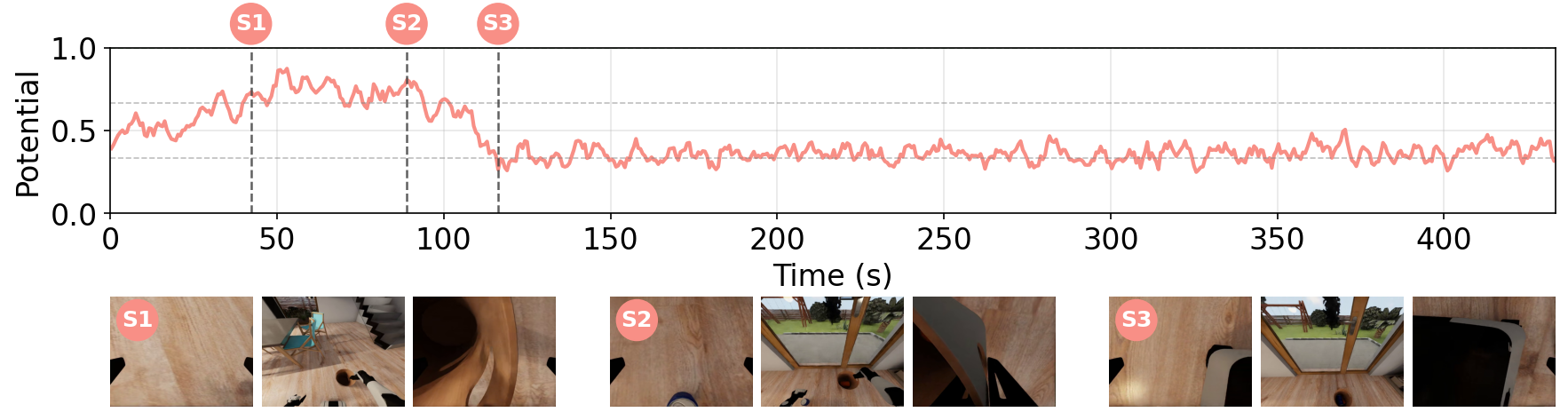}
        \caption{Failure: S1 approaches the bin; S2 approaches the first can; at S3, the robot wrongly places the bin on the floor and the potential stays low.}
        \label{fig:trash_failure_frames}
    \end{subfigure}

    \caption{Success and failure trajectory frames for the simulated \textit{Picking up trash} task.}
    \label{fig:trash_frames}
\end{figure*}

\begin{figure*}[t]
    \centering

    \begin{subfigure}{\linewidth}
        \centering
        \includegraphics[width=\linewidth]{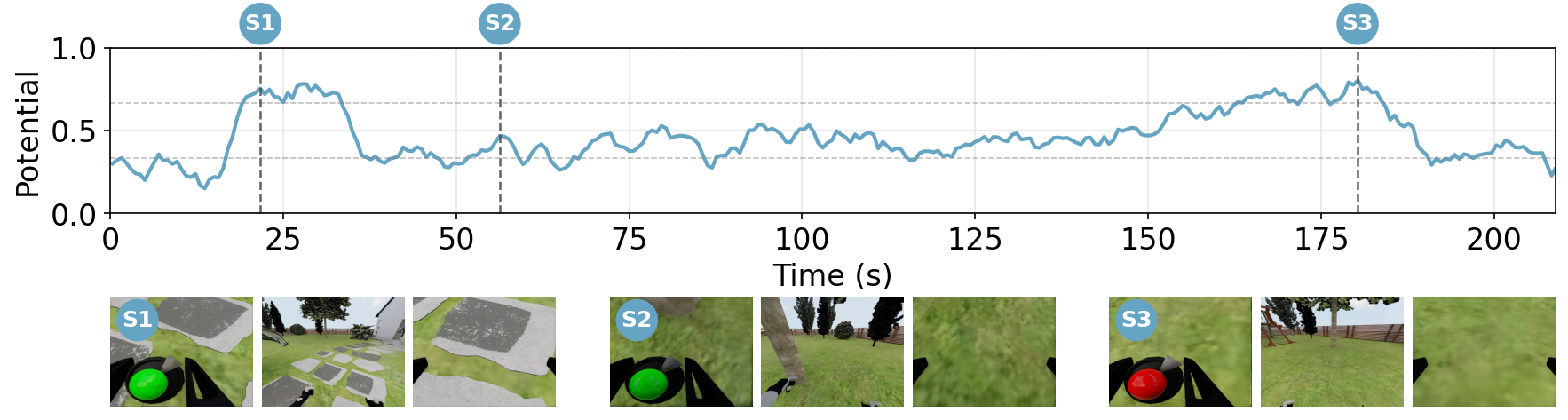}
        \caption{Success: S1 grasps the watering can; S2 waters the first tree; S3 approaches the second tree, with local potential peaks.}
        \label{fig:trees_success_frames}
    \end{subfigure}

    \vspace{0.05em}

    \begin{subfigure}{\linewidth}
        \centering
        \includegraphics[width=\linewidth]{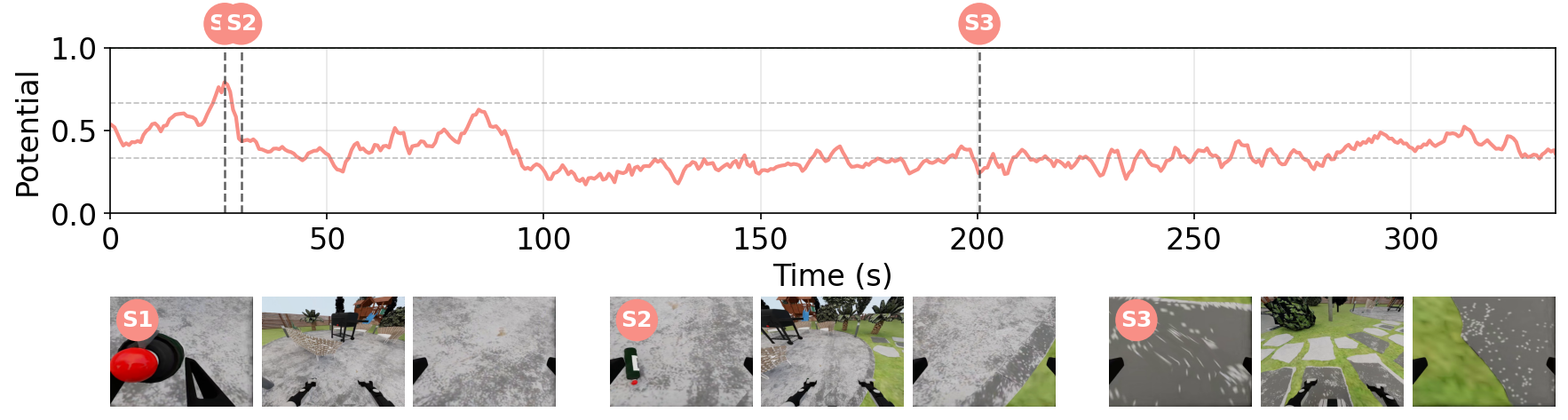}
        \caption{Failure: S1 grasps the watering can; at S2, the can is dropped; around S3, incorrect attempts keep the potential low.}
        \label{fig:trees_failure_frames}
    \end{subfigure}

    \caption{Success and failure trajectory frames for the simulated \textit{Spraying fruit trees} task.}
    \label{fig:trees_frames}
\end{figure*}

\begin{figure*}[t]
    \centering

    \begin{subfigure}{\linewidth}
        \centering
        \includegraphics[width=\linewidth]{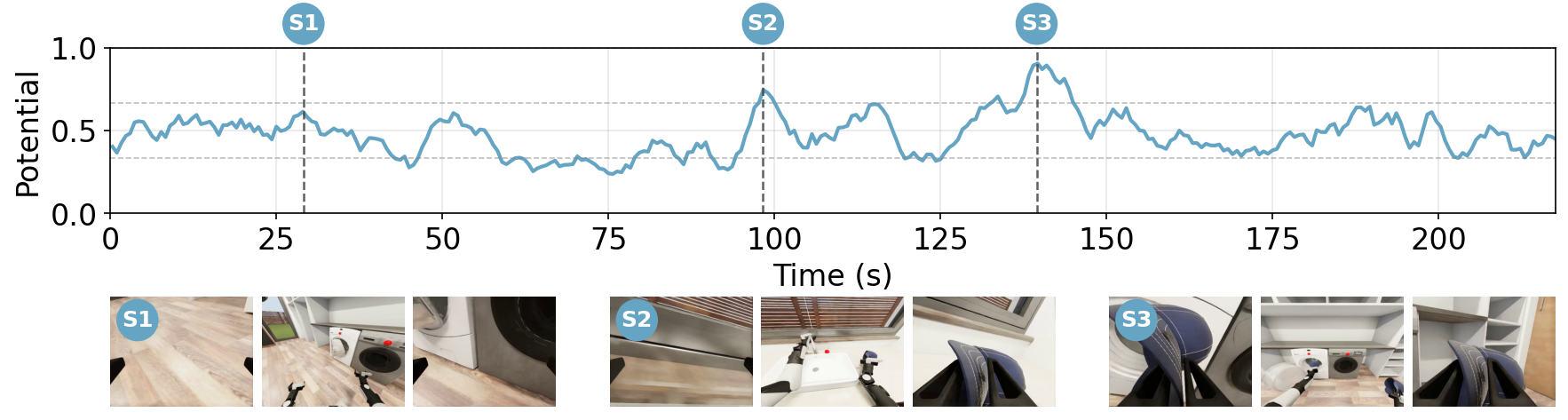}
        \caption{Success: S1 approaches the washing machine; S2 grasps the first cap; S3 carries two caps toward the machine, with local potential peaks.}
        \label{fig:wash_success_frames}
    \end{subfigure}

    \vspace{0.05em}

    \begin{subfigure}{\linewidth}
        \centering
        \includegraphics[width=\linewidth]{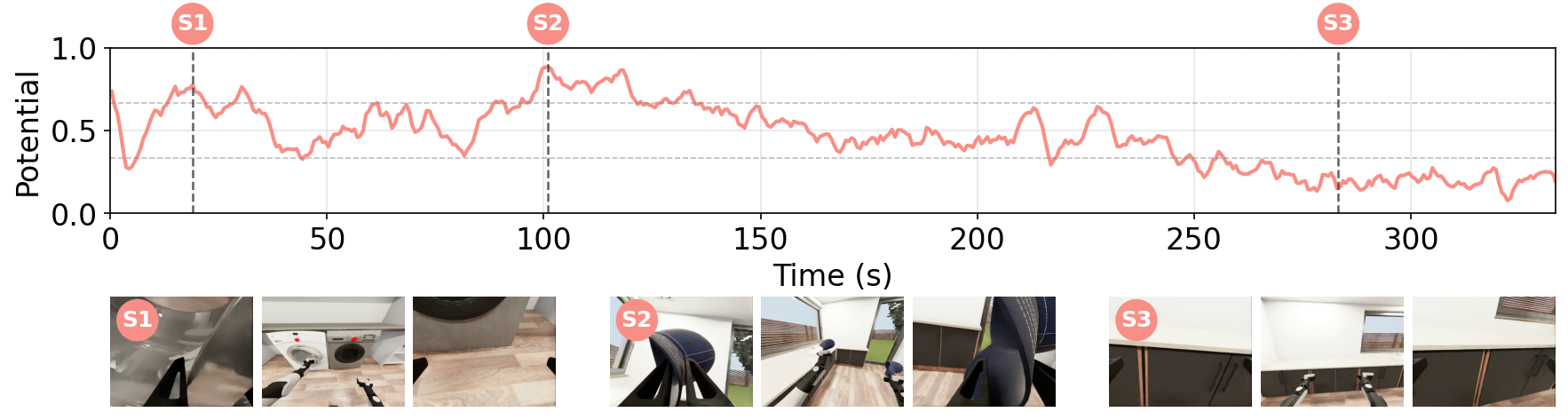}
        \caption{Failure: S1 opens the door; S2 grasps one cap; around S3, the robot fails to open the machine and searches incorrectly, keeping the potential low.}
        \label{fig:wash_failure_frames}
    \end{subfigure}

    \caption{Success and failure trajectory frames for the simulated \textit{Wash a baseball cap} task.}
    \label{fig:wash_frames}
\end{figure*}


\begin{figure*}[t]
    \centering

    \begin{subfigure}{\linewidth}
        \centering
        \includegraphics[width=\linewidth]{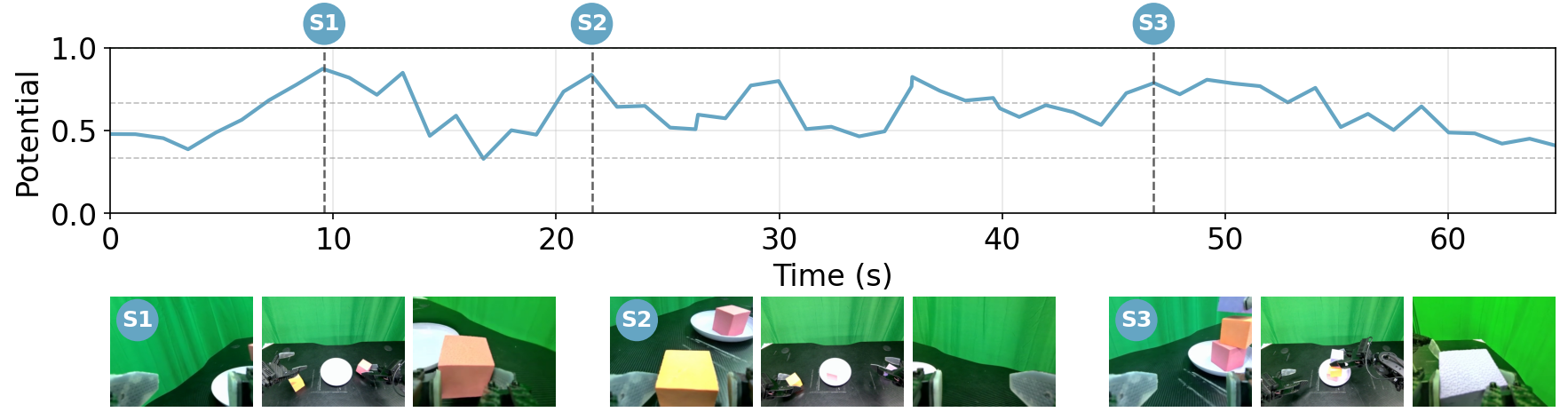}
        \caption{Success: S1 grasps the first block; S2 grasps the second block; S3 stacks the third block, with local potential peaks.}
        \label{fig:real_build_blocks_success_frames}
    \end{subfigure}

    \vspace{0.05em}

    \begin{subfigure}{\linewidth}
        \centering
        \includegraphics[width=\linewidth]{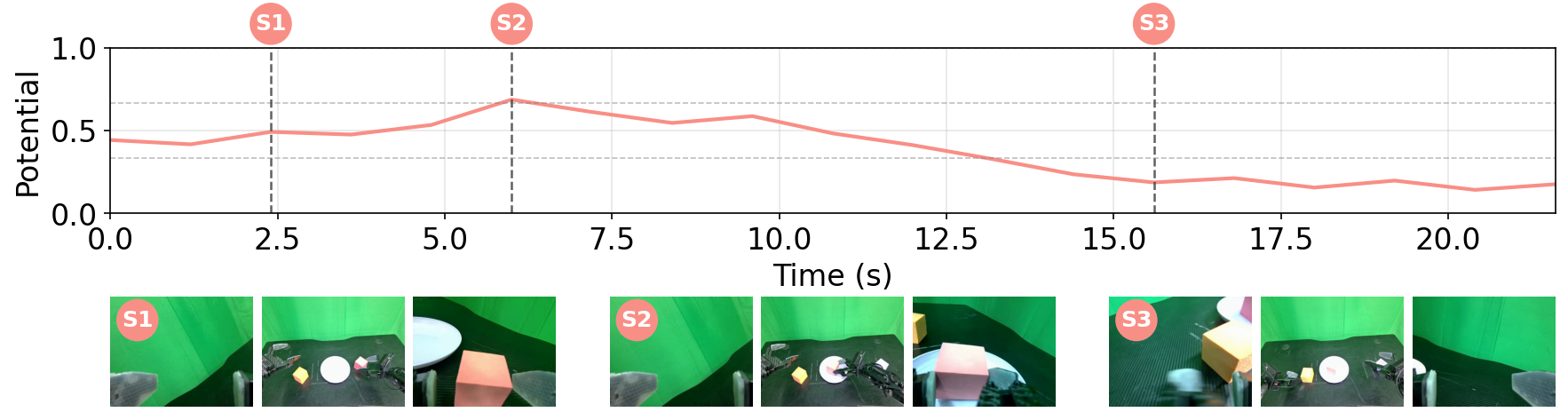}
        \caption{Failure: S1 grasps a block; S2 places it on the plate; around S3, failed grasp attempts keep the potential low.}
        \label{fig:real_build_blocks_failure_frames}
    \end{subfigure}

    \caption{Success and failure trajectory frames for the real-world \textit{Cube-Stack} task.}
    \label{fig:real_build_blocks_frames}
\end{figure*}

\begin{figure*}[t]
    \centering

    \begin{subfigure}{\linewidth}
        \centering
        \includegraphics[width=\linewidth]{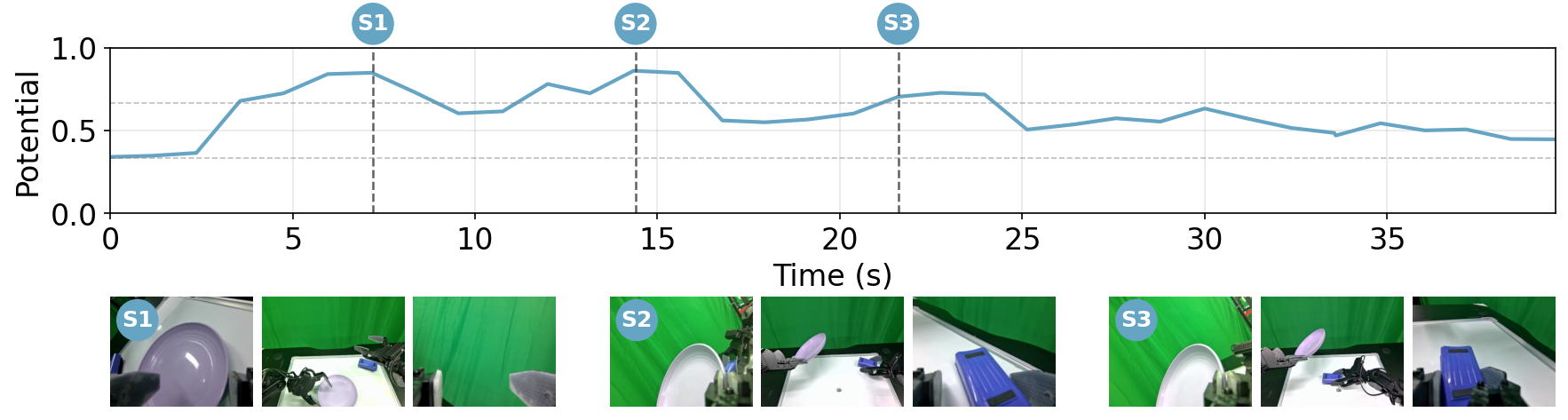}
        \caption{Success: S1 prepares to grasp the plate; S2 grasps the eraser and moves the bowl aside; S3 wipes the board, with local potential peaks.}
        \label{fig:real_clean_board_success_frames}
    \end{subfigure}

    \vspace{0.05em}

    \begin{subfigure}{\linewidth}
        \centering
        \includegraphics[width=\linewidth]{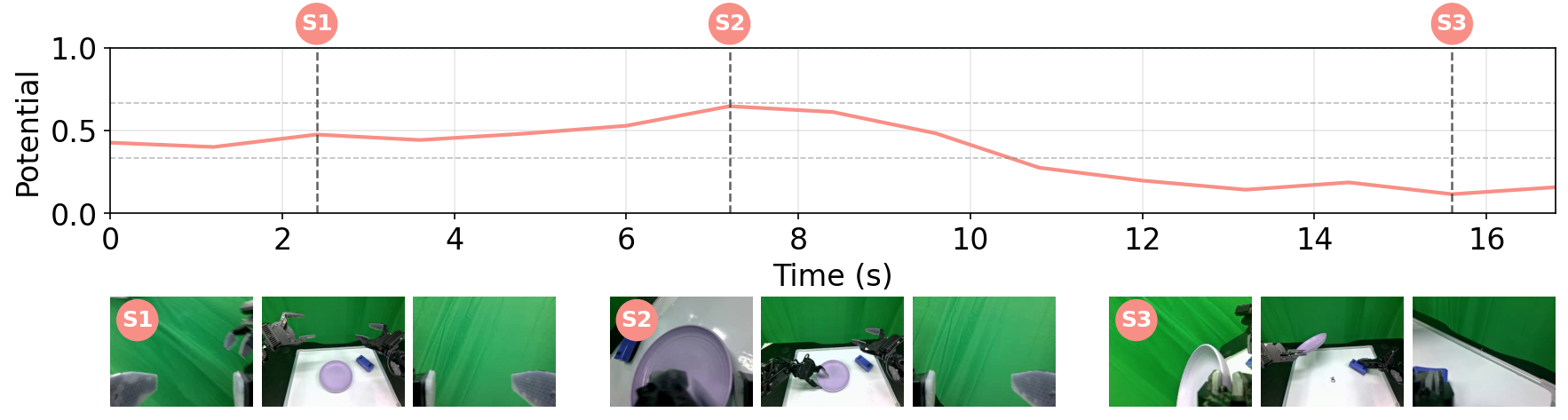}
        \caption{Failure: S1 orients toward the objects; S2 grasps the plate; around S3, the robot wipes the wrong area and the potential stays low.}
        \label{fig:real_clean_board_failure_frames}
    \end{subfigure}

    \caption{Success and failure trajectory frames for the real-world \textit{Wipe-Whiteboard} task.}
    \label{fig:real_clean_board_frames}
\end{figure*}

\begin{figure*}[t]
    \centering

    \begin{subfigure}{\linewidth}
        \centering
        \includegraphics[width=\linewidth]{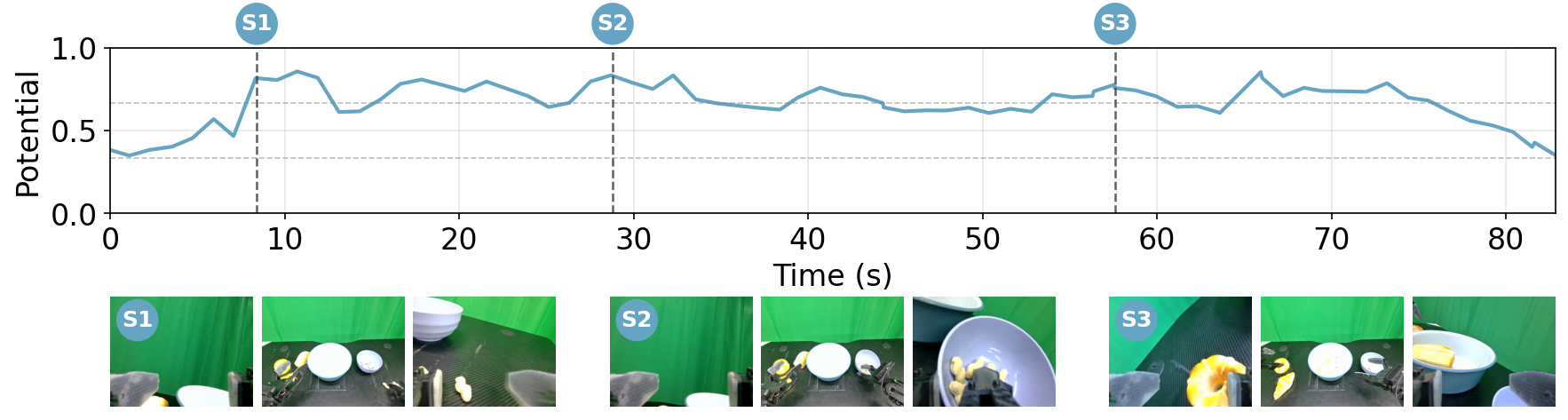}
        \caption{Success: S1 grasps the food item; S2 grasps the peanut bowl; S3 prepares to grasp the bread, with local potential peaks.}
        \label{fig:real_pick_food_success_frames}
    \end{subfigure}

    \vspace{0.05em}

    \begin{subfigure}{\linewidth}
        \centering
        \includegraphics[width=\linewidth]{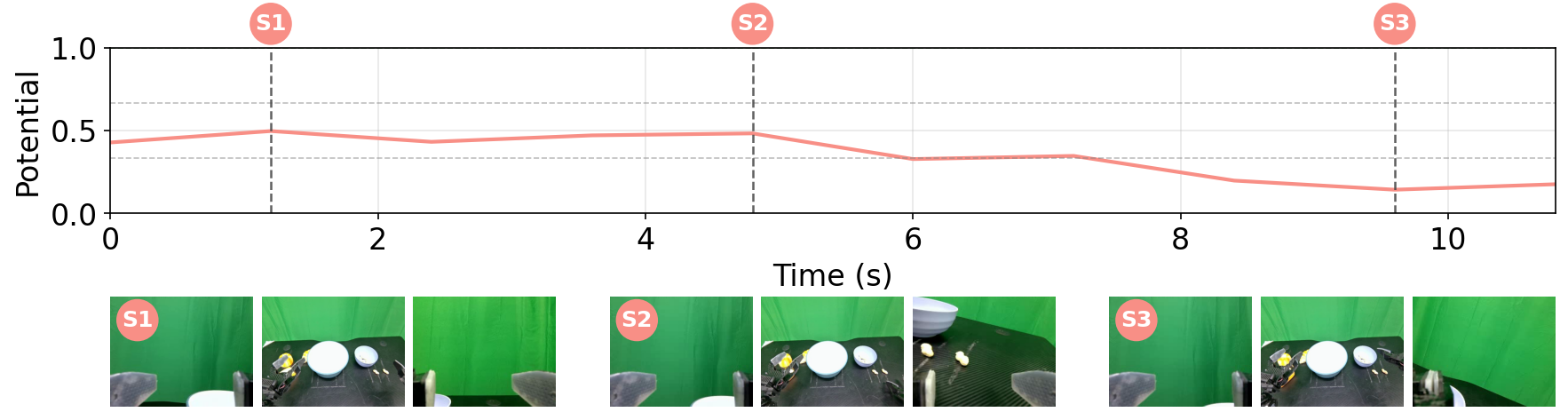}
        \caption{Failure: S1 orients toward the targets; around S2--S3, repeated failed peanut-grasp attempts drive the potential low.}
        \label{fig:real_pick_food_failure_frames}
    \end{subfigure}

    \caption{Success and failure trajectory frames for the real-world \textit{Transfer-Food} task.}
    \label{fig:real_pick_food_frames}
\end{figure*}

\begin{figure*}[t]
    \centering

    \begin{subfigure}{\linewidth}
        \centering
        \includegraphics[width=\linewidth]{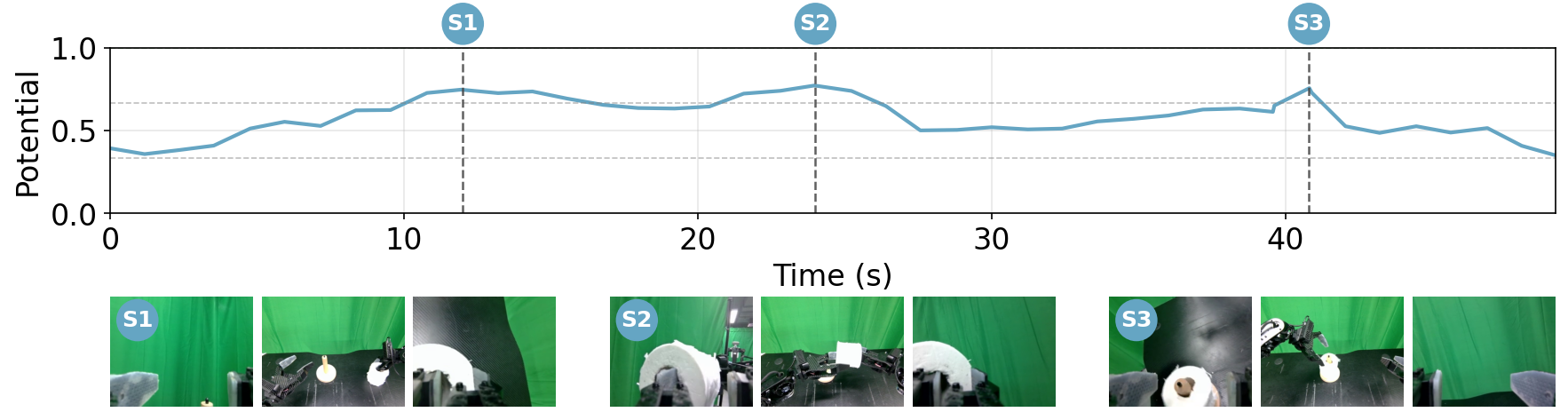}
        \caption{Success: S1 grasps the paper roll; S2 transfers it between hands; S3 places it onto the holder, with local potential peaks.}
        \label{fig:real_pick_paper_success_frames}
    \end{subfigure}

    \vspace{0.05em}

    \begin{subfigure}{\linewidth}
        \centering
        \includegraphics[width=\linewidth]{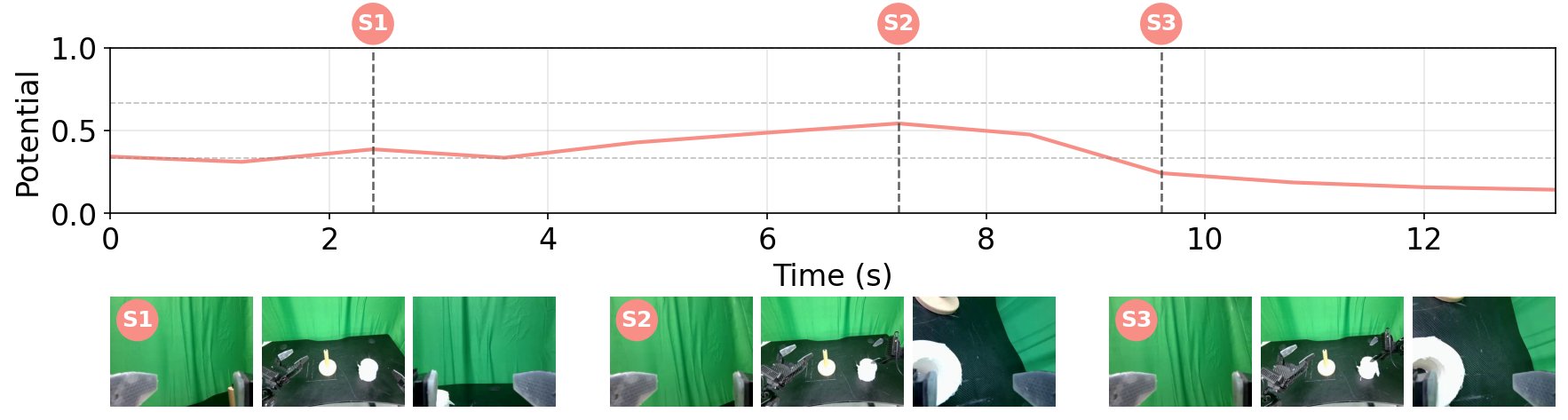}
        \caption{Failure: S1 orients toward the target; S2 moves close to the roll; around S3, the hand moves away and the potential drops.}
        \label{fig:real_pick_paper_failure_frames}
    \end{subfigure}

    \caption{Success and failure trajectory frames for the real-world \textit{Set-Paper-Roll} task.}
    \label{fig:real_pick_paper_frames}
\end{figure*}


\begin{figure*}[t]
    \centering

    \begin{subfigure}{\linewidth}
        \centering
        \includegraphics[width=\linewidth]{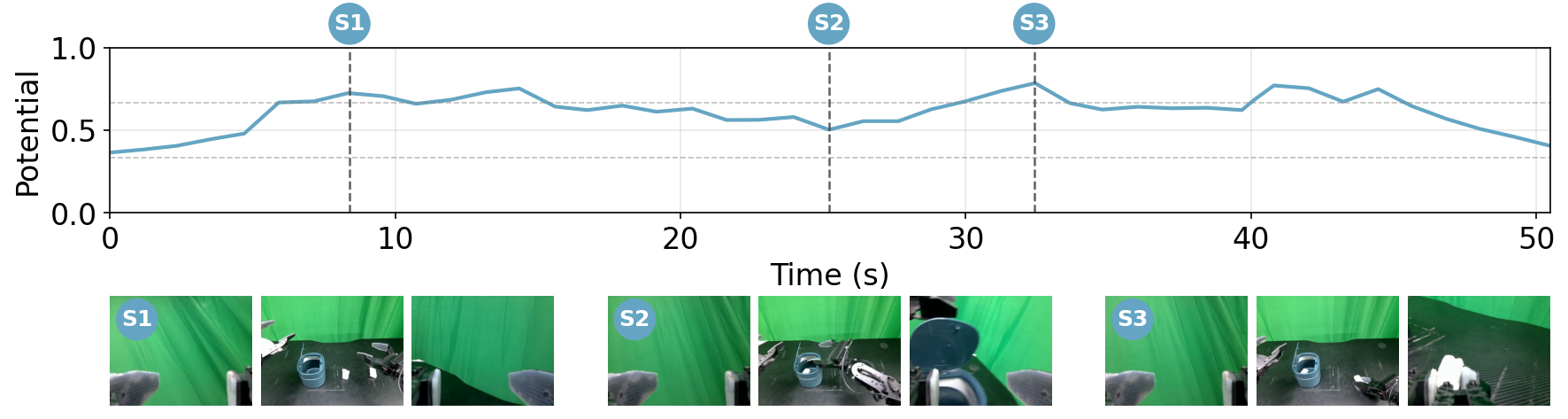}
        \caption{Success: S1 approaches the target; S2 adjusts after opening the bin; S3 grasps the trash, with a final local potential peak.}
        \label{fig:real_pick_trash_success_frames}
    \end{subfigure}

    \vspace{0.05em}

    \begin{subfigure}{\linewidth}
        \centering
        \includegraphics[width=\linewidth]{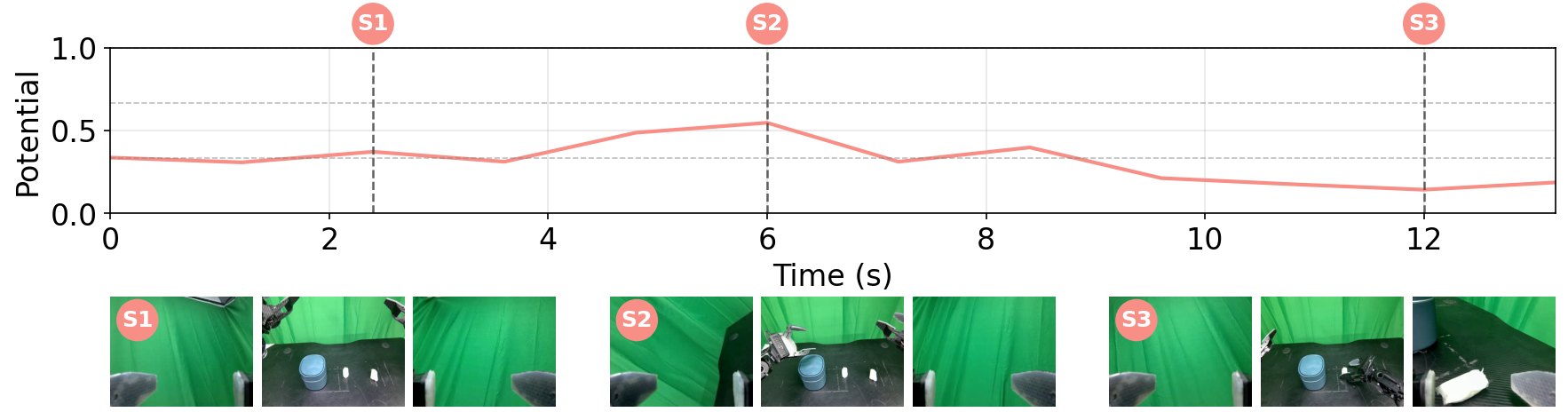}
        \caption{Failure: S1 orients toward the object; S2 moves close to the target; around S3, incorrect actions make the potential drop.}
        \label{fig:real_pick_trash_failure_frames}
    \end{subfigure}

    \caption{Success and failure trajectory frames for the real-world \textit{Pick-Trash} task.}
    \label{fig:real_pick_trash_frames}
\end{figure*}

\end{document}